\newtcolorbox{AIbox}[2][]{aibox,title=#2,#1}
\definecolor{primalcolor}{HTML}{A60000}
\definecolor{contrarycolor}{HTML}{00A6A6}
\definecolor{darkcontrarycolor}{HTML}{004C4C}
\definecolor{lightblue}{HTML}{2970CC}
\definecolor{lightpurple}{HTML}{673147}
\definecolor{ForestGreen}{HTML}{FF5733}
\definecolor{myred}{HTML}{AA4A44}
\definecolor{hyppurple}{HTML}{800080}
\newcommand{\linkcolor}{darkcontrarycolor}
\newcommand{\urlcolor}{darkcontrarycolor}
\newcommand{\citecolor}{darkcontrarycolor}
\newcommand{\thmcolordark}{red!30!black}
\newcommand{\takeawaycolor}{red!40!black}
\newcommand{\takeawaybold}[1]{{\color{\takeawaycolor} {\textbf{#1}}}}
    \DeclareRobustCommand{\qed}{
        \usepackage{thmtools}
          \ifmmode \mathqed
          \else
            \leavevmode\unskip\penalty9999 \hbox{}\nobreak\hfill
            \quad\hbox{\qedsymbol}%
          \fi
    }
\DeclareMathAlphabet{\mathbfsf}{\encodingdefault}{\sfdefault}{bx}{n}
\numberwithin{equation}{section}
\Crefname{equation}{Eq.}{Eqs.}
\Crefname{assumption}{Assumption}{Assumptions}
\Crefname{condition}{Condition}{Conditions}
\Crefname{claim}{Claim}{Claims}
\Crefname{property}{Property}{Properties}
\Crefname{construction}{Construction}{Constructions}
\declaretheoremstyle[
    headformat=\normalfont\textcolor{\thmcolordark}{\bfseries\NAME\,\NUMBER}\NOTE,%
    notefont={\normalfont\textcolor{\thmcolordark}{\bfseries}}, 
    notebraces={}{},
    bodyfont=\normalfont\itshape,
    spaceabove = 6pt,
    spacebelow = 6pt,
    ]{coloredthmversion}
\declaretheoremstyle[
    headformat=\normalfont\textcolor{\thmcolordark}{\bfseries\NAME\,\NUMBER}\NOTE,%
    bodyfont=\normalfont\itshape,
    spaceabove = 6pt,
    spacebelow = 6pt,
    ]{coloredthm}
\declaretheoremstyle[
    headformat=\normalfont\textcolor{\thmcolordark}{\bfseries\NAME\,\NUMBER}\NOTE,%
    bodyfont=\normalfont,
    spaceabove = 6pt,
    spacebelow = 6pt,
    ]{coloreddef}
    \theoremstyle{coloredthmversion}
  \theoremstyle{coloredthm}
  \newtheorem{theorem}{Theorem}
  \newtheorem{lemma}{Lemma}[section]
  \newtheorem{corollary}{Corollary}[section]
  \newtheorem{proposition}[lemma]{Proposition}
\newtheorem*{thminformal*}{Informal Theorem}
    \theoremstyle{coloreddef}
    \newtheorem{definition}{Definition}[section]
    \newtheorem{component}{Component}
    \newtheorem{remark}{Remark}[section]
    \newtheorem{property}{Property}[section]
    \newtheorem{contribution}{Contribution}
\newtheorem{assumption}{Assumption}[section]
\newtheorem{condition}{Condition}[section]
\newcommand{\neutralize}[1]{\expandafter\let\csname c@#1\endcsname\count@}
    \newtheoremstyle{named}{}{}{\itshape}{}{\bfseries}{}{.5em}{\Cref{#3} {\normalfont (informal)} }{}
    \theoremstyle{named}
    \theoremstyle{plain}
\newtheorem*{theorem*}{Theorem}
\newtheorem*{lemma*}{Lemma}
\newtheorem*{corollary*}{Corollary}
\newtheorem*{proposition*}{Proposition}
\newtheorem*{claim*}{Claim}
\newtheorem*{fact*}{Fact}
\newtheorem*{observation*}{Observation}
\newtheorem*{definition*}{Definition}
\newtheorem*{remark*}{Remark}
\newtheorem*{example*}{Example}
\def\ddefloop#1{\ifx\ddefloop#1\else\ddef{#1}\expandafter\ddefloop\fi}
\def\ddef#1{\expandafter\def\csname bb#1\endcsname{\ensuremath{\mathbb{#1}}}}
\def\ddefloop#1{\ifx\ddefloop#1\else\ddef{#1}\expandafter\ddefloop\fi}
\def\ddef#1{\expandafter\def\csname frak#1\endcsname{\ensuremath{\mathfrak{#1}}}}
\def\ddefloop#1{\ifx\ddefloop#1\else\ddef{#1}\expandafter\ddefloop\fi}
\def\ddef#1{\expandafter\def\csname fr#1\endcsname{\ensuremath{\mathfrak{#1}}}}
\def\ddefloop#1{\ifx\ddefloop#1\else\ddef{#1}\expandafter\ddefloop\fi}
\def\ddef#1{\expandafter\def\csname eul#1\endcsname{\ensuremath{\EuScript{#1}}}}
\def\ddefloop#1{\ifx\ddefloop#1\else\ddef{#1}\expandafter\ddefloop\fi}
\def\ddef#1{\expandafter\def\csname scr#1\endcsname{\ensuremath{\mathscr{#1}}}}
\def\ddefloop#1{\ifx\ddefloop#1\else\ddef{#1}\expandafter\ddefloop\fi}
\def\ddef#1{\expandafter\def\csname b#1\endcsname{\ensuremath{\mathbf{#1}}}}
\def\ddefloop#1{\ifx\ddefloop#1\else\ddef{#1}\expandafter\ddefloop\fi}
\def\ddef#1{\expandafter\def\csname bhat#1\endcsname{\ensuremath{\hat{\mathbf{#1}}}}}
\def\ddefloop#1{\ifx\ddefloop#1\else\ddef{#1}\expandafter\ddefloop\fi}
\def\ddef#1{\expandafter\def\csname btil#1\endcsname{\ensuremath{\tilde{\mathbf{#1}}}}}
\def\ddefloop#1{\ifx\ddefloop#1\else\ddef{#1}\expandafter\ddefloop\fi}
\def\ddef#1{\expandafter\def\csname bst#1\endcsname{\ensuremath{\mathbf{#1}^\star}}}
\def\ddefloop#1{\ifx\ddefloop#1\else\ddef{#1}\expandafter\ddefloop\fi}
\def\ddef#1{\expandafter\def\csname bst#1\endcsname{\ensuremath{\mathbf{#1}^\star}}}
\def\ddefloop#1{\ifx\ddefloop#1\else\ddef{#1}\expandafter\ddefloop\fi}
\def\ddef#1{\expandafter\def\csname bhat#1\endcsname{\ensuremath{\hat{\mathbf{#1}}}}}
\def\ddefloop#1{\ifx\ddefloop#1\else\ddef{#1}\expandafter\ddefloop\fi}
\def\ddef#1{\expandafter\def\csname b#1\endcsname{\ensuremath{\mathbf{#1}}}}
\def\ddefloop#1{\ifx\ddefloop#1\else\ddef{#1}\expandafter\ddefloop\fi}
\def\ddef#1{\expandafter\def\csname barb#1\endcsname{\ensuremath{\bar{\mathbf{#1}}}}}
\def\ddef#1{\expandafter\def\csname c#1\endcsname{\ensuremath{\mathcal{#1}}}}
\def\ddef#1{\expandafter\def\csname h#1\endcsname{\ensuremath{\widehat{#1}}}}
\def\ddef#1{\expandafter\def\csname hc#1\endcsname{\ensuremath{\widehat{\mathcal{#1}}}}}
\def\ddef#1{\expandafter\def\csname t#1\endcsname{\ensuremath{\widetilde{#1}}}}
\def\ddef#1{\expandafter\def\csname tc#1\endcsname{\ensuremath{\widetilde{\mathcal{#1}}}}}
\DeclareMathOperator*{\argmin}{arg\,min}
\let\Pr\relax
\DeclareMathOperator{\Pr}{\mathbb{P}}
\newcommand{\E}{\mathbb{E}}
\newcommand{\Exp}{\mathbb{E}}
\newcommand{\Normal}{\mathrm{N}}
\newcommand{\eye}{\mathbf{I}}
\newcommand{\ballkr}[1][r]{\cB_{k}(r)}
\newcommand{\op}{\mathrm{op}}
\newcommand{\rmd}{\mathrm{d}}
\DeclareMathSymbol{\shortminus}{\mathbin}{AMSa}{"39}
\newcommand{\R}{\mathbb{R}}
\Crefname{component}{Component}{Components}
\Crefname{contribution}{Contribution}{Contributions}
\newcommand{\componentref}[1]{%
  \hyperref[#1]{C\ref*{#1}}%
}
\Crefname{claim}{Claim}{Claims}
\Crefname{property}{Property}{Properties}
\newcommand{\hyp}[1]{{\color{hyppurple}{H#1}}}
\newcommand{\hypanchor}[1]{\hypertarget{hyp:#1}{\hyp{#1}}} %
\newcommand{\hypref}[1]{\hyperlink{hyp:#1}{\hyp{#1}}}     %
\newcommand{\iclrpar}[1]{\textbf{\takeawaybold{#1}}}
  \newcommand{\iclrstyle}{\textstyle}
  \newcommand{\iclrstyle}{}
\newcommand{\thmbolddark}[1]{{\color{\thmcolordark}\textbf{#1}}}
\newcommand{\stopgrad}{\mathrm{sg}}
\newcommand{\flowpol}{\pi}
\newcommand{\piideal}{\pi^{\star}}
\newcommand{\flowideal}{b^{\star}}
\newcommand{\ddt}{\frac{\rmd }{\rmd t}}
\renewcommand{\Normal}{\mathrm{N}}
\newcommand\addtometadatalist[5][]{%
  \begingroup
  \if\relax#3\relax\def\sep{}\else\def\sep{#5}\fi
  \let\protect\@unexpandable@protect
  \xdef#3{\expandafter{#3}\sep #4[#1]{#2}}%
  \endgroup
}
\newcommand\metadatalist{}
\newcommand\metadataformat[2][]{{\small \textbf{#1:} #2}}
\newcommand\metadata[2][]{\addtometadatalist[#1]{#2}{\metadatalist}{\metadataformat}{\\}}
\newcommand{\paperwebsite}[1]{\metadata[Website]{\url{#1}}}
\newcommand{\papercode}[1]{\metadata[Code]{\url{#1}}}
\newcommand{\paperdocs}[1]{\metadata[Documentation]{\url{#1}}}
\newcommand{\paperblog}[1]{\metadata[Blog]{\url{#1}}}
\newcommand{\taskname}[1]{{\texttt{#1}}}
\newcommand{\algname}[1]{\textbf{\texttt{#1}}}
\newcommand{\archname}[1]{\texttt{#1}}
\newcommand{\lifttask}{\taskname{Lift}\xspace}
\newcommand{\can}{\taskname{Can}\xspace}
\newcommand{\squaretask}{\taskname{Square}\xspace}
\newcommand{\transport}{\taskname{Transport}\xspace}
\newcommand{\toolhang}{\taskname{Tool}-\taskname{Hang}\xspace}
\newcommand{\pusht}{\taskname{Push}-\taskname{T}\xspace}
\newcommand{\kitchen}{\taskname{Kitchen}\xspace}
\newcommand{\metaworld}{\taskname{MetaWorld}\xspace}
\newcommand{\adroit}{\taskname{Adroit}\xspace}
\newcommand{\libero}{\taskname{LIBERO}\xspace}
\newcommand{\liberoobject}{\taskname{LIBERO Object}\xspace}
\newcommand{\liberogoal}{\taskname{LIBERO Goal}\xspace}
\newcommand{\liberospatial}{\taskname{LIBERO Spatial}\xspace}
\newcommand{\liberoten}{\taskname{LIBERO 10}\xspace}
\newcommand{\Flow}{\algname{Flow}\xspace}
\newcommand{\regression}{RCP\xspace}
\newcommand{\Regression}{\algname{Regression}\xspace}
\newcommand{\straightflow}{straight-flow\xspace}
\newcommand{\Straightflow}{\algname{SF}\xspace}
\newcommand{\residualregression}{residual regression\xspace}
\newcommand{\Residualregression}{\algname{RR}\xspace}
\newcommand{\minimaliterativepolicy}{minimal iterative policy\xspace}
\newcommand{\mminimaliterativepolicy}{Minimal iterative policy\xspace}
\newcommand{\Minimaliterativepolicy}{\algname{MIP}\xspace}
\newcommand{\pizero}{\archname{$\pi_0$}\xspace}
\newcommand{\chiunet}{\archname{Chi-UNet}\xspace}
\newcommand{\sudeepdit}{\archname{Sudeep-DiT}\xspace}
\newcommand{\chitransformer}{\archname{Chi-Transformer}\xspace}
\newcommand{\rnn}{\archname{RNN}\xspace}
\newcommand{\mlp}{\archname{MLP}\xspace}
\newcommand{\KitchenEnv}{\textsc{Franka}-\textsc{Kitchen}}
\newcommand{\ignore}[1]{}
\renewcommand{\maketitle}{
    \newpage
    \null
    \begingroup
    \raggedright
    {\LARGE \bfseries \@title \par}
    \vskip 1.5em
    {\large
    \lineskip .5em
    \begin{tabular}[t]{l}
    \@author
    \end{tabular}\par}
    \vskip 1em
    {\large \@date \par}
    \endgroup
    \par
    \vskip 1.5em
}
\title{Much Ado About Noising: Dispelling the Myths of Generative Robotic Control}
\author{\small
Chaoyi Pan\footnote{\texttt{\{chaoyip, giria, naichieh, clairej, gqu, nboffi, gshi, msimchow\}@andrew.cmu.edu} \label{foot:cmu}}\textsuperscript{,$\$$} ~
 Giri Anantharaman\textsuperscript{\ref{foot:cmu}} ~
Nai-Chieh Huang\textsuperscript{\ref{foot:cmu}} 
\small
Claire Jin\textsuperscript{\ref{foot:cmu}} ~
Daniel Pfrommer\footnote{\texttt{dpfrom@mit.edu} 
\label{foot:mit}}\\
\small
Chenyang Yuan\footnote{\texttt{\{chenyang.yuan,frank.permenter\}\@tri.global}\label{foot:tri}} ~
Frank Permenter\textsuperscript{\ref{foot:tri}} ~
Guannan Qu\textsuperscript{\ref{foot:cmu},$\dagger$} ~
Nicholas Boffi\textsuperscript{\ref{foot:cmu},$\dagger$} ~
Guanya Shi\textsuperscript{\ref{foot:cmu},$\dagger$} ~ \\
\small 
Max Simchowitz\textsuperscript{\ref{foot:cmu},$\dagger$} \\
\vspace{-.3em}
 \rule{.38\textwidth}{.7pt}
\\
\footnotesize
$^{\$}$Project lead. $^\dagger$Equal advising.  \\
\footnotesize $^{a}$Carnegie Mellon University 
$^{b}$Massachusetts Institute of Technology ~~
$^{c}$Toyota Research Institute ~~
}
\date{\vspace{-0.5cm}}
\begin{document}
\begin{tcolorbox}[
    colback=blue!60!gray!5, colframe=gray!50,
    boxrule=0pt,
    arc=2mm%
  ]
  \maketitle
  \vspace{-1em}
  \tcbline
  
\begin{minipage}[t]{1.0\linewidth}
    \centering
    \includegraphics[width=1.0\textwidth]{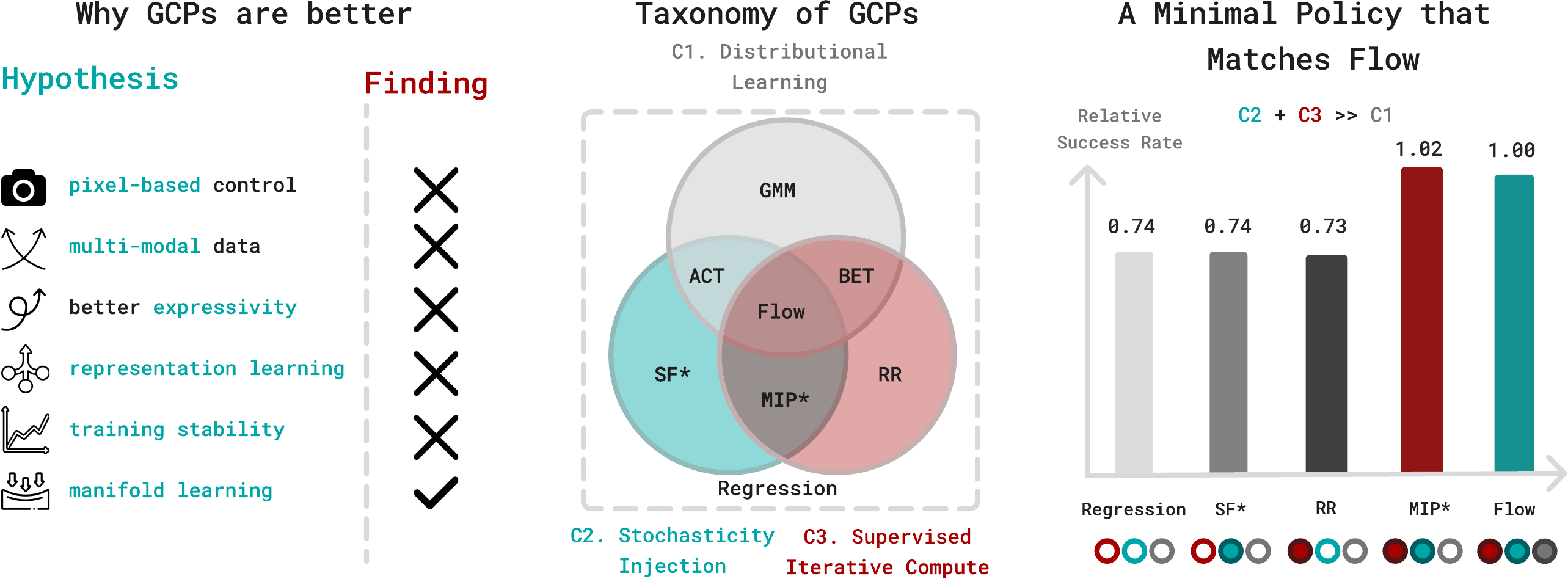}
    \captionof{figure}{  %
        \footnotesize
        \emph{Left:} After careful ablation on each component over 28 common behavior cloning benchmarks with diverse input modalities (state, pixel, point cloud and language), architectures (with both raw and pre-trained models, like \pizero) and tasks (standard single-task benchmarks and multi-task benchmark like \libero), we refute a number of popularly held misconceptions about why \textbf{generative control policies} (GCPs) outperform regression policies (RCP) on these tasks.
        \emph{Center:} We identify that  the most important factor contributing to GCP success is a combination of \emph{stochastic injection} (\componentref{comp:stoch}) and \emph{supervised iterative computation} (\componentref{comp:sic}).
        Surprisingly, distribution learning (\componentref{comp:distr}) is the least important factor, due to the absence of learned  multi-modality (\Cref{sec:multimodality}).
        \emph{Right:} The average relative success rate to flow of 7 most challenging tasks. We propose a simple two-step \textbf{minimal iterative policy} (\Minimaliterativepolicy) whose performance matches that of flow-based GCPs.}
    \label{fig:teaser}
\end{minipage}

  \tcbline
  \vspace{-1em}
  \vskip 0.5cm
  \makeatletter
  \ifdefempty{\metadatalist}{}{\metadatalist\par}
  \makeatother
\end{tcolorbox}

\begin{abstract}
  
Generative models, like flows and diffusions, have recently emerged as popular and efficacious policy parameterizations in robotics. There has been much speculation as to the factors underlying their successes, ranging from capturing multi-modal action distribution to expressing more complex behaviors.
In this work, we perform a comprehensive evaluation of popular generative control policies (GCPs) on common behavior cloning (BC) benchmarks.
We find that GCPs \emph{do not} owe their success to their ability to capture multi-modality or to express more complex observation-to-action mappings.
Instead, we find that their advantage stems from \emph{iterative computation}, as long as intermediate steps are supervised during training and this supervision is paired with a suitable level of \emph{stochasticity}.
As a validation of our findings, we show that a \minimaliterativepolicy~(\Minimaliterativepolicy), a lightweight two-step regression-based policy, essentially matches the performance of flow GCPs, {and often outperforms distilled shortcut models}.
Our results suggest that the distribution-fitting component of GCPs is less salient than commonly believed, and point toward new design spaces focusing solely on control performance.
\iftoggle{iclr}{Code and supplementary materials are available at \href{https://simchowitzlabpublic.github.io/much-ado-about-noising-project/}{our website}.}{}

\end{abstract}

\section{Introduction}

\iftoggle{arxiv}{}
{
    \begin{figure}[h]
        \vspace{-1.5em}
        \centering
        \includegraphics[width=0.95\textwidth]{figs/teaser_3_colomn.pdf}
        \vspace{-1.0em}
        \captionof{figure}{  %
            \footnotesize
            \emph{Left:} After careful ablation on each component over 28 common behavior cloning benchmarks with diverse input modalities (state, pixel, point cloud and language), architectures (with both raw and pre-trained models, like \pizero) and tasks (standard single-task benchmarks and multi-task benchmark like \libero), we refute a number of popularly held misconceptions about why \textbf{generative control policies} (GCPs) outperform regression policies (RCP) on these tasks.
            \emph{Center:} We identify that the most important factor contributing to GCP success is a combination of \emph{stochastic injection} (\componentref{comp:stoch}) and \emph{supervised iterative computation} (\componentref{comp:sic}).
            Surprisingly, distribution learning (\componentref{comp:distr}) is the least important factor, due to the absence of learned  multi-modality (\Cref{sec:multimodality}).
            \emph{Right:} The average relative success rate to flow of 7 most challenging tasks. We propose a simple two-step \textbf{minimal iterative policy} (\Minimaliterativepolicy) whose performance matches that of flow-based GCPs.}
        \label{fig:teaser}
        \vspace{-1.0em}
    \end{figure}
}

Long-horizon, dexterous manipulation tasks such as furniture assembly, food preparation, and manufacturing have been a holy grail in robotics.
Recent large robot action models~\citep{teamCarefulExaminationLarge2025,black2024pi_0,kim2024openvla} have made substantial breakthroughs towards these goals by imitating expert demonstrations of diverse qualities.
We provide a more comprehensive review of related work in~\cref{sec:related}, but highlight here a key trend:
while supervised learning from demonstration, also known as \emph{behavior cloning} (BC), has been applied across domains for decades \citep{pomerleau1988alvinn}, its recent success in robotic manipulation has coincided with the adoption of what we term \takeawaybold{generative control policies} (GCPs):  robotic control policies that use generative modeling architectures, such as diffusion models, flow models, and autoregressive transformers, as parameterizations of the mapping from observation to action.
Given the seemingly transformative nature of GCPs for robot learning, there has been much speculation about the origin of their superior performance relative to policies trained with a regression loss, henceforth \takeawaybold{regression control policies} (RCPs).
GCPs, by modeling conditional distributions over actions, are uniquely suited to the multi-task pretraining paradigm popular in today's large robotic models.
However, a number of hypotheses regarding the superiority of GCPs pertain even in the \emph{single task} setting \citep{chi2023diffusion,reuss2023goal}:
\iftoggle{arxiv}{
    \begin{itemize}[leftmargin=2em,itemsep=-0em,topsep=0em]
        \item[\hypanchor{1}.] Better performance on pixel-based control
        \item[\hypanchor{2}.] Capturing multi-modality in the training data
        \item[\hypanchor{3}.] Greater expressivity due to iterative computation of the observation-to-action mapping
        \item[\hypanchor{4}.] Representation learning due to stochastic data augmentation
        \item[\hypanchor{5}.] Improved training stability and scalability
    \end{itemize}}{
    \begin{itemize}[leftmargin=2.5em, itemsep=-0.0em, parsep=0pt, topsep=0pt]
        \vspace{-0.5em}
        \item[\hypanchor{1}.] Better performance on pixel-based control
        \item[\hypanchor{2}.] Capturing multi-modality in the training data
        \item[\hypanchor{3}.] Greater expressivity due to iterative computation of the observation-to-action mapping
        \item[\hypanchor{4}.] Representation learning due to stochastic data augmentation
        \item[\hypanchor{5}.] Improved training stability and scalability
              \vspace{-0.5em}
    \end{itemize}
}
\iftoggle{iclr}{}{In this work, we systematically investigate these hypotheses to understand the mechanism by which GCPs have attained superior performance over RCPs. We aim to answer:
    \\
    \rule{\textwidth}{1pt}
    \begin{quote}
        Is there \emph{really} a benefit to using GCPs for behavior cloning, or are  their  claimed successes ... \takeawaybold{much ado about noising?}
    \end{quote}
    \vspace{-.5em}
    \rule{\textwidth}{1pt}
}

\iclrpar{The gap between generative modeling and generative control.} The objective for generative modeling in text and image domains is fundamentally different from the goal in a control task. In the former, one aims to generate high-quality and \emph{diverse} samples from the original data distribution. In the latter, it suffices to select \emph{any} action that leads to better downstream performance. Whereas much of the generative modeling literature has focused on the distribution of the \emph{generated variable} \citep{lee2023convergence},
we aim to understand if it is necessary to reproduce the expert data distribution---for example by capturing any multi-modality---to attain strong control performance. If not, is most salient to capture about the \emph{conditioning relationship} mapping $o \to a$?

\iftoggle{arxiv}
{
    \subsection{Contributions.}
}
{
    \iclrpar{Contributions.}
}
This paper adopts careful experimental methodology to rigorously test the key design components  (\Cref{sec:parsing}) that contribute to the observed success of GCPs, and to account for the key mechanisms by which they contribute to improved performance in behavior cloning (\Cref{sec:whatworks}).
We restrict our study to flow-based GCPs, given their popularity and adoption in industry \citep{black2024pi_0,intelligence$p_05$VisionLanguageActionModel2025,nvidia2025gr00t}.

We begin by first identifying which factors \emph{do not} contribute to the advantage of GCPs over RCPs.
\begin{contribution}[\thmbolddark{Neither multi-modality nor policy expressivity account for GCPs' success}, \Cref{sec:dispelling}] Through careful benchmarking, we show that RCPs with appropriate architectures are highly competitive on both state- and image-based (\hypref{1}) robot learning benchmarks as well as vision-language-action (VLA) model finetuning (\Cref{sec:myth_performance}). Performance gaps only arise on certain tasks requiring high precision. However, we show that neither multi-modality (\hypref{2}, \Cref{sec:multimodality}) nor the ability to express more complex functions via multiple integration steps (\hypref{3}, \Cref{sec:expressivity}) satisfactorily accounts for this phenomenon. In fact,  GCPs do not even provide greater trajectory diversity compared to RCPs (\Cref{sec:diversity}).
\end{contribution}
Essential to this finding is controlling for architecture: to our knowledge, we are the first work to carefully benchmark expressive architectures popularized for Diffusion~\citep{chi2023diffusion,dasariIngredientsRoboticDiffusion2024} as regression policies. To determine what contributes to GCPs performance on these high-precision tasks (beyond architectural optimization), we parse the design space of generative control policies into three components, depicted in \Cref{fig:teaser} (left).
\begin{contribution}[\thmbolddark{Exposing the design space of GCPs}, \Cref{sec:core_design_components}] We introduce a novel taxonomy that parses the three essential design components of GCPs:
    \iftoggle{iclr}{\vspace{-.3em}}{}
    \iftoggle{arxiv}{
        \begin{itemize}[topsep=0em,itemsep=-.05em,leftmargin=2em]
            \item[\componentref{comp:distr}.]\emph{Distributional Learning}:  matching a conditional distribution of actions given observations.\item[\componentref{comp:stoch}.] \emph{Stochasticity Injection}: injecting noise during training to improve the learning dynamics.
            \item[\componentref{comp:sic}.] \emph{Supervised Iterative Computation}: generating output with multiple steps, each of which receives supervision during training.
        \end{itemize}
    }{
        \begin{itemize}[topsep=0em,itemsep=-.05em,parsep=0.05em,leftmargin=2em]
            \item[\componentref{comp:distr}.]\emph{Distributional Learning}:  matching a conditional distribution of actions given observations.
            \item[\componentref{comp:stoch}.] \emph{Stochasticity Injection}: injecting noise during training to improve the learning dynamics.
            \item[\componentref{comp:sic}.] \emph{Supervised Iterative Computation}: generating output with multiple steps, each of which receives supervision during training.
        \end{itemize}
    }
\end{contribution}
With this taxonomy in hand,~\Cref{sec:algorithms_with_different_design_components} introduces a family of algorithms, each of which lies along a spectrum between GCPs and RCPs by exhibiting different combinations of the above components. While we find that neither \componentref{comp:stoch} nor \componentref{comp:sic}  in isolation  improve over regression, we find their combination yields a policy whose performance is competitive with flow, leading to our next contribution.
\begin{contribution}[\thmbolddark{\Minimaliterativepolicy{}: the power of \componentref{comp:stoch}+\componentref{comp:sic}}, \Cref{sec:performance_comparison,sec:algorithms_with_different_design_components}] As an algorithmic ablation that only combines \componentref{comp:stoch}+\componentref{comp:sic}, we devise a \emph{minimal iterative policy} (\Minimaliterativepolicy{}), which invokes only two iterations, one-step of stochasticity during training, and deterministic inference. Despite its simplicity, \Minimaliterativepolicy{} essentially matches the performance of flow-based GCPs across state-, pixel- and 3D point-cloud-based BC tasks, exposing that the combination of \componentref{comp:stoch}+\componentref{comp:sic} is responsible for the observed success of GCPs. %
        {In addition, we find that \Minimaliterativepolicy{} \textbf{often outperforms shortcut/few-step policies}  (\iftoggle{arxiv}{\Cref{rem:shortcut}}{\Cref{sec:comp_consistency_models}}). This confirms our findings that distributional learning (which few-step policies, but not \Minimaliterativepolicy{}, achieve) is not needed in robotic control.  }
\end{contribution}
As described in \iftoggle{arxiv}{\Cref{rem:shortcut}}{\Cref{sec:comp_consistency_models}}, \Minimaliterativepolicy{} is substantively distinct from flow-map-based models~\citep{boffiFlowMapMatching2025,boffiHowBuildConsistency2025}, including consistency models~\citep{songConsistencyModels2023,kim2023consistency} and their extensions~\citep{gengMeanFlowsOnestep2025,fransOneStepDiffusion2024}, in that the latter do satisfy \componentref {comp:distr}, and require training over a continuum of noise levels.

\begin{contribution}[\takeawaybold{Attributing the benefits of  \componentref{comp:stoch}+\componentref{comp:sic}}, \Cref{sec:whatworks}] We identify that a property we term \emph{manifold adherence} captures the inductive bias of GCPs and \Minimaliterativepolicy relative to RCPs, even in the absence of lower validation loss. We explain how this property is a useful proxy for closed-loop performance in control tasks.
    Finally, we expose how \componentref{comp:sic}, through iterative computation, encourages manifold adherence, but only if stochasticity during training (\componentref{comp:stoch}) is present to mitigate compounding errors across iteration steps (as described in \Cref{sec:stochasticity_injection}).
\end{contribution}
Manifold adherence in~\cref{sec:manifold_adherence} measures the generated action's plausibility given out of distribution observations, where only off-manifold component is evaluated rather than the distance to the neighbors~\citep{pari2021surprising}. Note that manifold adherence reflects a favorable inductive bias during learning, rather than brute expressivity of more complex behavior (\hypref{3}). Moreover, \componentref{comp:stoch}  provides more of a supporting role to \componentref{comp:sic}, rather than enhancing data-augmentation in its own right (\hypref{4}).
In addition, we find that \componentref{comp:stoch}+\componentref{comp:sic} also enhance scaling behavior (\hypref{5}), likely due to better model utilization through decoupling across iterations. Finally, we identify that the subtle interplay between architecture choice, policy parameterization and task can affect performance by an even greater magnitude than the choice of policy parametrization (\Cref{sec:architecture}).

\iclrpar{Takeaway.} In robotic applications, our findings suggest that the distributional formulation of GCPs --- sampling from a \emph{distribution} of actions given observations --- is the least important facet that contributes to their success.
Rather, our work highlights that \componentref{comp:stoch}+\componentref{comp:sic} offer an exciting and under-explored sandbox for future algorithm design in continuous control and beyond.

\section{Preliminaries}
\label{sec:prelim}

\newcommand{\gen}{a}
\newcommand{\cond}{o}
\newcommand{\genspace}{A}
\newcommand{\condspace}{O}
\newcommand{\Dtrain}
{p_{\mathrm{train}}}
\newcommand{\Phieul}{\Phi_{\theta,\mathrm{eul}}}
We consider a continuous control setting with observations $o\in O$ and actions $a\in A$ where $O$ is the observation space and $A$ is the action space.
We learn a policy $\pi: \condspace \to \Delta(\genspace)$ from observations to (distributions over) actions to maximize the probability of success $J(\pi)$ on a given task, which we refer to as ``performance.''
\iftoggle{iclr}{}{This can be formulated as maximizing reward in an Markov Decision Process, which for completeness we formalize in \Cref{app:MDP}.}
We consider the performance of policies learned via BC---that is, supervised learning from a distribution of (observation, actions pairs) drawn from a training distribution $\Dtrain$.
\iftoggle{iclr}{}{We now describe two popular classes of control policies, and their respective training objectives.}
In applications, the actions $a$ are often a short-open loop sequence of actions, or \emph{action-chunks}, which have been shown to work more effectively for complex tasks with end-effector position commands~\citep{zhaoLearningFineGrainedBimanual2023}. See \Cref{sec:related} for an unabridged  related work.

\iclrpar{Regression Control Policies (RCPs).} A historically common policy choice for BC is regression control policies (RCPs) \citep{pomerleau1988alvinn,bain1995framework,ross2011reduction,osa2018algorithmic}, given by a deterministic map $\pi: \condspace \to \genspace$. In applications, it is parameterized by a neural network $\pi_\theta$ and trained so as to minimize the $L_2$-loss on training data:
\begin{align}
    \iclrstyle \pi_{\theta} \approx \argmin_\theta \Exp \|\pi_{\theta}(o) - a\|^2, \quad (o,a) \sim \Dtrain.
\end{align}

\iclrpar{Generative Control Policies (GCPs).}
Generative control policies (GCPs) parameterize a \emph{distribution} of actions $a$ given an observation $o$.
This is often accomplished in practice by representing the policy $\pi_\theta$ with a generative model such as a diffusion~\citep{chi2023diffusion}, flow~\citep{zhang2024flowpolicy}, or tokenized autoregressive transformer~\citep{shafiullah2022behavior}.
Given their popularity, we focus on flow-based GCPs (flow-GCPs).
A flow-GCP learns a conditional flow field \citep{lipman2023flow,chisari2024learning,nguyen2025flowmp,albergo2022building,heitz2023iterative,liu2022flow} $b:[0,1] \times \genspace \times \condspace \to \genspace$ by minimizing the objective
\begin{align}
    b_{\theta} \approx \iftoggle{iclr}{\textstyle}{} \argmin_{\theta} \Exp\|b_t(I_t \mid o) -  \dot I_t\|^2, \quad t \sim \mathrm{Unif}([0,1]), \:\: z\ \sim \Normal(0,\eye),
    \label{eq:flow}
\end{align}
where again $(o,a) \sim \Dtrain$,
$I_t = t a + (1-t) z$ is the stochastic interpolant between the training action $a$ and noise variable $z$, and where $\dot I_t = a - z$ is the time derivative of $I_t$. We note that this is a special case of the stochastic interpolant framework \citep{albergo2022building,albergo2023stochastic,albergo2024stochastic}, which permits a larger menu of design decisions.
A flow model then predicts an action by integrating a flow. In the limit of infinite discretization steps, this amounts to sampling $a \sim \pi_\theta(\cdot \mid o)$ by sampling $z \sim \Normal(0,\eye)$, and then setting $a = a_1$, where $\{a_t\}_{t\in[0,1]}$ solves the ODE:
\begin{align}
    \iftoggle{iclr}{\textstyle}{}   \ddt a_t = b_t(a_t \mid o) \qquad \text{with initial condition} \qquad a_0=z. \label{eq:ode}
\end{align}
In practical implementation, sampling is conducted via discretized Euler integration (see~\cref{app:euler_integration} for details).
This yields a policy $a = \flowpol_\theta(z,o)$ which is a deterministic function of the initial noise $z$ and the observation $o$.
All experiments, unless otherwise stated,  perform $9$ integration steps. We reiterate that other GCPs, e.g. based on diffusion models and autoregressive transformers, have been studied elsewhere. We choose to focus on flow models due to their state-of-the-art performance \citep{chi2023diffusion,chisari2024learning,zhang2024flowpolicy} and deployment in industry \citep{black2024pi_0,intelligence$p_05$VisionLanguageActionModel2025,nvidia2025gr00t}.

\iclrpar{Multi-Modality in Robot Learning.} Past work has conjectured that for salient robotic control tasks, $\Dtrain(a \mid o)$ exhibit \emph{multi-modality}, i.e. the conditional  distribution of $a$ given $o$ has multiple modes~\citep{shafiullah2022behavior,zhaoLearningFineGrainedBimanual2023,florence2022implicit}. This motivated the earliest use of GCPs \citep{chi2023diffusion} (\hypref{2}).  \Cref{sec:multimodality} calls into question the extent to which GCPs do in fact learn multi-modal distributions of $a \mid o$ on popular benchmarks\iftoggle{iclr}{.}{, including those claimed to highlight multimodality as a core challenge.}

\section{Multi-modality and expressivity do not explain GCPs' performance}
\label{sec:dispelling}

This section demonstrates that neither advantages on pixel-based control (\hypref{1}), nor multi-modality (\hypref{2}), nor improved expressivity (\hypref{3}) fully account for the GCPs performance relative to RCPs.
    {
        Instead, our analysis indicates that the advantage of GCPs is \textbf{largely due to architectural innovations} found in GCPs—specifically, the adoption of powerful models like Transformers and UNets, along with the use of action chunking techniques.
    }
\Cref{sec:nearest_neighbor_hypothesis_study,sec:diversity} addresses other hypotheses, such as $k$-nearest neighbor approximation and the behavior diversity. %

\begin{figure}[t!]
    \iftoggle{iclr}{
        \vspace{-0.6cm}
    }{}
    \centering
    \includegraphics[width=1.0\textwidth]{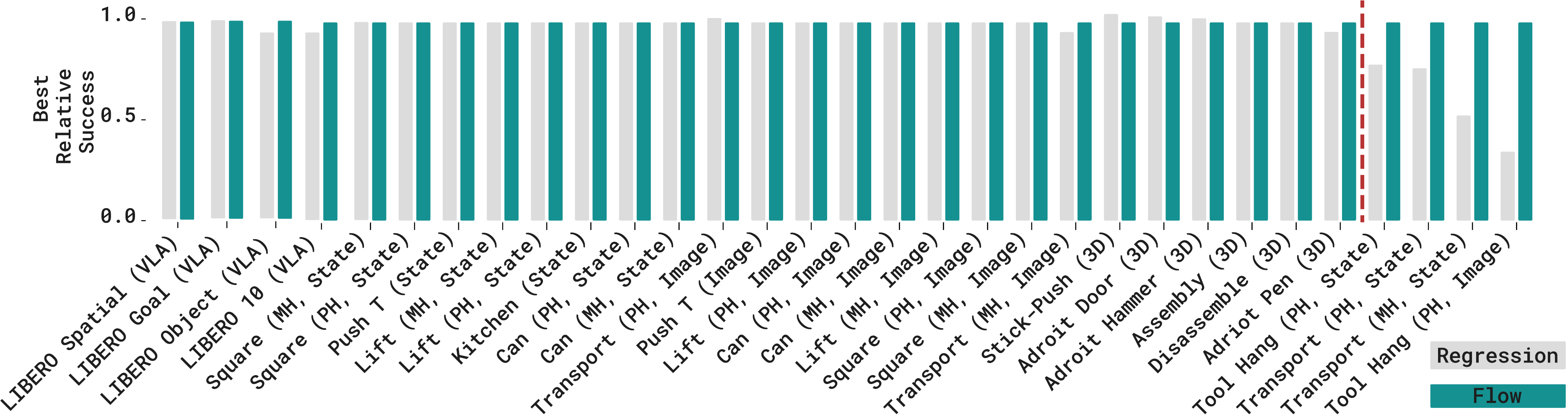}
    \iftoggle{iclr}{
        \vspace{-0.6cm}
    }{}
    \caption{
        \footnotesize
        \textbf{Relative performance of RCPs compared to GCPs across common benchmarks.}
        For single-task benchmarks, we implement  \chitransformer, \sudeepdit and \chiunet. For each architecture, we average performance of the best training checkpoint across 3 seeds.
        For multi-task benchmarks, we use \pizero as base policy and finetune it on full \libero benchmark (130 tasks).
        We then report the performance of the best-performing architecture, chosen individually for both RCPs and GCPs.
        For Flow, we always do 9 step Euler integrations, where its performance plateaued.
        For readability, RCPs success rates are plotted relative to flow, with flow normalized to performance of $1$ per task.  Tasks are grouped by observation modality, and ordered by relative RCPs performance. Red dashed line indicates threshold at which RCP attains $<95\%$ success of GCPs. \emph{Note that RP and \Flow perform comparably on most Image, 3D-based and VLA-based multi-task benchmarks.}
    }
    \label{fig:rp_vs_gp_best}
    \iftoggle{iclr}{
        \vspace{-0.8cm}
    }{}
\end{figure}

\subsection{When controlled for architecture, GCPs only outperform on  few tasks}
\label{sec:myth_performance}
We first isolate the tasks in which GCPs exhibit stronger performance by comparing across 28 popular BC benchmarks including multi-task benchmarks like \libero (detailed in~\cref{sec:task_settings}), encompassing diverse data quality, modalities (\textbf{state}, \textbf{point clouds}, \textbf{image} and \textbf{language}), and domains (e.g., \metaworld, \taskname{Robomimic}, \adroit, \taskname{D4RL}, \taskname{Meta-World}, \libero).
Crucially, we implement RCPs using the \textbf{exact same architectures} as their corresponding flow models by simply setting the noise level and initial noise to zero: $z=0$, $t = 0$, and study three widely-used architectures (\chitransformer, \sudeepdit, \chiunet as well as pre-trained VLA models like \pizero~\citep{black2024pi_0}; detailed in \cref{sec:control_architecture}).
This architectural alignment enables RCPs to benefit from the sophisticated network designs typically reserved for GCPs, ensuring a fair comparison.

Under controlled comparison, we find GCPs and RCPs achieve parity across the vast majority of \iftoggle{arxiv}{state-based, image-based, and VLA-based}{} BC benchmarks.
Performance gaps emerge only on a small subset of tasks requiring \textbf{high precision} (e.g. precise insertion tasks). We report best-case results in~\cref{fig:rp_vs_gp_best} and comprehensive ablations (including worst-case architectures and loss variants) in~\cref{sec:gp_vs_rp_full_result}.

\iftoggle{arxiv}{
    Our evaluation yields three key insights:
    \begin{itemize}[itemsep=0pt, leftmargin=*]
        \item \textbf{Rare Benefit of GCPs:} GCPs outperform RCPs by $>5\%$ on only a handful of tasks.
        \item \textbf{Modality Independence:} Contrary to popular belief, observation modality does \emph{not} correlate with GCP advantage.
        \item \textbf{Architectural Dominance:} Architecture choice dictates performance far more than the generative vs. regression distinction.
    \end{itemize}
}{
    Our evaluation yields three key insights:
    (a)\textbf{Rare Benefit of GCPs:} GCPs outperform RCPs by $>5\%$ on only a handful of tasks.
    (b)\textbf{Modality Independence:} Contrary to popular belief, observation modality does \emph{not} correlate with GCP advantage.
    (c)\textbf{Architectural Dominance:} Architecture choice dictates performance far more than the generative vs. regression distinction.
}

We posit that the perceived superiority of GCPs in prior work was confounded by architectural asymmetry.
To our knowledge, this is the first study to benchmark \sudeepdit, \chiunet, and \pizero backbones as regression policies.
In~\cref{sec:architecture}, we demonstrate that when equipped with these modern backbones—or even tuned \mlp baselines—RCPs are highly competitive.
Furthermore, we find that hyperparameters such as \textbf{action-chunking} horizon~\citep{zhao2023learning,chi2023diffusion,zhang2025actionchunkingexploratorydata} exert a greater influence on success rate than the choice of objective function (\cref{sec:action_chunk_size_study}).
\iftoggle{arxiv}{
    \\
    \par\noindent\rule{\linewidth}{0.4pt}
    \begin{quote}
        Design decisions like architecture and action-chunking have a significant and consistent impact on control performance. In contrast, the choice between GCPs and RCPs is largely negligible outside of high-precision regimes.
    \end{quote}
    \par\noindent\rule{\linewidth}{0.4pt}
}
{
    \takeawaybold{Design decisions like architecture and action-chunking have a significant and consistent impact on control performance. In contrast, the choice between GCPs and RCPs is largely negligible outside of high-precision regimes.}
}

\subsection{GCPs' performance does not arise from multi-modality}
\label{sec:multimodality}

Earlier literature  suggested that capturing multi-modality, as defined in \Cref{sec:prelim}, was precisely the root of the observed performance benefits of GCPs \citep{chi2023diffusion, reuss2023goal}.
However, examining \cref{fig:rp_vs_gp_best}, we see that many tasks which have been understood to be multimodal (e.g., \pusht) do not show substantial performance gaps between RCPs and GCPs.
On the other hand, RCPs and GCPs differ only on tasks that demand high precision (e.g. \toolhang, \transport).
In this section, we provide additional evidence that \takeawaybold{multimodality is not the main factor responsible for witnessed performance advantages of GCPs}.

\begin{figure}[ht]
    \centering
    \begin{minipage}[t]{0.32\linewidth}
        \centering
        \includegraphics[width=1.0\linewidth]{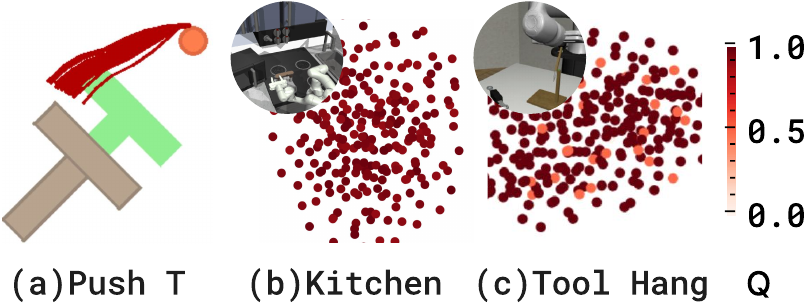}
        \captionof{figure}{\footnotesize \textbf{A. Visualized action distribution with Q values.}
            Distinct modes are \textbf{not}
            observed in planned actions even at symmetric and ambiguous states. (\kitchen{} and \toolhang{},  t-SNE visualization.) In \pusht, we all trajectories goes to one side. For the rest, there is no clear clustering of actions or Q.  %
        }
        \label{fig:action_distribution}
    \end{minipage}
    \hfill
    \begin{minipage}[t]{0.32\linewidth}
        \vspace{-1.5cm}
        \centering
        \scriptsize
        \begin{tabular}{@{}lccc@{}}
            \toprule
            \textbf{Task} & $z\equiv 0$ & {$\Normal(0,\eye)$} & \textbf{Mean} $z$ \\
            \midrule
            \pusht        & 0.97        & 0.97                & 0.95              \\
            \kitchen      & 0.99        & 0.99                & 0.97              \\
            \toolhang     & 0.78        & 0.80                & 0.76              \\
            \bottomrule
        \end{tabular}
        \captionof{table}{\footnotesize \textbf{B. Performance comparison of different sampling strategies}. We compare sampling $z=0$, $z\sim \Normal(0,\eye)$, and mean  over 64 $z^{(i)}\sim \Normal(0,\eye)$. Different sampling strategies  show minor performance difference, indicating  absence of distinct action modes.}
        \label{tab:mean_action}
    \end{minipage}
    \hfill
    \begin{minipage}[t]{0.32\linewidth}
        \vspace{-1.5cm}
        \centering
        \scriptsize
        \begin{tabular}{@{}lcc@{}}
            \toprule
            \textbf{Dataset} & \Flow & \algname{Reg.} \\
            \midrule
            Original         & 0.78  & 0.58           \\
            Deterministic    & 0.72  & 0.64           \\
            \bottomrule
        \end{tabular}
        \captionof{table}{\footnotesize \textbf{C. GCPs outperforms RCPs with deterministic experts.} Policy average success rate over 3 architectures, 3 seeds and 3 architectures given different dataset: one from original human demonstration and another collected by rolling out a flow policy in deterministic mode starting from zero noise. }
        \label{tab:deterministic_relabel}
    \end{minipage}
    \vspace{-0.5cm}
\end{figure}

\iclrpar{Evidence A: GCPs exhibit unstructured action distributions.}
For fixed observations, we draw multiple action samples by denoising from different initial latents and visualize the resulting action set with their Q values $Q(a,o)$.
We deliberately choose \emph{symmetry-critical} or \emph{high-ambiguity} states to \emph{maximize} potential multi-modality:
(a) \pusht{} at the symmetry axis of the T-shape, where taking the left or right path is equivalent,
(b) \kitchen{} from an initial state with multiple first-subtask choices, and
(c) \toolhang{} at the insertion pre-contact pose where human demonstrators pause for varying durations.
In (a-c) we observe \emph{single} clusters rather than distinct modes (high-dimensional actions visualized with t-SNE); see \cref{fig:action_distribution}.
Moreover, adherence to action cluster means do not correlate with performance: We color-code actions by Q-value, i.e. Monte-Carlo-estimated rewards-to-go (\cref{sec:q_function_estimation}). Highest returns are distributed evenly across samples.%

\iclrpar{Evidence B: Taking mean actions does not meaningfully degrade GCPs' performance.}
We evaluate flow policy's performance with three sampling strategies: zero noise $a=\flowpol(z=0, o)$, stochastic sampling $a=\flowpol(z, o), z \sim \Normal(0,I)$, and \emph{mean action} $a=\mathbb{E}_{z\sim\Normal(0,I)}[\flowpol(z, o)]$ (via Monte Carlo approximation).
If the learned distribution were \iftoggle{arxiv}{strongly}{} multi-modal, or if their distributions lied on a manifold whose \emph{curvature} was crucial to task success, the conditional mean would \emph{collapse} modes and severely degrade performance.
However, \cref{tab:mean_action} shows that replacing stochastic sampling with the mean action only slightly affects performance, indicating absence of distinct action modes.

\iclrpar{Evidence C: GCPs outperform RCPs on certain tasks even with deterministic experts.}
To fully remove any residual multi-modality, we recollect the dataset with trained flow policy evaluated in deterministic mode ($z=0$) detailed in \cref{sec:deterministic_dataset_generation}.
The new dataset is fully deterministic because action labels are provided by a deterministic policy evaluated in a deterministic environment. While the gap in performance between GCPs and RCPs shrinks somewhat,  we still find that GCPs still outperforms RCPs, as in \cref{tab:deterministic_relabel}, suggesting that capturing some ``hidden'' stochasticity or multimodality in the data does not suffice to explain the gap between the two.
\iftoggle{arxiv}{
    \\
    \hrule
    \begin{quote}
        Collectively, (A)--(C) indicate that the commonly cited explanation---``GCPs win because demonstrations are multi-modal''---does not hold for most studied behavior cloning benchmarks.%
    \end{quote}
    \hrule
    \vspace{.25em}
}
{
    \takeawaybold{Collectively, (A)--(C) indicate that the commonly cited explanation---``GCPs win because demonstrations are multi-modal''---does not hold for most studied behavior cloning benchmarks.}
}

\textbf{Multi-modality and data coverage.} The absence of observed multimodality is likely attributable to the large observation dimension of tasks relative to total number of demonstrations.  That is, we rarely see two ``conflicting'' actions for nearby observation vectors (note: to grid a space of dimension $d$ requires $2^{d}$ points). Some degree of ``hidden'' multi-modality may still be present, as indicated by the slight narrowing of the performance gap in \Cref{tab:deterministic_relabel}. Still, our central claim is that multi-modality is not \emph{sufficient} to explain the full difference in performance. \iftoggle{iclr}{}{Understanding to what extent multimodality appears in the multi-task setting is an exciting direction for future research.}

\subsection{Limitations of the expressivity of GCPs in the absence of multimodality}
\label{sec:expressivity}
\iftoggle{arxiv}{
    \noindent
    \begin{figure}[ht]
        \centering
        \includegraphics[width=1.0\linewidth]{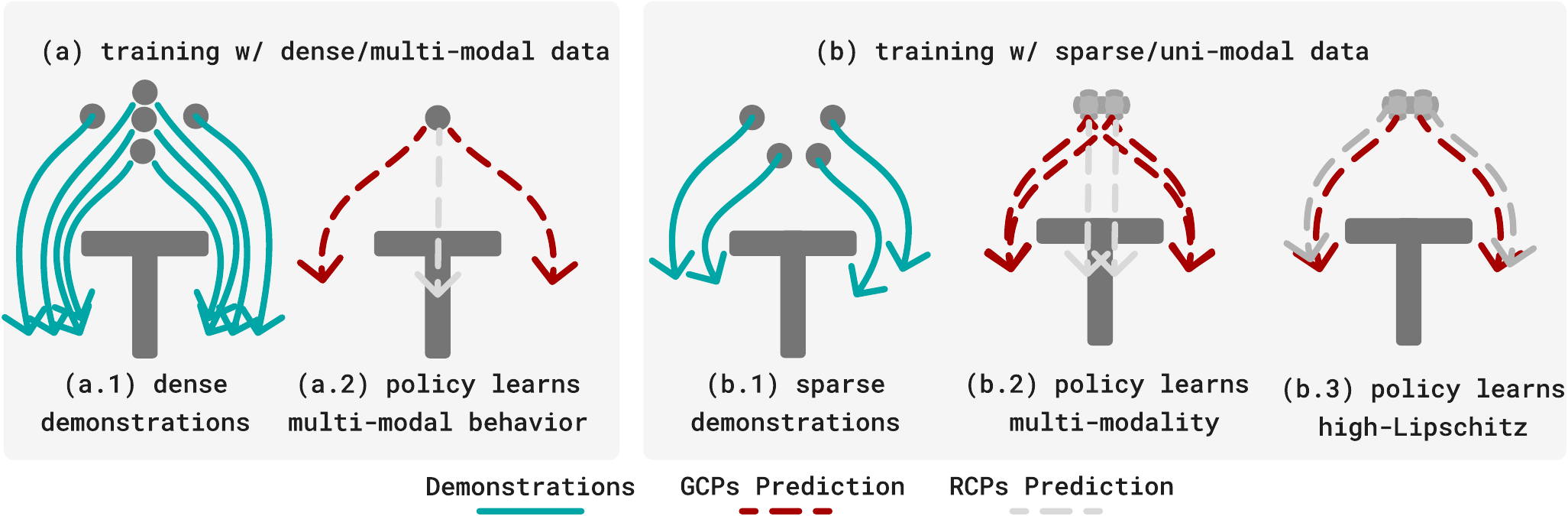}
        \captionof{figure}{\footnotesize \textbf{GCP behavior given different types of data.}
            (a) Given true multi-modal data (a.1), where expert have two behaviors at the same state, GCPs can learn both modes while RCP collapse to the middle (a.2).
            (b) In reality, the data is often sparse given high-dimensional space. Given the sparse data (b.1), GCPs have two possible behaviors: (b.2) still learn both modes given close-by states, (b.3) learn a high-Lipschitz policy to quickly switch between modes.   In our experiments, we find  that both \textbf{GCPs and RCPs} learn (b.3) in high-dimensional tasks (\Cref{sec:multimodality}). In this regime, \Cref{thm:informal} then suggests that, from a pure expressivity perspective, GCPs have a limited advantage over RCPs.
        }
        \label{fig:multimodality_vs_lipschitz}
    \end{figure}
}
{
}
An alternative to learning explicit multimodality is to represent rapid transition between actions as the observation changes. This is depicted in \Cref{fig:multimodality_vs_lipschitz}, where data that appears multi-modal can be fit with a policy that has a high Lipschitz constant, i.e. in which $\nabla_o \pi(a \mid o)$ is large. This reflects a broader principle in control that we need only capture the mapping from observation to a single effective action, rather than  reproduce the  distribution over all possible actions.

One may still conjecture that GCPs more easily  higher-Lipschitz policies by leveraging iterative computation, as compared to RCPs. This is because  deeper networks can express larger-Lipschitz functions more easily \citep{telgarsky2016benefits}, and many have equated the multi-step computation in flow-based generative models to depth \citep{chen2018neural}. Step-by-step generation is known to drastically increase expressivity in other domains as well, such as autoregressive language models~\citep{li2024chainthoughtempowerstransformers},

However, flow-based generative models use their multi-step computation to express complex distributions over the \emph{generated variable} \citep{ho2020denoising,song2021denoising,zhang2022fast,nichol2021improved}.
It is less clear if the iteration computation assists with represent complex \emph{observation-to-action} mappings. %
Thus, we ask:
\iftoggle{arxiv}{
    \begin{quote}
        Does the iterative computation in GCPs aid in learning more complex observation-to-action mappings, even if the learned action distributions for a fixed observation are themselves are relatively simple (i.e. unimodal)?
    \end{quote}
}{
    \takeawaybold{Does the iterative computation in GCPs aid in learning more complex observation-to-action mappings, even if the learned action distributions for a fixed observation are themselves are relatively simple (i.e. unimodal)?}
}

\iftoggle{arxiv}{
    \begin{figure}[ht]
        \centering
        \includegraphics[width=1.0\linewidth]{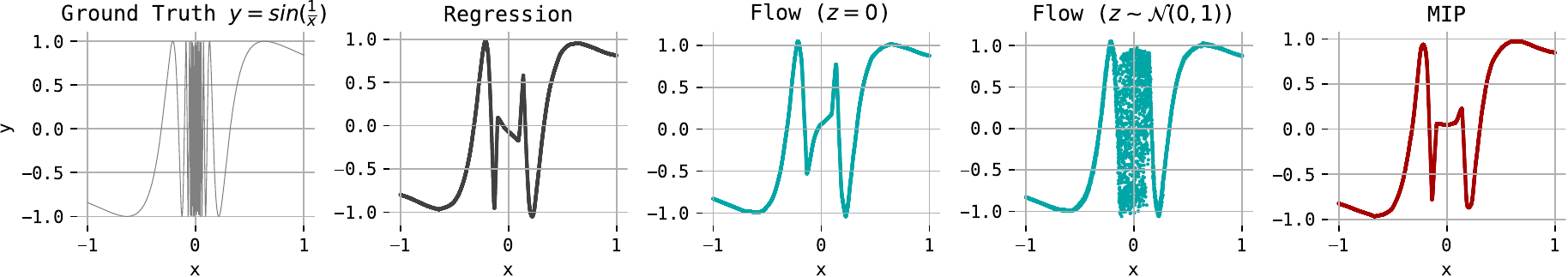}
        \captionof{figure}{\footnotesize \textbf{The Myth of Superior Expressivity: Fitting High-Frequency Functions.}
            We evaluate GCPs, RCPs, and \minimaliterativepolicy (\Minimaliterativepolicy, \Cref{sec:parsing}) on fitting the high-Lipschitz function $y = \sin(1/x)$ ($N=1024$, 4-layer MLP).
            Contrary to the belief that iterative evaluation yields sharper function approximation, both GCPs and RCPs fail to capture the high-frequency structure given limited network capacity.
            While RCPs succumb to spectral bias by averaging the oscillations, GCPs merely trade this averaging for stochastic variance.
            Crucially, when initial noise is fixed ($x_0=0$), the Flow policy collapses to the exact same mean-seeking behavior as regression. Only when averaged over initial noise variance to we start to see a tradeoff from epistemic uncertainty to aleatoric variance.
            This demonstrates that GCPs do not inherently bypass the spectral limitations of the underlying backbone to achieve greater Lipschitz expressivity.}
        \label{fig:high_lipschitz_function_learning}
    \end{figure}
}{
}

We now provide evidence that suggests ``\textbf{no}.'' We show   that {in the absence of multi-modality (as shown in~\cref{sec:multimodality}), GCPs cannot express more complex mappings from the conditioning variable $o$ to the generated variable $a$ than RCPs can.
We begin by considering a ground-truth conditional flow field $b_t^\star(o \mid a)$.
Let  $\piideal_\theta(z, o)$  represent the exactly integrated $\flowideal$ from initial noise $z$ to \iftoggle{iclr}{generated variable}{} $a$.
Given the absence of multi-modality (\Cref{sec:multimodality}), we assume that the distribution of $a \mid o$ is $\kappa$-log-concave (\Cref{app:theory}), satisfied by many classical unimodal distributions.
We prove that the Lipschitz constant of $\piideal_\theta(z , o)$ with respect to $o$, a measure of the expressivity of the $o\to a$ mapping, is bounded by that of $b^\star_t$:
\begin{theorem}[Informal]
    \label{thm:informal}
    Let $\|\cdot\|$ denote either the matrix operator or Frobenius norm, and suppose that the distribution of $a \mid o$ is $\kappa$-log-concave. Moreover, suppose that the flow field $\flowideal_t(a \mid o)$ is $L$-Lipschitz: $\|\nabla_o \flowideal_t(a \mid o)\| \le L$. Then, with {infinite} integration steps, \iftoggle{iclr}
    {$ \|\nabla_o\piideal_{\theta}(z, o)\| \le L \cdot \sqrt{1+\kappa^{-1}}$.}{we  have the bound
        \begin{align}
            \label{eq:informal}
            \|\nabla_o\piideal_{\theta}(z, o)\| \le L \cdot \sqrt{1+\kappa^{-1}}.
        \end{align}
    }
\end{theorem}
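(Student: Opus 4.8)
The plan is to control the sensitivity of the flow map to the conditioning variable by a Gr\"onwall argument on the first-variation (variational) equation of the ODE~\eqref{eq:ode}, where the contraction furnished by log-concavity enters through the Jacobian of the velocity field in its \emph{state} argument. Throughout I read ``$\kappa$-log-concave'' as $-\nabla_a^2\log p(a\mid o)\succeq \kappa I$. Fixing $z$ and writing $a_t=a_t(o)$ for the solution of~\eqref{eq:ode} and $J_t:=\nabla_o a_t$ (so $J_0=\nabla_o z=0$), differentiating the ODE in $o$ gives the linear matrix ODE
\[
\dot J_t = A_t J_t + B_t,\qquad A_t:=\nabla_a \flowideal_t(a_t\mid o),\quad B_t:=\nabla_o \flowideal_t(a_t\mid o),
\]
with $\|B_t\|\le L$ by hypothesis. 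By Duhamel's formula, $J_1=\int_0^1 \Phi(1,s)B_s\,\rmd s$, where $\Phi(\cdot,s)$ is the state-transition matrix solving $\partial_t\Phi(t,s)=A_t\Phi(t,s)$, $\Phi(s,s)=I$. Since $\|\Phi(1,s)B_s\|\le\|\Phi(1,s)\|_{\op}\|B_s\|$ for both the operator and Frobenius norms, it suffices to prove $\int_0^1\|\Phi(1,s)\|_{\op}\,\rmd s\le\sqrt{1+\kappa^{-1}}$.

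The crux is to bound $A_t$. Writing the posterior mean $\widehat a_t(x):=\Exp[a\mid I_t=x,o]$, the interpolant identity $t\widehat a_t(x)+(1-t)\Exp[z\mid I_t=x]=x$ yields $\flowideal_t(x\mid o)=(\widehat a_t(x)-x)/(1-t)$, hence $A_t=(\nabla_x\widehat a_t-I)/(1-t)$. Because $I_t\mid a\sim\Normal(ta,(1-t)^2 I)$, Tweedie's identity gives $\nabla_x\widehat a_t=\tfrac{t}{(1-t)^2}\Sigma_t$ with $\Sigma_t=\mathrm{Cov}(a\mid I_t=x,o)$ a posterior covariance. The posterior log-density $\log p(a\mid o)-\tfrac{1}{2(1-t)^2}\|x-ta\|^2$ has negative Hessian at least $(\kappa+t^2/(1-t)^2)I$, so Brascamp--Lieb gives $\Sigma_t\preceq(\kappa+t^2/(1-t)^2)^{-1}I$. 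Combining, $A_t$ is symmetric with $\lambda_{\max}(A_t)\le g(t):=\frac{t-\kappa(1-t)}{q(t)}$, where $q(t):=\kappa(1-t)^2+t^2$. The key simplification is the algebraic identity $q'(t)=2\big(t-\kappa(1-t)\big)$, i.e. $g=\tfrac12(\log q)'$. A Gr\"onwall estimate for symmetric $A_t$ (differentiating $\|\Phi(1,s)v\|^2$) then gives $\|\Phi(1,s)\|_{\op}\le\exp\!\int_s^1\lambda_{\max}(A_\tau)\,\rmd\tau\le\exp\!\big(\tfrac12\log\tfrac{q(1)}{q(s)}\big)=q(s)^{-1/2}$, using $q(1)=1$.

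It remains to integrate. Completing the square, $q(s)=(\kappa+1)(s-c)^2+c$ with $c=\kappa/(\kappa+1)$, so the substitution $s-c=\sqrt{c/(\kappa+1)}\,\sinh\phi$ gives
\[
\int_0^1\frac{\rmd s}{\sqrt{q(s)}}=\frac{\operatorname{arcsinh}(\sqrt\kappa)+\operatorname{arcsinh}(1/\sqrt\kappa)}{\sqrt{\kappa+1}}\le\frac{\sqrt\kappa+1/\sqrt\kappa}{\sqrt{\kappa+1}}=\sqrt{1+\kappa^{-1}},
\]
where the inequality is the elementary bound $\operatorname{arcsinh}(x)\le x$ for $x\ge 0$. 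Chaining the three displays proves~\eqref{eq:informal} for both norms.

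I expect the main obstacle to be the velocity-Jacobian bound: making the Tweedie/Brascamp--Lieb step rigorous (integrability, differentiation under the integral sign, and pinning down the precise log-concave posterior constant), and verifying the algebraic coincidence $g=\tfrac12(\log q)'$ that causes the transition-matrix bound to integrate to the clean factor $\sqrt{1+\kappa^{-1}}$. The Gr\"onwall step additionally relies on $A_t$ being symmetric, which holds since $\Sigma_t$ is a covariance; some care is also needed as $t\to 1$, where the individual factors $1/(1-t)$ blow up but cancel in $g$.
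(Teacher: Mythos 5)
Your proof is correct and shares the paper's overall skeleton (the variational equation $\dot J_t = A_tJ_t+E_t$, Duhamel's formula, and a Gr\"onwall bound on the state-transition matrix), but it differs in two substantive ways. First, where the paper imports the bound $\nabla_a b^\star_t \preccurlyeq \frac{\kappa\alpha_t\dot\alpha_t+\beta_t\dot\beta_t}{\kappa\alpha_t^2+\beta_t^2}\eye$ as a black box from \citet{daniels2025contractivity}, you rederive it from first principles via the posterior-covariance (Tweedie) representation $\nabla_x\hat a_t=\tfrac{t}{(1-t)^2}\Sigma_t$ and the Brascamp--Lieb inequality; your derivation matches the cited constant exactly, and your use of $\lambda_{\max}$ of the (symmetric) $A_t$ in the Gr\"onwall step is in fact the technically correct reading --- the paper's \Cref{lemma:avg velocity bound} phrases the same estimate as a bound on $\|\nabla_a b^\star_{t}\|_{\op}$, which cannot be literally right for small $t$ where the claimed bound $\frac{t-\kappa(1-t)}{\kappa(1-t)^2+t^2}$ is negative, though the paper's downstream use only ever needs the one-sided eigenvalue bound you prove. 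Second, the final step differs: the paper applies H\"older's inequality as $\sup_s q(s)^{-1/2}\cdot\int_0^1\|E_s\|\,\rmd s$ with $q(s)=\kappa(1-s)^2+s^2$ minimized at $s=\kappa/(\kappa+1)$, whereas you bound $\|E_s\|\le L$ pointwise and evaluate $\int_0^1 q(s)^{-1/2}\rmd s$ exactly as $\frac{\operatorname{arcsinh}(\sqrt\kappa)+\operatorname{arcsinh}(1/\sqrt\kappa)}{\sqrt{\kappa+1}}$, which is strictly smaller than $\sqrt{1+\kappa^{-1}}$ before you relax it with $\operatorname{arcsinh}(x)\le x$. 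Both routes land on the same constant; yours buys a self-contained proof of the key Jacobian lemma and a marginally sharper intermediate bound, at the cost of the extra integrability/differentiation-under-the-integral care you already flag.
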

See \Cref{app:theory} for a formal statement and proof\iftoggle{iclr}{.}{, adopting a careful argument from \citet{daniels2025contractivity}.}
A classical example of a log concave distribution is $a \mid o \sim \Normal(\mu(o), \frac{1}{\kappa})$; as long as the variance $1/\kappa$ is bounded \emph{above} (even in the limit of a Dirac\iftoggle{iclr}{}{ measure}), there is at most a constant-multiplicative factor increase in the Lipschitz constant.
When training a flow, $\flowideal_t(a \mid o)$ is approximated by the neural network.
Thus, in the prototypical unimodal example of $\kappa$-log-concave  distributions, GCPs are not arbitrarily more expressive than RCPs.
In fewer words: \takeawaybold{more integration steps (i.e. more iterative computation), even infinitely many, need not enable greater expressivity of high Lipschitz $o\to a$ mappings}.

\begin{wraptable}{r}{0.5\textwidth}
    \iftoggle{iclr}{\vspace{-0.2cm}}{}
    \centering
    \scriptsize
    \begin{tabular}{@{}l c c c c c@{}}
        \toprule
        \textbf{Method} & \multicolumn{2}{c}{\pusht} & \kitchen       & \multicolumn{2}{c}{\toolhang}                                   \\
        \cmidrule(lr){2-3} \cmidrule(lr){5-6}
                        & State                      & Image          & State                         & State          & Image          \\
        \midrule
        \Regression     & {\tiny $0.90$}             & {\tiny $0.55$} & {\tiny $14.07$}               & {\tiny $1.71$} & {\tiny $1.65$} \\
        \Flow           & {\tiny $0.45$}             & {\tiny $0.20$} & {\tiny $12.43$}               & {\tiny $1.41$} & {\tiny $1.37$} \\
        \bottomrule
    \end{tabular}
    \vspace{-0.25cm}
    \caption{
        \footnotesize
        \textbf{Policy Lipschitz constant comparison.}
        Lipschitz constant is averaged over 100 states.
    }
    \label{tab:compact_lipschitz}
    \iftoggle{iclr}{\vspace{-0.6cm}}{}
\end{wraptable}

To verify our theoretical prediction, we quantify learned policies' Lipchitz constants with a zeroth-order proxy:
starting from dataset states $s_t$ with observation $o_t$, we inject small Gaussian perturbations in the executed action to reach a \emph{feasible} nearby state $s_{t+1}^{(i)}$ with observation $o_{t+1}^{(i)}$, then measure input--output sensitivity via finite differences of the policy around the perturbed states (full algorithm and per-architecture results in \cref{sec:lipchitz_study_details}).
This construction (i) avoids reliance on noisy higher-order gradients in complex architectures, and (ii) keeps evaluations on feasible observation to prevent conflating expressivity with model error on dynamically infeasible states.
As predicted by our theory, GCPs are not strictly more expressive than RCPs as shown in \cref{tab:compact_lipschitz}.
On the contrary, RCPs show increased Lipschitz constants off the manifold of training data, ruling out the assumption that GCPs win due to expressing policies with greater sensitivity to the input variable. We note that our methodology, which perturbs actions rather than states, is compatible pixel observations. To summarize:
\iftoggle{arxiv}{
    \hrule
    \begin{quote}
        In the absence of multimodality, GCPs do not enjoy an advantage over RCPs in  expressing high Lipschitz behavior, such as  rapid transitions between action modes.
    \end{quote}
    \hrule
}{
    \takeawaybold{In the absence of multimodality, GCPs do not enjoy an advantage over RCPs in  expressing high Lipschitz behavior, such as  rapid transitions between action modes.}
}

\iftoggle{arxiv}{
    \subsection{ GCPs and RCPs Exhibit Comparable Behavior Diversity}\label{sec:diversity}

    \begin{figure}[ht]
        \centering
        \includegraphics[width=1.0\linewidth]{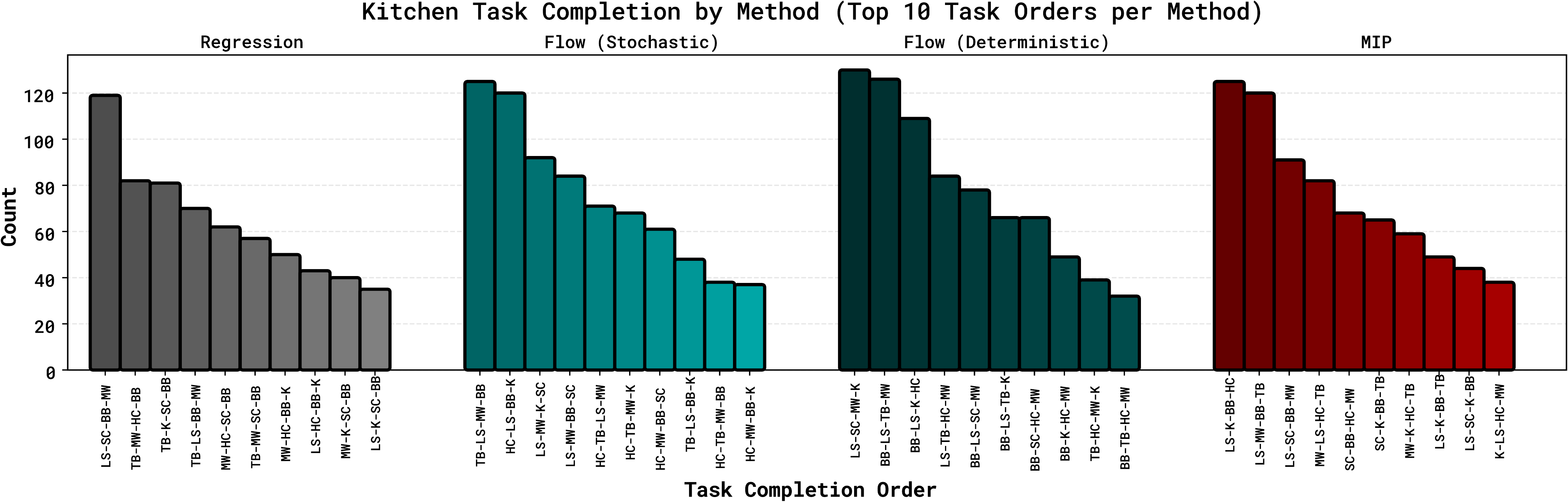}
        \captionof{figure}{\footnotesize \textbf{Task completion order in Kitchen environment with different methods.}
            We plot the count of different task completion orders for different methods to evaluate the diversity of the policies.
            The x-axis shows the task completion order, where each sub-task is represented by its initials.
            For each run, we collect 1000 trajectories with the same seed shared by all methods.
            For flow, we evaluate both stochastic and deterministic modes.
        }
        \label{fig:kitchen_methods_comparison}
    \end{figure}

    We conclude by rebutting a commonly believed hypothesis is that GCPs can express more diverse behaviors than RCPs by capturing the full distribution of expert actions~\citep{shafiullah2022behavior}.\footnote{Note that the expert might be stochastic but unimodal, so the findings in this section do not directly follow form those in \Cref{sec:multimodality}.}
    We evalute different variants of GCPs and RCPs on \KitchenEnv, where the expert shows multiple task completion orders.
    As demonstrated in \cref{fig:kitchen_methods_comparison}, GCPs with both stochastic and deterministic sampling show similar task completion order diversity.
    Deterministic policies like regression and \Minimaliterativepolicy (to be introduced in \Cref{sec:parsing}) also demonstrate similar task completion order diversity.
    This indicates that, given sparse expert demonstrations, both GCPs and RCP learns high-Lipschitz policies to switch between different modes given different observations (corresponding to (b.2) case in~\Cref{fig:multimodality_vs_lipschitz}).  RCPs and GCPs are equally good at learning such behaviors (\Cref{fig:kitchen_methods_comparison}),  which explain why we see similar performance for both policy parametrizations, even on seemingly multi-modal tasks like \KitchenEnv.
}
{}

\newcommand{\pirr}{\pi^{\Residualregression}}
\newcommand{\pisf}{\pi^{\Straightflow}}
\newcommand{\pimip}{\pi^{\Minimaliterativepolicy}}
\newcommand{\bsf}{b^{\Straightflow}}
\newcommand{\tfix}{t_{\star}}
\newcommand{\pimipideal}{\pi^{\Minimaliterativepolicy,\mathrm{ideal}}}

\newcommand{\ahat}{\hat{a}}
\newcommand{\ahatts}{\hat{a}^{\algname{TSD}}}
\newcommand{\pits}{\pi^{\algname{TSD}}}
\newcommand{\ahatmip}{\hat{a}^{\Minimaliterativepolicy}}

\section{ Minimal Iterative Policy (\Minimaliterativepolicy): Isolating the Source of GCPs' Success }

\label{sec:parsing}

In this section, we introduce a number of intermediates between RCPs and GCPs that isolate which design decisions contribute to the latter's superior performance. This leads to a Minimal Iterative Policy (\Minimaliterativepolicy), which matches GCPs performance, thereby identifying the source of GCPs' success.

\iftoggle{iclr}{}{
}

\label{sec:core_design_components}
We begin with a taxonomy of the  three key algorithmic components (\iftoggle{arxiv}{\Cref{fig:taxonomy}}{\Cref{fig:teaser}}) present in GCPs.
\iftoggle{iclr}
{
    \cref{sec:algorithms_with_different_design_components}
    below proposes algorithmic variants which ablate these components. We find that \minimaliterativepolicy (\Minimaliterativepolicy, \Cref{comp:sic,comp:stoch}) is the reduced variant which matches the performance of flow (\Cref{sec:performance_comparison}), whereas other variants match or perform worse than regression.
}
{}

\begin{component}\label{comp:distr}
    {}\takeawaybold{Distributional learning} denotes training a model to fit a conditional distribution $\gen \sim \pi_\theta(\cond)$ of actions given observations, as opposed to deterministic predictions (i.e., $\gen = \pi_\theta(\cond)$). \footnotemark
\end{component}

\begin{component}\label{comp:stoch}\takeawaybold{Stochasticity injection} denotes the injection of additional stochastic inputs into the neural network during training time (e.g., the variable $z$ in \Cref{eq:flow}).
\end{component}

\begin{component}\label{comp:sic}\takeawaybold{Supervised Iterative Computation (SIC)} denotes the iterative refinement of predictions by feeding the previous outputs into the same network again during inference, {and providing \emph{supervision signals} at every step of the generation procedure at training time}. For example, in flow GCPs, we integrate a supervised flow field $b_t(\gen_t \mid \cond)$ over time to get the final action $\gen$, and that $b_t$ receives an independent supervisory signal for each $t$ at training time (\Cref{eq:flow}).
\end{component}

\iftoggle{arxiv}{
    \begin{figure}[h]
        \centering
        \includegraphics[width=1.0\textwidth]{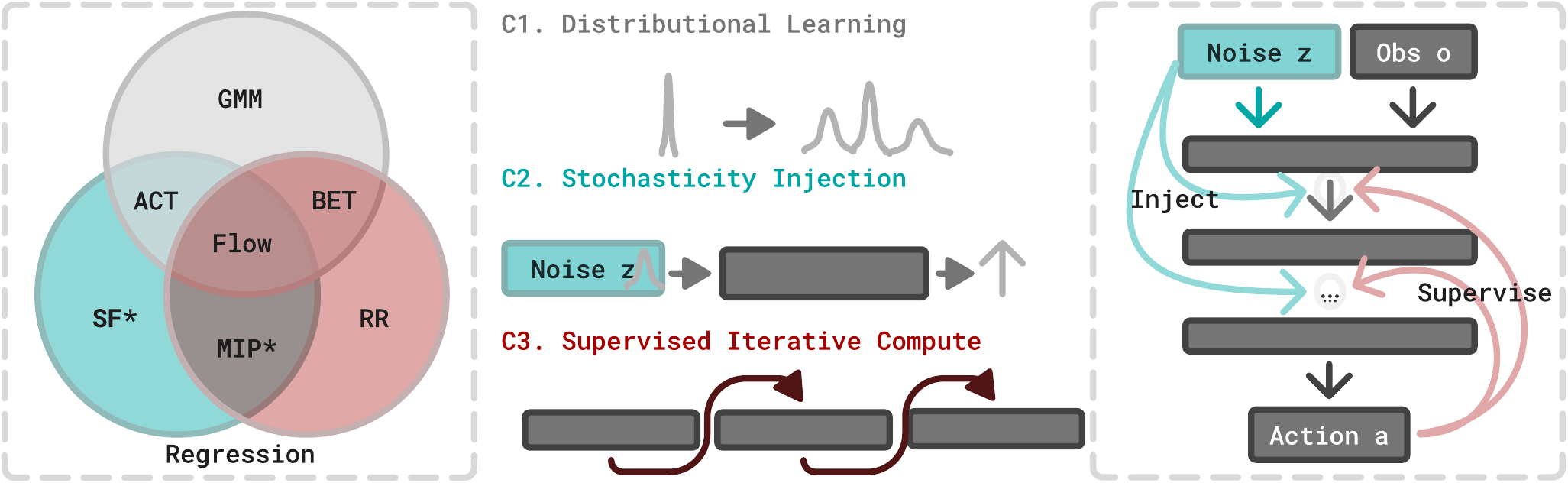}
        \caption{
            \textbf{Taxonomy of GCPs.} We elucidate 3 key design components of GCPs: distributional learning (fitting a distribution), stochasticity injection (injecting noise during training), and supervised iterative computation (multiple generation steps, each with its own supervised learning loss).
            To ablate different design components, we introduce Straight Flow (\Straightflow, \Cref{eq:straight_flow}), Residual Regression (\Residualregression, \Cref{eq:residual_regression}) and Minimal Iterative Policy (\Minimaliterativepolicy, \Cref{eq:mip_training}), which are variants of flow that only exhibit \Cref{comp:stoch} and \Cref{comp:sic}, respectively.}
        \label{fig:taxonomy}
    \end{figure}
}{
}
\footnotetext{Note that \Cref{comp:distr} refers to \emph{training} a model to fit a conditional distribution, not necessarily to the sampling. For example, training $b_\theta$ via flow model but conducting deterministic inference with $\Phieul(z = 0 \mid o)$ is still considered distributional learning.}

\iftoggle{iclr}
{
}
{
    From here, \cref{sec:algorithms_with_different_design_components}
    proposes algorithmic variants which ablate these components: two novel variants we call \minimaliterativepolicy (\Minimaliterativepolicy, \Cref{comp:sic,comp:stoch}) and straight-flow (\Straightflow, \Cref{comp:stoch} only), as well as a residual regression baseline (\Residualregression, \Cref{comp:sic} only). We evaluate the performance of different variants on challenging tasks in~\cref{sec:performance_comparison}, {finding that \Minimaliterativepolicy exhibits virtually the \emph{same} performance as \Flow across tasks}, whereas \Straightflow matches the performance of \Regression and \Residualregression exhibits even {worse performance}.
    This establishes that \Cref{comp:sic,comp:stoch}: SIC, when combined with {stochasticity injection}, drive performance. Finally, we contrast \Minimaliterativepolicy with other popular step policies ( \Cref{sec:mip_v_shortcut}).
}

\subsection{\Minimaliterativepolicy: a minimal  intermediate between RCPs and GCPs}
\label{sec:algorithms_with_different_design_components}

We introduce a range of policies which lie along the spectrum between RCP and flow-based GCPs via varying combinations of \Cref{comp:sic,comp:stoch}, culminating in the Minimal Iterative Policy (\Minimaliterativepolicy). These policies do not satisfy  \Cref{comp:distr}, because   \Cref{sec:multimodality,sec:diversity} suggests that this is not needed.
In particular, we consider networks $\pi_\theta(o,I_t,t)$ that predict \emph{actions}, not velocities, and given observations $o$, time indices $t$, and interpolants $I_t$  corresponding to noising actions. We state all networks below of $L_2$ minimization, but our findings remain consistent when minimizing $L_1$ error instead (\Cref{sec:loss_norm_type_ablation_study}).

\iftoggle{arxiv}{
    \iclrpar{Regression as Single-Step Denoising.} We begin by expression a regression policies (RCPs) as solving a single-step denoising problem, obtained by minimizing the $L_2$ prediction error of the action given observation and null action interpolant:
    \begin{align}
        \iclrstyle \pi^{\algname{RCP}}_{\theta} \approx  \argmin_\theta \underset{}{\Exp} \left[\|(\pi_{\theta}(o,
            I_0 = 0,t=0) - a\|^2\right].
        \label{eq:regression}
    \end{align}
    where $(o,a) \sim \Dtrain$.%
    In the limit of infinity data, RCPs predict the conditional mean of $a \mid o$ by mapping any noise $z$ to the same action $a$ given $o$.\footnote{Note that in our comparisons between RCP and GCP (\Cref{sec:dispelling}) in, we use the \Cref{eq:regression} to implement RCPs on GCP architectures.}

    \iclrpar{Straight Flow} (\Straightflow, ours). Next we introduce Straightflow (\Straightflow), which adds only  stochasticity injection \Cref{comp:stoch} to RCPs. This is achieved by setting the interpolant $I_0$ to be Gaussian:
    \begin{align}
        \pisf_{\theta} \approx \iftoggle{iclr}{%
        }{} \argmin_\theta \Exp \iftoggle{iclr}{\textstyle}{} \|\pi_{\theta}(o, I_0 = z, t=0) - a\|^2,
        \label{eq:straight_flow}
    \end{align}
    where $(o,a) \sim \Dtrain, z \sim \Normal(0,\eye)$.
    Inference is performed in a single step, by setting $a = \pisf_{\theta}(o,z,t=0)$. Equivalently, \Straightflow can be viewed as a flow model in which the flow field is constrained to be straight.

    Like RCPs, the optimal  \Straightflow policy is  the {conditional mean} of $a \mid o$. The only difference between the two is injection of stochastic input $z$ during training.
    Our experiments with \Straightflow precisely isolate this effect---for example, determining if the additional stochasticity during training improves learning dynamics, or behaves like data augmentation.
    Like \Minimaliterativepolicy below, we set $I_0 = 0$ at inference time, as stochasticity at inference time has little effect on policy performance.
}{}

\iclrpar{Two-Step Denoising.}
\iftoggle{arxiv}{As a next step towards GCPs,  we now consider a \takeawaybold{two-step denoising} (\algname{TSD}) policy. }{
    As a simplification of flow-based GCPs, we consider a \takeawaybold{two-step denoising} (\algname{TSD}) policy.
}
As discussed in \iftoggle{arxiv}{\Cref{rem:shortcut}}{\Cref{sec:comp_consistency_models}}, this parametrization is superficially similar to, but substantively different than, popular flow-map/consistency/shortcut models \citep{boffiHowBuildConsistency2025}.
\algname{TSD} performs two steps of denoising, one from zero, and a second from a fixed index $\tfix = .9$:
\begin{align}
    \iclrstyle \pits_{\theta} \approx  \argmin_\theta \underset{}{\Exp} \left[\|(\pi_{\theta}(o,
    I_0 = z,t=0) - (\tfix)^{-1}I_{\tfix})\|^2 + \|(\pi_{\theta}(o, I_{\tfix}, \tfix) - a)\|^2\right].
    \label{eq:tsd}
\end{align}
where $(o,a) \sim \Dtrain, z \sim \Normal(0,\eye)$, and $I_t = ta + (1-t)z$ is the same interpolant used in flow models, and  where $\tfix = .9$ is fixed. The normalization  by $\tfix$ in \Cref{eq:tsd} comes from the identity $\tfix a = \Exp_z[I_{\tfix}]$.
We then sample
$ \ahatts_0 \gets \pi_{\theta}(o,z,0)$ and $\ahatts \gets \pi_{\theta}(o, \tfix   \ahatts_0 + (1-\tfix)z,\tfix)$.

\iclrpar{Minimal Iterative Policy.} We find that $\pits$ performs equivalently to a minimal policy which only adds training noise in the second step and has no stochasticity at inference time, which we call the \minimaliterativepolicy.

\begin{AIbox}{Minimal Iterative Policy ($\Minimaliterativepolicy$; ours) }
    Minimal Iterative Policy  (\Minimaliterativepolicy), representing \Cref{comp:sic,comp:stoch}, is trained via
    \begin{align}
        \iclrstyle  \pimip_{\theta} \approx & %
        \argmin_\theta \underset{}{\E} (\|(\pi_{\theta}(o,I_0 = 0,t=0) -   a)\|^2 + \|(\pi_{\theta}(o, I_{\tfix}, \tfix) - a)\|^2),
        \label{eq:mip_training}
    \end{align}
    where $(o,a) \sim \Dtrain, z \sim \Normal(0,\eye), \tfix:=.9$. At inference time, we compute:
    \begin{align}
        \ahatmip_0\gets \pimip_{\theta}(o, 0, t = 0), \quad \ahatmip \gets \pimip_{\theta}(o, \tfix \ahatmip_0, \tfix).
        \label{eq:pimip_inference}
    \end{align}
\end{AIbox}
\mminimaliterativepolicy provides a \emph{minimal} implementation that still exhibits competitive performance with flow. Starting, with \algname{TSD} and replace $(\tfix)^{-1}I_{\tfix}$ in the first term of the loss in \Cref{eq:tsd} with its expectation $a=(\tfix)^{-1}\Exp[I_{\tfix}]$.  We set the initial noise $I_0 = 0$ to be zero, so that $z$ only contributes to the second training loss. Finally, we sample with $z=0$  to isolate the effect of adding {stochasticity at training time},  without stochasticity at inference time \iftoggle{iclr}{(c.f.~\cref{tab:mean_action}) }{(as suggested by~\cref{tab:mean_action})}.
Since we provide supervision for both first step $\pimip_\theta(o, I_0=0, t=0)$ and second step $\pimip_\theta(o, I_0=I_{\tfix}, t=\tfix)$ with ground truth action $a$, \Minimaliterativepolicy also exemplifies SIC in its simplest form. We compare \Minimaliterativepolicy to Shorctu Models in \iftoggle{arxiv}{\Cref{rem:shortcut}}{\Cref{sec:comp_consistency_models}}.

{
\iclrpar{Additional methods.}
}

\iftoggle{arxiv}{}{
    Straight Flow (\Straightflow, ours), representing only \Cref{comp:stoch}, further simplifies
    MIP to a single stage by setting the interpolation index $t^* = 1$ and removing the second term: $\pi_{\Straightflow_\theta} \approx \E \left[\|\pi_{\theta}(o,z,t = 0) - a\|^2\right]$.}
Finally, we study \residualregression (\Residualregression), which replaces $I_{\tfix}$ in \Cref{eq:mip_training} with its expectation over $z$: $\Exp[I_{\tfix}]=\tfix a$. This preserves SIC (\Cref{comp:sic}) yet removes stochasticity injection.
Full details are provided in
\cref{sec:minimum_iterative_policy_design_ablation}.

To summarize, \minimaliterativepolicy (\Minimaliterativepolicy), \straightflow (\Straightflow) and \residualregression (\Residualregression) represent all combinations of \Cref{comp:sic,comp:stoch} without exhibiting \Cref{comp:distr}.

\subsection{\Cref{comp:sic,comp:stoch} drive performance: \Minimaliterativepolicy matches \Flow }
\label{sec:performance_comparison}

\begin{figure}[!ht]
    \centering
    \iftoggle{iclr}{
        \vspace{-0.3cm}
    }{}
    \includegraphics[width=1.0\textwidth]{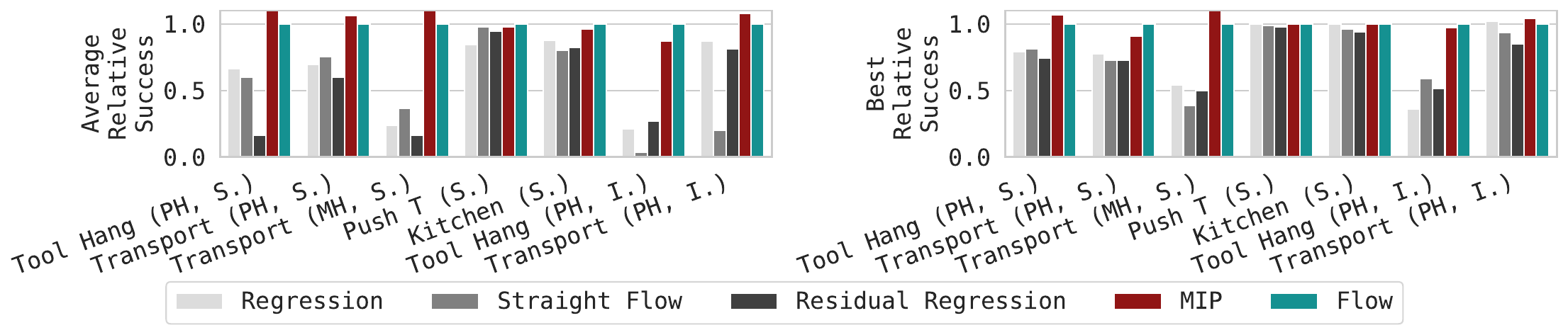}
    \vspace{-0.5cm}
    \caption{
        \footnotesize
        \textbf{Performance comparison between \Minimaliterativepolicy and its variants on single-task benchmarks.}
        Average relative success rate on worst architecture and the best relative success rate on optimal architecture are reported. ``$S$'': state; ``$I$'': image.
    }
    \iftoggle{iclr}{
        \vspace{-0.5cm}
    }{}
    \label{fig:mip_vs_flow_vs_regression}
\end{figure}

\iftoggle{arxiv}{
    \begin{table}[h]
        \centering
        \small
        \begin{tabular}{l|cccc}
            \toprule
            \textbf{Method}             & \liberoobject & \liberogoal   & \liberospatial & \liberoten    \\
            \midrule
            \regression ($\ell_2$ loss) & 92.6          & 94.6          & 97.2           & 78.0          \\
            \regression ($\ell_1$ loss) & 95.2          & 88.0          & 95.8           & 62.4          \\
            \Flow                       & \textbf{97.4} & 95.0          & 95.8           & 81.6          \\
            \Minimaliterativepolicy     & 95.8          & \textbf{95.2} & \textbf{97.6}  & \textbf{82.2} \\
            \bottomrule
        \end{tabular}
        \caption{\textbf{Performance comparison on multi-task \libero benchmark.}
            We report the success rate of the checkpoint trained with 50k gradient steps of finetuning \pizero on the full \libero dataset. We implement \Minimaliterativepolicy with $t^* = 0.9$ and integrate flow with 10 steps.
            For regression, we train with both $\ell_2$ and $\ell_1$ loss as suggested in~\citep{kim2024openvla}.
        }
        \label{tab:mip_multitask}
    \end{table}
}
{
}

Based on the design space parsing in~\cref{sec:parsing}, we are able to systematically ablate different design components' contribution to the final performance in~\cref{fig:mip_vs_flow_vs_regression,tab:mip_multitask}.
Our evaluation shows that either  stochasticity injection (\Cref{comp:stoch}, exhibit by \Straightflow) or supervised iterative computation (\Cref{comp:sic}, exhibited by \Residualregression)  in isolation do not match the success of GCPs.
\Minimaliterativepolicy, being the only method which combines \emph{supervised} iterative computation and stochasticity injection, achieves success on par with flow.
Thus we conclude: the performance of GCPs comes from combining stochastic injection and iterative computation. Distributional training appears to be the least important factor.%
\iftoggle{arxiv}{
}{
    To further rule out the effect of distributional training and demonstrate the computation efficiency of \Minimaliterativepolicy, we compare it with consistency models in \Cref{sec:comp_consistency_models}, where we find that \Minimaliterativepolicy matches the performance of common consistency models with half of the training time.
}
\begin{remark}
    \Cref{sec:mip_variants_results} exhibits two further variants which preserve \Cref{comp:stoch,comp:sic}: one that does not supervise intermediate steps, and a second which does not condition a time step $\tfix$. The latter does not enable network to learn separate functions across time steps. Both perform even worse than regression, confirming the importance of supervision of intermediate steps and decoupling network behavior across time steps.
\end{remark}

\iftoggle{arxiv}{
    \subsection{ \Minimaliterativepolicy  compares favorably to shortcut policies}
    \label{sec:mip_v_shortcut}
    \label{rem:shortcut}
    \Minimaliterativepolicy is superficially similar to Shortcut Models~\citep{boffiFlowMapMatching2025,boffiHowBuildConsistency2025,songConsistencyModels2023,gengMeanFlowsOnestep2025}, as  both perform inferences in few-steps.  Shortcut models correctly learn target distributions (i.e. satisfy \Cref{comp:distr}) by integrating a flow field. On the other hand,  \Minimaliterativepolicy are trained to predict the conditional mean of the interpolant, which is not a valid objective for distribution fitting. The performance of \Minimaliterativepolicy supports our overall theme that, in robotic control applications, faithfully capturing the full conditional distribution over actions is not needed for control performance.

    \begin{wraptable}{r}{0.5\textwidth}
        \centering
        \scriptsize
        \begin{tabular}{l|ccc}
            \toprule
            \textbf{Method}         & \multicolumn{2}{c}{\transport} & \toolhang                                        \\
            \midrule
                                    & mh                             & ph                 &                             \\
            \midrule
            \Flow                   & 0.52/0.40                      & 0.80/\textbf{0.73} & 0.84/0.70                   \\
            \Minimaliterativepolicy & \textbf{0.62}/\textbf{0.46}    & 0.80/0.69          & \textbf{0.92}/\textbf{0.88} \\
            \algname{CTM}           & 0.57/0.32                      & \textbf{0.90}/0.58 & 0.56/0.26                   \\
            \bottomrule
        \end{tabular}
        \caption{\textbf{Performance comparison between \Minimaliterativepolicy and shortcut policies.} Report best/average performance across 5 checkpoints with 3 random seeds. Task is state-based. For \algname{CTM}, we report the performance with 2 integration steps, which is the same as \Minimaliterativepolicy. Note that \Minimaliterativepolicy is always best or near-best on average-over-seed performance, whereas CTM's average  performance struggles. }
        \label{tab:mip_vs_shortcut}
    \end{wraptable}

    While being competitive with flow models performance-wise, \Minimaliterativepolicy takes less integration steps (number of function evaluations (NFEs) = 2) compared to flow models (NFEs = 9).
    To further validate the computation efficiency of \Minimaliterativepolicy, we compare it with consistency models which accelerate the sampling process of flow by distilling the learned flow into a shortcut model~\citep{songConsistencyModels2023,boffiFlowMapMatching2025,fransOneStepDiffusion2024,gengMeanFlowsOnestep2025}.
    We benchmark \Minimaliterativepolicy against consistency trajectory model (CTM)~\citep{kim2023consistency}, where latter is trained in two-stage manner.
    Thus, CTM requires twice as many training time compared to \Minimaliterativepolicy.
    As shown in \Cref{tab:mip_vs_shortcut}, \Minimaliterativepolicy matches, and often outperforms CTM on most challenging tasks since CTM exhibits certain level of performance degradation compared to the teacher flow models.
    This again highlights that the fact that distributional learning is not necessary condition for GCPs performance and bypassing it offers computation efficiency at training and inference time.
    We further compare \Minimaliterativepolicy with other few-step methods like Lagrangian map distillation (LMD)~\citep{boffiFlowMapMatching2025} and present full results in \Cref{sec:comp_consistency_models}.
}{
}

\section{Inductive Bias, not Expressivity, Explains \Minimaliterativepolicy's Performance}
\label{sec:whatworks}

\subsection{Manifold adherence, not reconstruction, drives performance}
\label{sec:manifold_adherence}

\iftoggle{iclr}{
    \renewcommand{\arraystretch}{0.9}
    \begin{wrapfigure}{r}{0.5\textwidth}
        \setlength{\tabcolsep}{3pt}
        \centering
        \scriptsize
        \vspace{-0.4cm}
        \begin{tabular}{@{}l c c c c c@{}}
            \toprule
            \textbf{Metric}    & \textbf{\Regression} & \textbf{\Straightflow} & \textbf{\Residualregression} & \textbf{\Minimaliterativepolicy} & \textbf{\Flow} \\
            \midrule
            Off-manifold $L_2$ & $0.067$              & $0.063$                & $0.062$                      & $0.054$                          & $0.042$        \\
            Validation $L_2$   & $0.290$              & $0.234$                & $0.224$                      & $0.195$                          & $0.217$        \\
            \bottomrule
        \end{tabular}
        \captionof{table}{\footnotesize \textbf{Comparison of different methods on manifold adherence and reconstruction error. }
            Results are averaged across 3 different architectures and 32 states on state-based \toolhang with deterministic dataset.
        }
        \vspace{-0.5cm}
        \label{tab:off_manifold}
    \end{wrapfigure}
}{
    \renewcommand{\arraystretch}{0.9}
    \begin{wrapfigure}{r}{0.6\textwidth}
        \setlength{\tabcolsep}{3pt}
        \centering
        \scriptsize
        \vspace{-0.0cm}
        \begin{tabular}{@{}l l c c c c c@{}}
            \toprule
            \textbf{Dataset}               & \textbf{Metric}    & \textbf{\Regression} & \textbf{\Straightflow} & \textbf{\Residualregression} & \textbf{\Minimaliterativepolicy} & \textbf{\Flow} \\
            \midrule
            \multirow{4}{*}{Original}      & Off-manifold $L_2$ & $0.058$              & $0.061$                & $0.057$                      & $0.043$                          & $0.032$        \\
                                           & Off-manifold $L_1$ & $0.072$              & $0.073$                & $0.071$                      & $0.057$                          & $0.046$        \\
                                           & Validation $L_2$   & $0.073$              & $0.071$                & $0.062$                      & $0.069$                          & $0.074$        \\
                                           & Validation $L_1$   & $0.110$              & $0.106$                & $0.124$                      & $0.104$                          & $0.116$        \\
            \midrule
            \multirow{4}{*}{Deterministic} & Off-manifold $L_2$ & $0.067$              & $0.063$                & $0.062$                      & $0.054$                          & $0.042$        \\
                                           & Off-manifold $L_1$ & $0.082$              & $0.078$                & $0.077$                      & $0.063$                          & $0.051$        \\
                                           & Validation $L_2$   & $0.290$              & $0.234$                & $0.224$                      & $0.195$                          & $0.217$        \\
                                           & Validation $L_1$   & $0.336$              & $0.374$                & $0.386$                      & $0.331$                          & $0.356$        \\
            \bottomrule
        \end{tabular}
        \captionof{table}{\footnotesize \textbf{Comparison of different methods on manifold adherence and reconstruction error. }
            Results are averaged across 3 different architectures and 32 states on state-based \toolhang.
            Validation $L_2$/$L_1$ norm is evaluated on validation set from expert trajectories.
            Off-manifold $L_2$/$L_1$ norm is evaluated on out-of-distribution states.
        }
        \label{tab:off_manifold}
    \end{wrapfigure}
}

\Minimaliterativepolicy, and the absence of multimodality, suggest a better ability to approximate the expert more accurately on training data.
We test this by evaluating the $L_2$-error, i.e., reconstruction error, on validation set.
Surprisingly, we find that \Minimaliterativepolicy, \Flow, and RCP exhibit the \emph{same} validation loss; hence validation loss does predict their relative performance.
\Cref{sec:validation_loss_is_not_a_good_proxy_for_policy_performance} reveals that validation loss doesn't correlate with performance across other axes of variation. Indeed, policy performance requires taking good actions on \emph{o.o.d. states} under compounding error at deployment time \citep{simchowitz2025pitfalls}.

\iftoggle{arxiv}{
    \begin{wrapfigure}{r}{0.3\textwidth}
        \centering
        \includegraphics[width=1.0\linewidth]{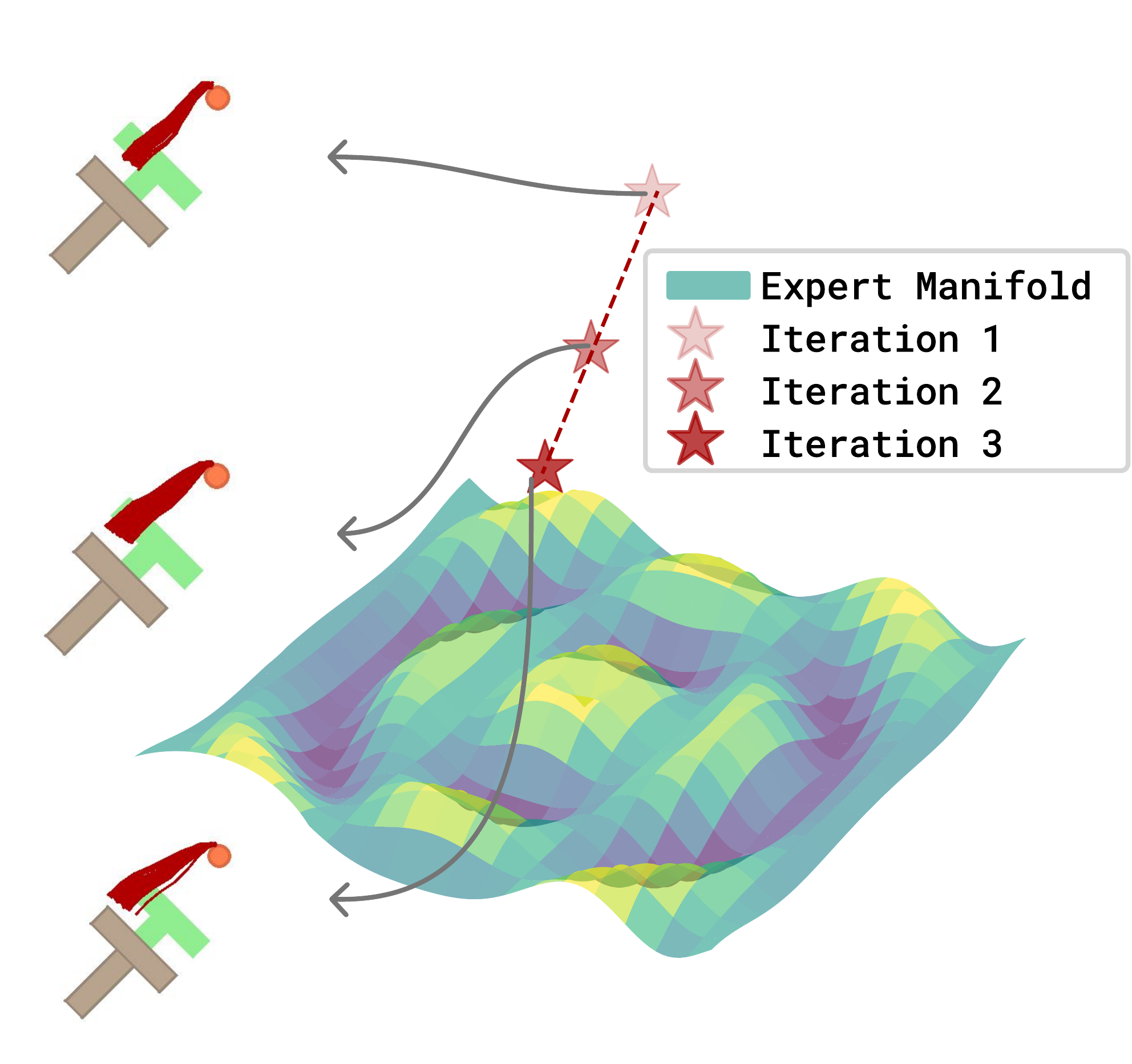}
        \captionof{figure}{
            \footnotesize \textbf{Manifold adherence illustration.}
            Sampled trajectories in \pusht tasks from flow model with different NFEs.
        }
        \label{fig:manifold_adherence}
    \end{wrapfigure}
}{}

Thus, we study a proxy which reflects performance in o.o.d. situations. We perturb expert trajectories in dataset as described in \Cref{sec:lipchitz_evaluation_method}, and evalute a novel metric that we call the \emph{off-manifold norm}. Informally, this measures the projection error of a predicted action $a$ onto the space spanned by expert actions at neighboring states; see~\cref{sec:manifold_adherence_evaluation_details} a for formal definition. Our metric assesses the quality of actions under simulated compounding error.
\Cref{tab:off_manifold} reports both $L_2$ validation loss and off-manifold $L_2$ norm for different methods: while all methods achieve low validation loss, only \Minimaliterativepolicy and \Flow are able to achieve low off-manifold $L_2$ norm, indicating their better manifold adherence. As \Straightflow does not exhibit the same benefit,
we conclude that supervised iterative computation facilitates projection onto the manifold of expert actions by refining the prediction across sequential steps.
\iftoggle{arxiv}{
    \Cref{fig:manifold_adherence} provides additional illustraion of manifold hypothesis: with more iterations, flow model samples more plausible trajectories, which goes to the side of T-shape object rather than colliding right into it.
}{}
\Cref{app:toy} provides additional confirmation of this hypothesis on comprehensive toy experiments:  GCPs are no better than RCP at fitting high frequency functions, but exhibit lower on-manifold error, suitably defined.

\iclrpar{Why manifold adherence matters for control.} We conjecture that, for high-precision tasks, the sensitivity to errors is not homogeneous across error directions in action space. Our findings present preliminary evidence that some form an ``on-manifold inductive bias'' directly aligns with minimizing error along relevant directions, yet is permissive to error in directions of lesser consequence. We think that rigorously establishing this hypothesis is an exicting direction for future work.

\iclrpar{No known mechanism accounts for greater manifold adherence in GCPs vs. RCPs.} There is a growing body of literature that shows that, if training data are supported on a given low dimensional manifold $\cM$, then generative models learn to project onto $\cM$~\citep{boffi2024shallowdiffusionnetworksprovably,permenter2024interpretingimprovingdiffusionmodels}. However, to our knowledge, there is no work that explains why this inductive bias would be \emph{stronger} than what would be achieved with a well-trained regression model. Specifically,  if   $o \mid a$ lies in some (local) manifold, regression too should learn to project onto it.

One might conjecture that the iterative computation provides many changes to predict an action that ``stick'' to the action manifold. However, such a mechanism would require that  once an on-manifold action is predicted, subsequent predictions do not nudge the prediction off-manifold.
In \Cref{app:reg}, we show that
simple arguments based on implicit regularization in linear models do not suffice to explain this hypothesis, at least for \Minimaliterativepolicy. Much like the usefulness of manifold adherence for control described above, the mechanism behind manifold adherence remains a mystery for future study.

\subsection{Stochasticity stabilizes iterative computation}
\label{sec:stochasticity_injection}

\iftoggle{arxiv}{
    \begin{wrapfigure}{r}{0.3\textwidth}
        \centering
        \includegraphics[width=1.0\linewidth]{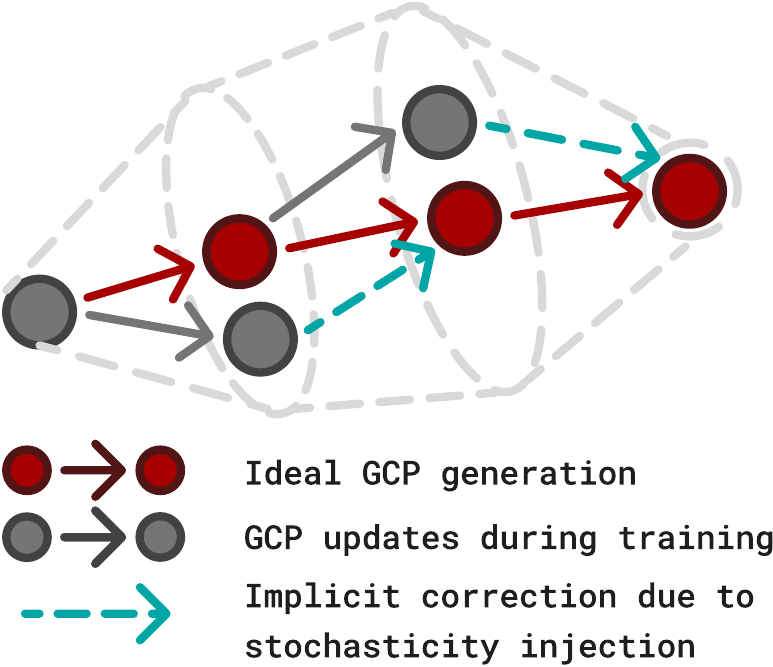}
        \captionof{figure}{
            \footnotesize \textbf{Stochasticity Stabilizes Iteration.}
            Noise injection broadens the generation path into a ``tube.''
            This creates provides supervision when the imperfectly trained GCP goes off-distribution , ensuring robust iterative computation.
        }
        \label{fig:stochasticity_injection}
    \end{wrapfigure}
}{
}

We recall from \Cref{fig:mip_vs_flow_vs_regression} that \Straightflow matches regression, whilst  \Residualregression under-performs regression. This suggests that sequential action generation is highly brittle in the absence of stochasticity
\citep{permenter2024interpretingimprovingdiffusionmodels}.
Our findings support the hypothesis that stochasticity injection serves to provide ``coverage'' of the generative process\iftoggle{arxiv}{ as illustrated in \Cref{fig:stochasticity_injection}}{}.
Note that this is different from task MDP-level augmentation like image augmentation or exploratory data collection since the augmentation happens in iterative generative process.
Specifically, we can think of learning to perform two-stage action generation as an ``internal'' behavior cloning problem \citep{ren2024diffusion} under the dynamics induced by the generative process.
Injecting stochasticity amounts to enhancing coverage of the action $\ahat_0$ in the first step of \Minimaliterativepolicy, thus enable iterative improvement with more NFEs (\Cref{sec:method_nfe_comparison}).
Its benefits are analogous to trajectory noising effective in other behavior cloning applications \citep{laskey2017dart,block2023butterfly,block2024provable,simchowitz2025pitfalls,zhang2025actionchunkingexploratorydata}.
Similar benefits are found in the improved sensitivity analysis of diffusion relative to flows \citep{albergo2024stochastic}.

\subsection{Architecture remains essential for  scaling}
\label{sec:architecture}
\begin{figure}[!ht]
    \centering
    \includegraphics[width=1.0\textwidth]{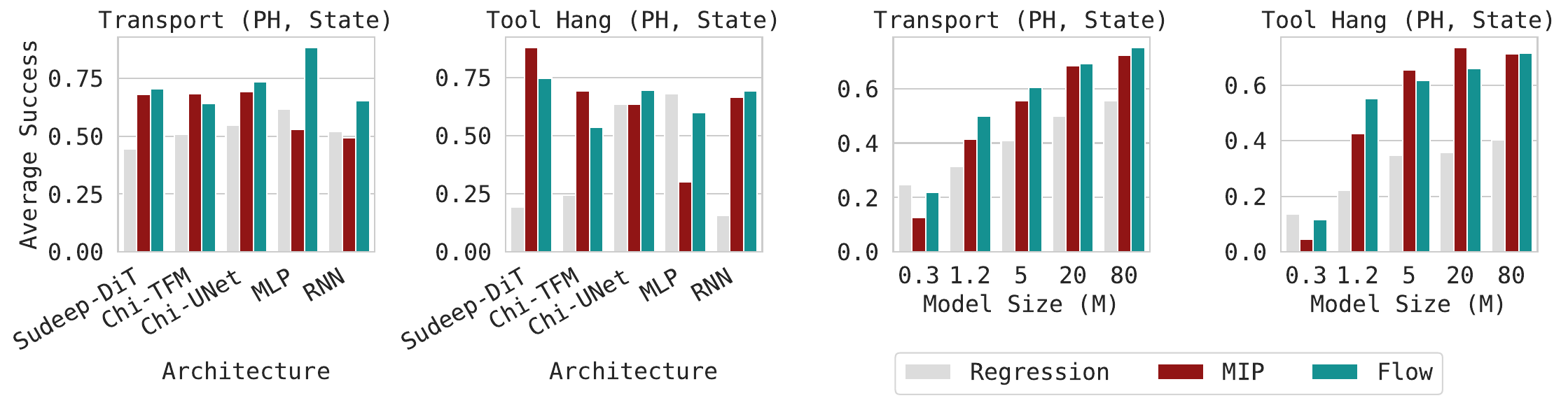}
    \vspace{-0.7cm}
    \caption{\footnotesize \textbf{Architecture and model size ablation.} Success rate are averaged across 3 seeds and 5 checkpoints on \toolhang and \transport tasks. Left 2 plots: architecture ablation. Right 2 plots: Model size ablation. \iftoggle{iclr}{}{While all methods performance scales with model size, regression can outperform flow and \Minimaliterativepolicy with smaller capacity.}\iftoggle{iclr}{}{, highlighting the importance of aligning the model capacity when comparing different methods.}}
    \label{fig:architecture_and_model_size}
    \vspace{-0.5cm}
\end{figure}
\iftoggle{iclr}{Regression}{While all methods do scale, regression},
enjoys stronger relative performance at the smallest model sizes but scales more poorly than flow and \Minimaliterativepolicy with increased model capacity (\cref{fig:architecture_and_model_size}).
We conjecture that supervised iterative computation can better utilize larger models, both by introducing more supervision steps at training, and by providing more parameters to represent different computations at successive generation steps.
Nevertheless, \emph{architecture design} plays an incredibly significant role.
To showcase its importance, we ablate the performance of different method's average performance across both the 3 architectures above, and the more traditional \mlp and \rnn architectures, implemented with modern best practices including FiLM conditioning \citep{perez2018film}, and skip-connections \citep{he2016deep}/LayerNorm \citep{ba2016layer} where appropriate (details in \Cref{sec:control_architecture}).
As demonstrated in \cref{fig:architecture_and_model_size}, the combination of training method and architecture design has a strong yet somewhat erratic effect on both GCPs and RCP performance.
In \toolhang, RCP achieves the best performance with an \mlp architecture. In \transport, \mlp with flow can even outperform more expressive architectures like \chitransformer.
The coupling between training and architecture choice  highlights the importance of controlling architecture design when comparing across methods.

\newcommand{\atsign}{\makeatletter @ \makeatother }
\section{Related Work}
\label{sec:related}

\iclrpar{Robotic Behavior Cloning. }
Behavior cloning (BC), also known as learning from demonstrations (LfD), has become a popular paradigm to enable robots to conduct complex, diverse and long-horizon manipulation tasks by learning from expert demonstrations~\citep{argallSurveyRobotLearning2009,zhuRobotLearningDemonstration2018,zhaoLearningFineGrainedBimanual2023,chiUniversalManipulationInterface2024,lin2024data}.
In parallel, ``robot foundation models'' scale BC with internet-pretrained vision-language transformer-based backbones~\citep{brohan2022rt,zitkovich2023rt,o2024open} and large-scale teleoperation datasets~\citep{kim2024openvla,teamOctoOpenSourceGeneralist2024}.
More recently, to better model continuous actions, generative models like diffusion and flow have been adopted to replace the tokenization method in transformers to achieve more expressive policies~\citep{nvidia2025gr00t,black2024pi_0,intelligence$p_05$VisionLanguageActionModel2025,liu2024rdt}.
This work focuses on the generative modeling part of the behavior cloning pipeline, ablating the key design choices that lead to the success of generative control policies.

\iclrpar{Generative Modeling.}
The recent success of behavior cloning policies is built upon a rapid evolution of generative modeling techniques, starting from tokenization methods~\citep{brownLanguageModelsAre2020,chenDecisionTransformerReinforcement2021,pertschFASTEfficientAction2025} and adversarial methods~\citep{brockLargeScaleGAN2019,goodfellow2020generative,ho2016generative}.
Later, probabilistic generative models with iterative computation like diffusion models~\citep{ho2020denoising,songScoreBasedGenerativeModeling2021,lu2025dpm,songDenoisingDiffusionImplicit2022,nichol2021improved,karras2022elucidating} became a popular choice for generative modeling thanks to their better training stability and sampling quality.
Flow models~\citep{lipman2023flow,albergo2022building,liu2022flow} and consistency/shortcut models~\citep{songConsistencyModels2023,songImprovedTechniquesTraining2023,meng2023distillation,boffiFlowMapMatching2025,gengMeanFlowsOnestep2025} were later developed to achieve faster sampling while maintaining the expressivity of diffusion models.
Though there have been extensive studies on probabilistic generative modeling's effectiveness in image and text generation~\citep{lee2023convergence,chenSamplingEasyLearning2023}, its mechanism in control, especially the key design choices, are still opaque in decision making.

\iclrpar{Generative Control Policies.}
To model diverse and complex behaviors, GCPs parameterize the relationship between observations and actions as a distribution rather than a deterministic function.
Early works use transformers with tokenizers~\citep{chenDecisionTransformerReinforcement2021,shafiullah2022behavior}, energy functions~\citep{florence2022implicit,dasariIngredientsRoboticDiffusion2024} and VAEs~\citep{zhaoLearningFineGrainedBimanual2023} to parameterize the distribution.
Diffusion models~\citep{reuss2023goal,chi2023diffusion,ke20243d,dongCleanDiffuserEasytouseModularized2024,janner2022planning,yangEquiBotSIM3EquivariantDiffusion2024} were introduced for their better expressivity of complex and multi-modal behaviors, followed by flow-based~\citep{zhang2024flowpolicy,black2024pi_0,intelligence$p_05$VisionLanguageActionModel2025} and flow-map/consistency-model/shortcut-model-based acceleration methods~\citep{hu2024adaflow,prasadConsistencyPolicyAccelerated2024,shengMP1MeanFlowTames2025}.

\iclrpar{Theoretical Literature on GCPs.} \citet{block2024provable} established that GCPs can imitate arbitrary expert distributions. Given our findings on the absence of multi-modality, a more closely related theoretical findings is that of \citet{simchowitz2025pitfalls}, which elucidates how GCPs can circumvent certain worst-case compounding error phenomena in continuous-control imitation learning.
Though the proposed mechanism is different, that finding is conceptually similar to our own: GCPs benefits arise from their favorable out-of-distribution properties, rather than raw expressivity of fitting in-distribution expert behavior.

\subsection{Previous Works' Connection with GCP's Taxonomy.}
\label{sec:previous_works_connection}

We classify GCPs into three components: distributional learning, stochasticity injection, and supervised iterative computation.
Starting from regression, it has none of the three components.
To model a more complex distribution, Gaussian Mixture Model (GMM)~\citep{zhuRobotLearningDemonstration2018} was used to parameterize the distribution, trained with cross entropy loss.
To make the network be able to represent more complex distirbutions, prior to diffusion, non-parametric method like VAEs~\citep{zhaoLearningFineGrainedBimanual2023} was used to parameterize the distribution, trained with reconstruction loss.
During the training, a latent variables is predicted to predict the style the motion by mapping it from a noise $z$.
Another line of work try to improve the policy expressivity by introducing iterative compute, like implicit behavior cloning~\citep{florence2022implicit,dasariIngredientsRoboticDiffusion2024} and behavior transformer~\citep{shafiullah2022behavior}.
In IBC, the idea is to allow the network predict the energy function of the action rather the action itself.
Compared to diffusion, the major difference is that they do not explicitly injecting noise during training and no intermediate supervision is provided for the intermediate results.
Similarly, in behavior transformer, a two step policy is introduced to first predict the policy class and then refine it with another network to achieve higher precision control. 
Lastly, flow-based GCPs~\citep{zhang2024flowpolicy,black2024pi_0,intelligence$p_05$VisionLanguageActionModel2025}, which holds all the three components and demonstrate state-of-the-art performance on popular benchmarks.
In this paper, we look into a new combination that haven't been explored before, which is the combination of stochasticity injection and supervised iterative computation.

\section{Discussion}

Our comprehensive evaluation reveals a fundamental divergence between the objectives of generative modeling in vision or text and those in robotic control. We demonstrate that for control, fitting the exact data distribution (\componentref{comp:distr}) is secondary; rather, the inductive bias of manifold adherence—facilitated by stochastic iterative computation (\componentref{comp:stoch}+\componentref{comp:sic})—is paramount. This insight not only demystifies the success of GCPs but also enables the design of streamlined architectures like \Minimaliterativepolicy.

\iclrpar{Theoretical Gaps.} While we empirically identify manifold adherence as a proxy for closed-loop performance, a theoretical framework explaining why stochastic supervision with MSE loss induces this behavior remains elusive. Developing this theoretical grounding is a critical next step to replace exhaustive empirical benchmarking with principled policy design.

\iclrpar{Broader Applications.} Finally, our analysis focuses on behavior cloning. It remains an open question whether the benefits of the \componentref{comp:stoch}+\componentref{comp:sic} paradigm persist in other settings, such as RL-finetuning, large-scale pretraining, or long-horizon planning. Future work should explore whether the "myths" of generative control hold true in these broader domains.

\section*{Acknowledgements}
MS and GA acknowledge a TRI University 2.0 Fellow and Google Robotics Research Award. MS and CP thank Nur Muhummad (Mahi) Shuffiulah for his insightful feedback, and thank MS also thanks Aviral Kumar, Sarvesh Patil, and Andrej Risteski for their thoughtful suggestions. GS holds concurrent appointments as an Assistant Professor at Carnegie Mellon University and as an Amazon Scholar. This paper describes work performed at Carnegie Mellon University and is not associated with Amazon.

{
  \bibliographystyle{iclr2026_conference}
  \bibliography{refs}
}
\newpage

\tableofcontents
\newpage
\appendix
\iftoggle{iclr}{
    \section{Learning Dynamics of GCPs and RCPs Given Different Types of Data}
    \label{app:different_learning_dynamics}
    \begin{figure}[h]
        \centering
        \includegraphics[width=1.0\linewidth]{figs/multimodality_vs_lipschitz.pdf}
        \caption{\textbf{GCP behavior given different types of data.}
            (a) Given true multi-modal data (a.1), GCPs learn both modes while RCPs collapse (a.2).
            (b) Given sparse data (b.1) in high-dimensional space, both policies can learn high-Lipschitz policies to quickly switch between modes (b.3). We find both \textbf{GCPs and RCPs} learn (b.3) in high-dimensional tasks (\Cref{sec:multimodality}). \Cref{thm:informal} suggests that GCPs have a limited advantage over RCPs from a pure expressivity perspective.
        }
        \label{fig:multimodality_vs_lipschitz}
    \end{figure}
}{}

\section{Additional Policy Parametrizations}
\label{sec:minimum_iterative_policy_design_ablation}

This section further elaborates the design space of \Minimaliterativepolicy in stochasticity injection, iterative computation and intermediate supervision.

\subsection{Full Abalation of \Minimaliterativepolicy Variants}

This section formally describes the training process of all \Minimaliterativepolicy with different stochasticity injection and supervised iterative computation design.

\paragraph{Residual Regression (\Residualregression)} removes all stochasticity in training and the training objective is:

\begin{align}
     & \pi^{\Residualregression}_{\theta} \approx \iftoggle{iclr}{\textstyle}{} \argmin_\theta \Exp_{(o,a) \sim \Dtrain, z \sim \Normal(0,\eye)} \\\iftoggle{iclr}{\textstyle}{} &\left(\|(\pi_{\theta}(o,I_0 = 0,t=0) -  \tfix a)\|^2 + \|(\pi_{\theta}(o, \stopgrad(\pi_{\theta}(o,I_0 = 0,t=0)), \tfix) - a)\|^2\right).
    \label{eq:residual_regression}
\end{align}

\paragraph{Two-Step Denoising (\algname{TSD})} The training objective is:

\begin{align*}
     & \pi^{\algname{TSD}}_{\theta} \approx \iftoggle{iclr}{\textstyle}{} \argmin_\theta \Exp_{(o,a) \sim \Dtrain, z \sim \Normal(0,\eye)} \\\iftoggle{iclr}{\textstyle}{} &\left(\|(\pi_{\theta}(o,I_0,t=0) -  \tfix a)\|^2 + \|(\pi_{\theta}(o, \stopgrad(\pi_{\theta}(o,I_0,t=0)) + (1-\tfix)z, \tfix) - a)\|^2\right).
\end{align*}

where $I_0 = z$. Compared to \Minimaliterativepolicy, \algname{TSD} adds stochasticity to both first step training.

\paragraph{\Minimaliterativepolicy with Data Augmentation (\algname{MIP-Dagger})}
To understand the importance of decoupling for enabling iterative computation, we propose an additional variant of \Minimaliterativepolicy that lies between \Minimaliterativepolicy and \Residualregression, where the two steps are partially coupled. Since the training method of second iteration is similar to data augmentation, we call this variant \algname{MIP-Dagger}:

\begin{align*}
     & \pi^{\algname{MIP-Dagger}}_{\theta} \approx                           %
    \argmin_\theta \underset{(o,a) \sim \Dtrain, z \sim \Normal(0,\eye)}{\E} \\ & (\|(\pi_{\theta}(o,I_0 = 0,t=0) -  \tfix a)\|^2 + \|(\pi_{\theta}(o, \tfix \stopgrad(\pi_{\theta}(o,I_0 = 0,t=0)) + (1-\tfix)z, \tfix) - a)\|^2),
\end{align*}

where the major difference compared to \Minimaliterativepolicy is the second step takes in the interpolant between first step output and noise rather than the action and noise.

\paragraph{\Minimaliterativepolicy without intermediate supervision (\algname{MIP-NoSupervision})}
To understand the effect of intermediate supervision on iterative computation, we propose one variant of \Minimaliterativepolicy that removes the supervision of intermediate computation steps while preserving stochasticity injection at training time, named \algname{MIP-NoSupervision}:

\begin{align*}
     & \pi^{\algname{MIP-NoSupervision}}_{\theta} \approx                                               %
    \argmin_\theta \underset{(o,a) \sim \Dtrain, z \sim \Normal(0,\eye)}{\E}                            \\
     & (\|(\pi_{\theta}(o, \tfix \stopgrad(\pi_{\theta}(o,I_0 = 0,t=0)) + (1-\tfix)z, \tfix) - a)\|^2),
\end{align*}

where the first step's output is unsupervised.

\paragraph{\Minimaliterativepolicy without $t$ conditioning} By removing $t$ conditioning in \Minimaliterativepolicy, it degenerates to \Straightflow. Here we present the multi-step integration process for straight flow when action distribution is Dirac delta. The integrator from $s$ to $t$ is:

\begin{align*}
    a_t = \frac{t - s}{1 - s} \pi_{\theta}(o, s \cdot a_s) + \frac{1 - t}{1 - s} a_s
\end{align*}

\subsection{Additional Noise Injection Methods}
\label{sec:additional_noise_injection_methods}

While \Minimaliterativepolicy only injects noise to action, we also explore the possibility of injecting noise to observation.
We propose two variants of \Minimaliterativepolicy: \algname{MIP-Obs} and \algname{MIP-Dagger-Obs}.
In \algname{MIP-Obs}, we perturb the first step's observation with noise $z$, while the second step's training is the same as the original \Minimaliterativepolicy with decoupled training.
In \algname{MIP-Dagger-Obs}, we perturb the first step's observation with noise $z$, and the second step's training is conditioned on the first step's output, making it similar to Dagger.
Major differnce compared to the original MIP: perturb the first step's observation.
In both variants, we fixed $t_* = 0.9$ and all observation perturbation happens at observation embedding space with normalized features.

\begin{align*}
    \tiny
    \pi^{\textsc{MIP-Obs}}_{\theta}        & \approx \argmin_\theta \underset{\substack{(o + (1-t_*)z,a) \sim p_{\text{train}}                                                               \\ z \sim \mathcal{N}(0,I)}}{\mathbb{E}} \Bigg[ \|(\pi_{\theta}(o + \textcolor{primalcolor}{(1-t_*)z},I_0 = 0,t=0) -  t_* a)\|^2                                                                                                   \\
                                           & \quad + \|(\pi_{\theta}(o, I_{t_*}, t_*) - a)\|^2 \Bigg]                                                                                        \\
    \pi^{\textsc{MIP-Dagger-Obs}}_{\theta} & \approx \argmin_\theta \underset{\substack{(o,a) \sim p_{\text{train}}                                                                          \\ z \sim \mathcal{N}(0,I)}}{\mathbb{E}} \Bigg[ \|(\pi_{\theta}(o+\textcolor{primalcolor}{{(1-t_*)z}},I_0 = 0,t=0) -  t_* a)\|^2                                                                                                   \\
                                           & \quad + \|(\pi_{\theta}(o, t_* \text{stopgrad}(\pi_{\theta}(o+\textcolor{primalcolor}{{(1-t_*)z}},I_0 = 0,t=0)) + 1-t_*)z, t_*) - a)\|^2 \Bigg]
\end{align*}

We find that perturbing observations introduces data conflicts and degrades performance (\cref{tab:observation_perturbation_comparison}). In a two-step model, selecting noise levels that prevent observation overlap becomes challenging and brittle, leading to training instability across architectures.

\begin{table}[h]
    \centering
    \begin{tabular}{ll|cc}
        \toprule
        Architecture    & Method (L2)              & \transport (ph) & \toolhang (ph) \\
        \midrule
        \chitransformer & \Regression              & 0.50/0.45       & 0.50/0.37      \\
        \chitransformer & \Minimaliterativepolicy  & 0.79/0.69       & 0.92/0.85      \\
        \chitransformer & \algname{MIP-Dagger-Obs} & 0.00/0.00       & 0.00/0.00      \\
        \chitransformer & \algname{MIP-Obs}        & 0.61/0.46       & 0.13/0.08      \\
        \chitransformer & \Flow                    & 0.81/0.71       & 0.89/0.75      \\
        \midrule
        \sudeepdit      & \Regression              & 0.65/0.54       & 0.31/0.25      \\
        \sudeepdit      & \Minimaliterativepolicy  & 0.80/0.69       & 0.80/0.72      \\
        \sudeepdit      & \algname{MIP-Dagger-Obs} & 0.00/0.00       & 0.00/0.00      \\
        \sudeepdit      & \algname{MIP-Obs}        & 0.00/0.00       & 0.00/0.00      \\
        \sudeepdit      & \Flow                    & 0.79/0.65       & 0.73/0.61      \\
        \midrule
        \chiunet        & \Regression              & 0.66/0.59       & 0.73/0.59      \\
        \chiunet        & \Minimaliterativepolicy  & 0.81/0.72       & 0.82/0.71      \\
        \chiunet        & \algname{MIP-Dagger-Obs} & 0.00/0.00       & 0.00/0.00      \\
        \chiunet        & \algname{MIP-Obs}        & 0.00/0.00       & 0.00/0.00      \\
        \chiunet        & \Flow                    & 0.83/0.75       & 0.87/0.73      \\
        \bottomrule
    \end{tabular}
    \caption{Performance comparison of different methods with observation perturbation on state-based tasks. For each methods and architecture, we report the average and best performance across 5 checkpoints with 3 random seeds.}
    \label{tab:observation_perturbation_comparison}
\end{table}

\subsection{Experiment Results}
\label{sec:mip_variants_results}

We benchmark all methods on the \toolhang task, given it is the one with the largest gap between RCP and GCPs.
From~\cref{tab:mip_variants}, we can see that the important part is to add stochasticity injection between two iterations, and intermediate supervision is also important to realize the potential of iterative computation.

\begin{table}[ht]
    \centering
    \begin{tabular}{@{}l c c@{}}
        \toprule
        \textbf{Method}             & \textbf{NFEs} & \textbf{Success Rate} \\
        \midrule
        \algname{TSD}               & 2             & 0.80                  \\
        \algname{MIP}               & 2             & 0.80                  \\
        \algname{MIP-NoSupervision} & 2             & 0.42                  \\
        \algname{MIP-Dagger}        & 2             & 0.64                  \\
        \algname{RR}                & 2             & 0.54                  \\
        \Straightflow               & 1             & 0.54                  \\
        \Straightflow               & 3             & 0.55                  \\
        \Straightflow               & 9             & 0.52                  \\
        \bottomrule
    \end{tabular}
    \caption{Success rates across different \Minimaliterativepolicy variants and \Residualregression on \toolhang task over 5 checkpoints across 3 architectures.}
    \label{tab:mip_variants}
\end{table}

\section{Control Experiments}
\label{sec:control_settings}

\subsection{Task Settings}
\label{sec:task_settings}

This section introduces all the tasks presented in the main paper.
To reach a sound conclusion, use common benchmarks appears in previous works:

\paragraph{\taskname{Robomimic}}
Robomimic~\citep{robomimic2021} is a large-scale robotic manipulation benchmark designed to study imitation learning and offline reinforcement learning.
It contains five manipulation tasks (\lifttask, \can, \squaretask, \transport, \toolhang) with \emph{proficient human (PH)} teleoperated demonstrations, and for four of them, additional \emph{mixed proficient/non-proficient human (MH)} demonstration datasets are provided (9 variants in total).
We report results on both \emph{state-based} and \emph{image-based} observations, since these two modalities pose distinct challenges.
Among the tasks, \toolhang requires extremely precise end-effector positioning and fine-grained contact control, while \transport demands high-dimensional control and coordination over extended horizons.

\paragraph{\pusht}
\pusht~\citep{florence2022implicit} is adapted from the Implicit Behavior Cloning (IBC). The task involves pushing a T-shaped block to a fixed target location using a circular end-effector. Randomized initializations of both the block and the end-effector introduce significant variability. The task is contact-rich and requires modeling complex object dynamics for precise block placement. Two observation variants are considered: (\emph{i}) raw RGB image observations and (\emph{ii}) state-based observations containing object pose and end-effector position.

\paragraph{\kitchen}
The Franka \kitchen environment is designed to test the ability of IL and offline RL methods to perform long-horizon, multi-task manipulation.
It includes 7 interactive objects, with human demonstration data consisting of 566 sequences, each completing 4 sub-tasks in arbitrary order (e.g., opening a cabinet, turning a knob).
Success is measured by completing as many of the demonstrated sub-tasks as possible, regardless of order. This setup explicitly introduces both short-horizon and long-horizon multimodality, requiring policies to generalize across compositional tasks.

\paragraph{\metaworld}
\metaworld is a large-scale suite of diverse manipulation tasks built in MuJoCo, where agents must perform challenging object interactions using a robotic gripper. We adopt the 3D observation setting using point cloud representations, ported from the DP3 framework~\citep{ze3DDiffusionPolicy2024}, to better evaluate geometric reasoning and spatial generalization. Tasks in MetaWorld are categorized into different difficulty levels, with benchmarks testing few-shot adaptation and multi-task transfer learning.

\paragraph{\adroit}
\adroit is a suite of dexterous manipulation tasks featuring a 24-DoF anthropomorphic robotic hand.
Tasks include pen rotation, door opening, and object relocation, all of which demand precise, coordinated multi-finger control. Following DP3~\citep{ze3DDiffusionPolicy2024}, we use point cloud observations to capture fine-grained 3D object-hand interactions. Policies are trained using VRL3, highlighting the challenges of high-dimensional control and sim-to-real transfer in dexterous manipulation.

\paragraph{\libero}
\libero is a common multi-task benchmark to evaluate VLA's generalization ability. It is composed of 130 tasks and can be categorized into multiple categories, including object, goal, spatial, and 10-task. The 10-task is long horizon and considered the most challenging to solve.

\subsection{Architecture Design}
\label{sec:control_architecture}

We study four policy backbones—\chitransformer, \sudeepdit, \chiunet, \rnn, and \mlp—under a common training recipe and data interface. Unless otherwise specified, \emph{all models are capacity-matched to $\sim$20M parameters} to enable fair comparison.

\paragraph{\chiunet} is adopted from Diffusion Policy~\citep{chi2023diffusion} which built on top of 1D temporal U-Net~\citep{janner2022planning} with FiLM conditioning~\citep{perez2018film} on observation $o$ and flow time $t$.
\chiunet has a strong inductive bias for the temporal structure of the action and tends to smooth out the action.

\paragraph{\chitransformer} follows the time–series diffusion transformer from Diffusion Policy~\citep{chi2023diffusion}, where the noisy action tokens $a_t$ form the input sequence and a \emph{positional embedding} of the flow time $t$ is prepended as the first token; observations $o$ are mapped by a shared MLP into an observation-embedding sequence that conditions the decoder stack.
Compared to \chiunet, \chitransformer\ uses token-wise self-attention over the whole action sequence, thus can model less-smooth and more complex actions.

\paragraph{\sudeepdit} is a DiT-style (Diffusion Transformer) conditional noise network specialized for policies adopted from DiT-Policy~\citep{dasariIngredientsRoboticDiffusion2024}: observation $o$ are first encoded into observation vectors;
the flow time $t$ is embedded via \emph{positional embedding}; an encoder–decoder transformer then fuses these with initial noise $z$ to predict next action.
The key ingredient of \sudeepdit is replacing standard cross-attention with \emph{adaLN-Zero} blocks—adaptive LayerNorm modulation using the mean encoder embedding and the time embedding, with zero-initialized output-scale projections—stabilizing diffusion training at scale.
Compared to \chitransformer, \sudeepdit has adaLN-based conditioning (instead of vanilla cross-attention) and an explicit encoder-decoder split, yielding better training stability.

\paragraph{\rnn}
The \rnn backbone processes sequences with a stacked LSTM/GRU.
For each action time step in the chunk, the input vector concatenates: the current noised action $a_t$, a time embedding for $t$, and a observation embedding for $o$. The RNN outputs are fed to a MLP head with LayerNorm$+$ApproxGELU$+$Dropout blocks before output the action with final linear head.
All linear and recurrent weights use \emph{orthogonal initialization} (biases zero), and RNN layer dropout is applied when depth$>$1.

\paragraph{\mlp}
The \mlp\ backbone flattens the action and observation, appending the time embedding.
Each mlp block has LayerNorm, ApproxGELU and Dropout blocks with residual connection and \emph{orthogonal} weight initialization throughout. Each block output is then modulated with FiLM conditioning.

\paragraph{\archname{DP3}} built on top of \chiunet with extra 3d perception encoder. We use the exact same architecture as 3D diffusion policy~\citep{ze3DDiffusionPolicy2024}.

\paragraph{Model hyperparameters}
In the main experiments, we align the model capacity to 20M parameters for default if not specified, with detailed hyperparameters report in \Cref{tab:model_hyperparameters}.

\begin{table}[ht]
    \centering
    \label{tab:backbones}
    \setlength{\tabcolsep}{6pt}
    \begin{tabular}{lcccc}
        \toprule
        Backbone        & Heads & Layers & Embedding dim & Dropout \\
        \midrule
        \sudeepdit      & 8     & 8      & 256           & 0.1     \\
        \chiunet        & --    & --     & 256           & --      \\
        \chitransformer & 4     & 8      & --            & 0.1     \\
        \rnn            & --    & 8      & 512           & 0.1     \\
        \mlp            & --    & 8      & 512           & --      \\
        \bottomrule
    \end{tabular}
    \caption{Model hyperparameters. }
    \label{tab:model_hyperparameters}
\end{table}

\subsection{Finetuning \pizero on \libero}
\label{sec:pizero_finetuning}

For \pizero finetuning experiments, we use \texttt{lerobot} framework~\citep{cadene2024lerobot} to finetune \pizero on \libero.
Our flow-based finetuning experiments match their reported results.
To finetune \pizero to regression policy, we use the same architecture but set the initial noise always to zero and let the model directly predict the action.
To finetune \pizero to \Minimaliterativepolicy, we use the same practice where we modify the time step $t$ to be uniformly sample from $\{0, t^*\}$ uniformly and set the initial noise to zero.
We train all policies until convergence with 50k gradient steps on 1 node with 8 H100 GPUs.

\subsection{Full Results for Flow and Regression Comparison}
\label{sec:gp_vs_rp_full_result}

In the paper, we only present the aggregated results across 3 architectures. \Cref{fig:rp_vs_gp_avg} present the full results across all architectures with different training methods.

\begin{figure}[ht]
    \centering
    \includegraphics[width=1.0\textwidth]{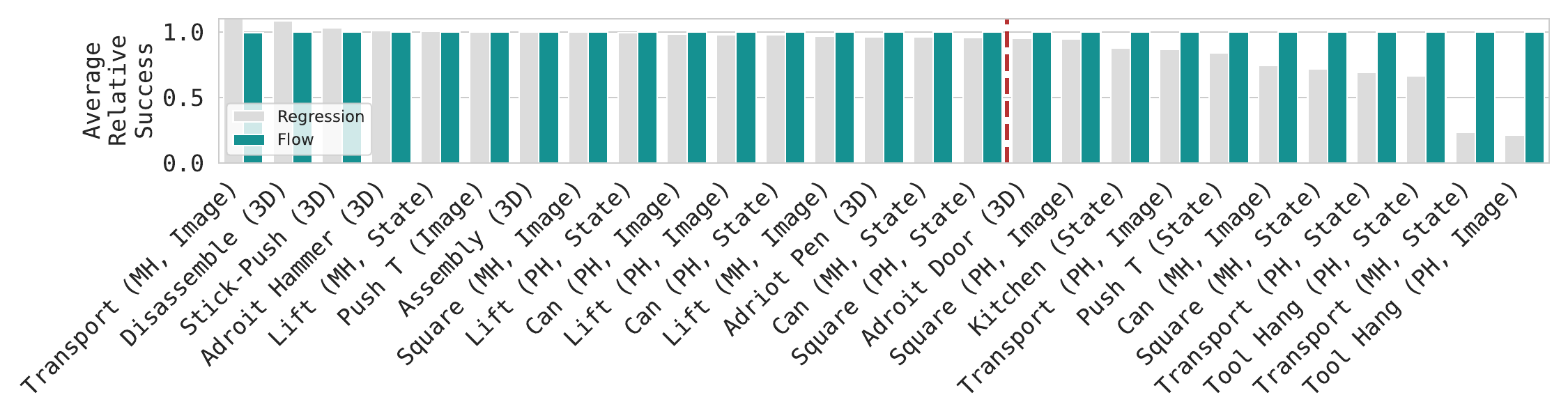}
    \caption{
        \textbf{Relative performance of RCP compared to GCP across common benchmarks (worst-case architecture).} For each task, we implement \chitransformer, \sudeepdit and \chiunet. For each architecture, we average performance of the last 5 training checkpoints across three seeds. We then report the performance of the worst-performing architecture, chosen individually for both RCP and GCP, to demonstrate method robustness. For \Flow, we always do 9 step Euler integrations, where its performance plateaued. For readability, RCP success rates are plotted relative to flow, with flow normalized to performance of $1$ per task. Tasks are grouped by observation modality, and ordered by relative RCP performance. Red dashed line indicates threshold at which RCP attains $<95\%$ success of GCP.
    }
    \label{fig:rp_vs_gp_avg}
\end{figure}

To further rule out the effect of training method, we also compare different methods' performance with $\ell_1$, which is observed to be superior for regression policy~\citep{kim2024openvla}.
We also benchmark the performance of flow model and \Minimaliterativepolicy with $\ell_1$ loss to understand the effect of loss function on the performance of GCPs.
\Cref{tab:ell1_ell2_comparison} shows the performance comparison of different methods with $\ell_1$ and $\ell_2$ loss, where we find that $\ell_1$ loss generally outperforms $\ell_2$ loss, especially for regression policy.
We attribute the superior performance of $\ell_1$ loss to the fact that it can capture the expert behavior better by learning the medium instead of the mean of the action.
However, even with $\ell_1$ loss, we still observe that Regression $<$ \Minimaliterativepolicy $\approx$ Flow, highlighting the importance of the stochasticity injection and iterative computation is independent of the loss function.
\begin{table}[h]
    \centering
    \begin{tabular}{ll|cc}
        \toprule
        Architecture    & Method                           & Transport (ph) & Tool Hang (ph) \\
        \midrule
        \sudeepdit      & \Regression $\ell_1$             & 0.72/0.64      & 0.76/0.65      \\
        \sudeepdit      & \Regression $\ell_2$             & 0.50/0.45      & 0.50/0.37      \\
        \sudeepdit      & \Minimaliterativepolicy $\ell_1$ & 0.81/0.73      & 0.91/0.84      \\
        \sudeepdit      & \Minimaliterativepolicy $\ell_2$ & 0.79/0.69      & 0.92/0.85      \\
        \sudeepdit      & \Flow $\ell_1$                   & 0.83/0.76      & 0.93/0.84      \\
        \sudeepdit      & \Flow $\ell_2$                   & 0.81/0.71      & 0.89/0.75      \\
        \midrule
        \chitransformer & \Regression $\ell_1$             & 0.67/0.57      & 0.44/0.33      \\
        \chitransformer & \Regression $\ell_2$             & 0.65/0.54      & 0.31/0.25      \\
        \chitransformer & \Minimaliterativepolicy $\ell_1$ & 0.80/0.68      & 0.85/0.77      \\
        \chitransformer & \Minimaliterativepolicy $\ell_2$ & 0.80/0.69      & 0.80/0.72      \\
        \chitransformer & \Flow $\ell_1$                   & 0.77/0.69      & 0.81/0.71      \\
        \chitransformer & \Flow $\ell_2$                   & 0.79/0.65      & 0.73/0.61      \\
        \midrule
        \chiunet        & \Regression $\ell_1$             & 0.84/0.71      & 0.71/0.55      \\
        \chiunet        & \Regression $\ell_2$             & 0.66/0.59      & 0.73/0.59      \\
        \chiunet        & \Minimaliterativepolicy $\ell_1$ & 0.85/0.68      & 0.76/0.67      \\
        \chiunet        & \Minimaliterativepolicy $\ell_2$ & 0.81/0.72      & 0.82/0.71      \\
        \chiunet        & \Flow $\ell_1$                   & 0.85/0.69      & 0.87/0.71      \\
        \chiunet        & \Flow $\ell_2$                   & 0.83/0.75      & 0.87/0.73      \\
        \bottomrule
    \end{tabular}
    \caption{Comparison of $\ell_1$ vs $\ell_2$ norm across different methods and architectures. Report average/best performance across 5 checkpoints with 3 random seeds.}
    \label{tab:ell1_ell2_comparison}
\end{table}

\iftoggle{iclr}{
    \begin{table}[h]
        \centering
        \small
        \begin{tabular}{l|cccc}
            \toprule
            \textbf{Method}             & \liberoobject & \liberogoal   & \liberospatial & \liberoten    \\
            \midrule
            \regression ($\ell_2$ loss) & 92.6          & 94.6          & 97.2           & 78.0          \\
            \regression ($\ell_1$ loss) & 95.2          & 88.0          & 95.8           & 62.4          \\
            \Flow                       & \textbf{97.4} & 95.0          & 95.8           & 81.6          \\
            \Minimaliterativepolicy     & 95.8          & \textbf{95.2} & \textbf{97.6}  & \textbf{82.2} \\
            \bottomrule
        \end{tabular}
        \caption{\textbf{Performance comparison on multi-task \libero benchmark.}
            We report the success rate of the checkpoint trained with 50k gradient steps of finetuning \pizero on the full \libero dataset. We implement \Minimaliterativepolicy with $t^* = 0.9$ and integrate flow with 10 steps.
            For regression, we train with both $\ell_2$ and $\ell_1$ loss as suggested in~\citep{kim2024openvla}.
        }
        \label{tab:mip_multitask}
    \end{table}
}
{
}

\subsection{Dataset Quality Ablation}

GCPs are believed to handle data with diverse quality better.
To test that assumption, we manually corrupt the expert dataset and inject stochactity and multi-modality in to the dataset.
In \cref{tab:performance_comparison}, we compare 4 different datasets (3 of them collected by ourselves).
In the collected dataset, we manually inject noise to the policy and add delay the policy from time to time to introduce multi-modality that is common in the real world.

\begin{table}[!ht]
    \centering
    \scriptsize
    \begin{tabular}{lcccccc}
        \toprule
        \textbf{Architecture} & \textbf{Method}         & \textbf{NFEs} & \textbf{Delayed \& Noisy Policy} & \textbf{Delayed Policy}  & \textbf{Zero-Flow}        & \textbf{Proficient Human} \\
                              &                         &               & \textbf{(Worst Quality)}         & \textbf{(Mixed Quality)} & \textbf{(Better Quality)} & \textbf{(Good Quality)}   \\
        \midrule
        \chiunet              & \Regression             & 1             & 0.70/0.63                        & 0.80/0.72                & 0.76/0.65                 & 0.76/0.62                 \\
        \chiunet              & \Straightflow           & 1             & 0.70/0.62                        & 0.82/0.76                & 0.84/0.77                 & 0.62/0.38                 \\
        \chiunet              & \Minimaliterativepolicy & 2             & 0.80/0.72                        & 0.82/0.61                & 0.74/0.64                 & 0.80/0.68                 \\
        \chiunet              & \Flow                   & 9             & 0.76/0.68                        & 0.74/0.50                & 0.76/0.54                 & 0.84/0.70                 \\
        \midrule
        \chitransformer       & \Regression             & 1             & 0.38/0.22                        & 0.40/0.31                & 0.42/0.26                 & 0.50/0.24                 \\
        \chitransformer       & \Straightflow           & 1             & 0.46/0.35                        & 0.68/0.50                & 0.56/0.41                 & 0.62/0.48                 \\
        \chitransformer       & \Minimaliterativepolicy & 2             & 0.56/0.49                        & 0.70/0.54                & 0.64/0.56                 & 0.72/0.68                 \\
        \chitransformer       & \Flow                   & 9             & 0.56/0.34                        & 0.54/0.48                & 0.62/0.49                 & 0.68/0.54                 \\
        \midrule
        \sudeepdit            & \Regression             & 1             & 0.42/0.29                        & 0.36/0.28                & 0.42/0.32                 & 0.30/0.19                 \\
        \sudeepdit            & \Straightflow           & 1             & 0.66/0.41                        & 0.60/0.54                & 0.72/0.57                 & 0.68/0.50                 \\
        \sudeepdit            & \Minimaliterativepolicy & 2             & 0.66/0.56                        & 0.74/0.58                & 0.70/0.61                 & 0.86/0.78                 \\
        \sudeepdit            & \Flow                   & 9             & 0.56/0.45                        & 0.66/0.58                & 0.72/0.65                 & 0.78/0.68                 \\
        \bottomrule
    \end{tabular}
    \caption{Performance comparison across different methods and data quality levels. We evaluate on the task \toolhang with state observations using 10M parameter networks. Success rates are reported as averages over 5 checkpoints across 3 seeds.}
    \label{tab:performance_comparison}
\end{table}

\subsection{Full Results for \Minimaliterativepolicy and its variants}
\label{sec:control_full_results}

\begin{table}[ht]
    \centering
    \footnotesize
    \begin{adjustbox}{width=\textwidth,center}
        \begin{tabular}{ll|ccccccccccc}
            \toprule
            \textbf{Architecture} & \textbf{Method} & \multicolumn{2}{c}{\lifttask} & \multicolumn{2}{c}{\can}    & \multicolumn{2}{c}{\squaretask} & \multicolumn{2}{c}{\transport} & \toolhang                   & \pusht                      & \kitchen                                                                                                                                   \\
            \midrule
                                  &                 & mh                            & ph                          & mh                              & ph                             & mh                          & ph                          & mh                          & ph                 &                             &                             &                             \\
            \midrule
            \sudeepdit            & Flow            & \textbf{1.00}/0.99            & \textbf{1.00}/\textbf{1.00} & \textbf{1.00}/0.94              & \textbf{1.00}/1.00             & 0.88/0.75                   & \textbf{1.00}/0.94          & 0.40/0.27                   & 0.80/0.70          & 0.86/0.75                   & \textbf{0.98}/\textbf{0.95} & 0.98/0.96                   \\
            \sudeepdit            & Regression      & \textbf{1.00}/0.99            & \textbf{1.00}/\textbf{1.00} & 0.92/0.90                       & \textbf{1.00}/0.98             & 0.72/0.53                   & 0.94/0.86                   & 0.12/0.06                   & 0.50/0.44          & 0.52/0.39                   & 0.92/0.83                   & 0.98/0.92                   \\
            \sudeepdit            & Straight Flow   & \textbf{1.00}/0.98            & \textbf{1.00}/\textbf{1.00} & 0.96/0.90                       & \textbf{1.00}/0.99             & 0.72/0.66                   & 0.96/0.93                   & 0.20/0.14                   & 0.56/0.48          & 0.70/0.59                   & 0.90/0.86                   & 0.96/0.91                   \\
            \sudeepdit            & MIP             & \textbf{1.00}/0.99            & \textbf{1.00}/\textbf{1.00} & 0.98/0.95                       & \textbf{1.00}/\textbf{1.00}    & 0.90/0.81                   & 0.98/\textbf{0.94}          & 0.44/0.38                   & 0.76/0.68          & \textbf{0.92}/\textbf{0.88} & 0.95/0.92                   & \textbf{1.00}/\textbf{0.97} \\
            \chitransformer       & Flow            & \textbf{1.00}/\textbf{1.00}   & \textbf{1.00}/\textbf{1.00} & \textbf{1.00}/0.93              & \textbf{1.00}/0.98             & 0.78/0.74                   & 0.96/0.89                   & 0.44/0.34                   & \textbf{0.88}/0.64 & 0.68/0.54                   & 0.91/0.89                   & \textbf{1.00}/0.96          \\
            \chitransformer       & Regression      & \textbf{1.00}/0.99            & \textbf{1.00}/0.99          & 0.98/0.92                       & \textbf{1.00}/0.96             & 0.74/0.61                   & 0.92/0.85                   & 0.28/0.20                   & 0.68/0.51          & 0.40/0.36                   & 0.93/0.88                   & 0.98/0.91                   \\
            \chitransformer       & Straight Flow   & \textbf{1.00}/0.99            & \textbf{1.00}/\textbf{1.00} & 0.98/0.92                       & \textbf{1.00}/0.99             & 0.68/0.58                   & 0.96/0.89                   & 0.24/0.16                   & 0.62/0.54          & 0.60/0.55                   & 0.94/0.90                   & 0.96/0.92                   \\
            \chitransformer       & MIP             & \textbf{1.00}/\textbf{1.00}   & \textbf{1.00}/\textbf{1.00} & 0.96/0.95                       & \textbf{1.00}/1.00             & 0.86/0.73                   & 0.96/0.89                   & 0.42/0.37                   & 0.80/0.68          & 0.76/0.69                   & 0.94/0.92                   & 0.98/0.96                   \\
            \chiunet              & Flow            & \textbf{1.00}/\textbf{1.00}   & \textbf{1.00}/\textbf{1.00} & \textbf{1.00}/0.98              & \textbf{1.00}/\textbf{1.00}    & 0.90/0.78                   & 0.98/0.94                   & 0.52/0.40                   & 0.80/\textbf{0.73} & 0.84/0.70                   & \textbf{0.98}/0.94          & \textbf{1.00}/0.97          \\
            \chiunet              & Regression      & \textbf{1.00}/\textbf{1.00}   & \textbf{1.00}/\textbf{1.00} & \textbf{1.00}/0.96              & \textbf{1.00}/0.99             & \textbf{0.94}/\textbf{0.82} & \textbf{1.00}/0.91          & 0.22/0.16                   & 0.64/0.55          & 0.68/0.64                   & 0.95/0.89                   & 0.92/0.88                   \\
            \chiunet              & Straight Flow   & \textbf{1.00}/1.00            & \textbf{1.00}/\textbf{1.00} & \textbf{1.00}/0.92              & \textbf{1.00}/0.99             & \textbf{0.94}/0.79          & 0.98/0.90                   & 0.22/0.15                   & 0.64/0.52          & 0.50/0.00                   & 0.93/0.88                   & 0.86/0.79                   \\
            \chiunet              & MIP             & \textbf{1.00}/\textbf{1.00}   & \textbf{1.00}/\textbf{1.00} & \textbf{1.00}/\textbf{0.98}     & \textbf{1.00}/0.99             & 0.92/0.81                   & \textbf{1.00}/\textbf{0.94} & \textbf{0.62}/\textbf{0.46} & 0.80/0.69          & 0.80/0.64                   & 0.97/\textbf{0.95}          & \textbf{1.00}/0.96          \\
            \bottomrule
        \end{tabular}
    \end{adjustbox}
    \caption{Performance comparison of Flow and Regression methods across different \textbf{state-based} robotic manipulation tasks. For each task, we report the best checkpoint performance / averaged performance over last 5 checkpoints. Each experiment is run with 3 seeds and we report the average performance across all seeds.}
    \label{tab:results_state}
\end{table}

\begin{table}[ht]
    \centering
    \footnotesize
    \begin{adjustbox}{width=\textwidth,center}
        \begin{tabular}{ll|cccccccccc}
            \toprule
            Architecture    & Method        & \multicolumn{2}{c}{Lift}    & \multicolumn{2}{c}{Can}     & \multicolumn{2}{c}{Square}  & \multicolumn{2}{c}{Transport} & Tool Hang                   & PushT                                                                                                                             \\
            \midrule
                            &               & mh                          & ph                          & mh                          & ph                            & mh                          & ph                 & mh                          & ph                          &                    &                             \\
            \midrule
            \sudeepdit      & Flow          & \textbf{1.00}/\textbf{1.00} & \textbf{1.00}/1.00          & 0.96/0.94                   & \textbf{1.00}/0.99            & 0.82/0.76                   & 0.96/0.94          & 0.32/0.20                   & 0.84/0.83                   & \textbf{0.78}/0.57 & \textbf{0.92}/\textbf{0.89} \\
            \sudeepdit      & Regression    & \textbf{1.00}/0.99          & \textbf{1.00}/\textbf{1.00} & 0.92/0.81                   & \textbf{1.00}/\textbf{1.00}   & 0.74/0.67                   & 0.94/0.84          & 0.14/0.08                   & 0.74/0.56                   & 0.28/0.18          & 0.83/0.77                   \\
            \sudeepdit      & Straight Flow & \textbf{1.00}/0.99          & \textbf{1.00}/0.99          & 0.98/0.95                   & \textbf{1.00}/0.98            & 0.82/0.72                   & \textbf{1.00}/0.93 & 0.26/0.19                   & 0.86/0.83                   & 0.46/0.40          & 0.85/0.79                   \\
            \sudeepdit      & MIP           & \textbf{1.00}/0.99          & \textbf{1.00}/\textbf{1.00} & \textbf{1.00}/0.96          & \textbf{1.00}/0.98            & 0.90/0.83                   & \textbf{1.00}/0.92 & 0.50/0.31                   & 0.90/0.84                   & 0.76/\textbf{0.66} & 0.91/0.87                   \\
            \chitransformer & Flow          & \textbf{1.00}/0.99          & \textbf{1.00}/1.00          & 0.98/0.92                   & \textbf{1.00}/0.96            & 0.70/0.66                   & 0.98/0.93          & 0.24/0.22                   & 0.80/0.77                   & 0.54/0.40          & 0.89/0.85                   \\
            \chitransformer & Regression    & \textbf{1.00}/0.98          & \textbf{1.00}/0.98          & \textbf{1.00}/0.94          & \textbf{1.00}/0.96            & 0.76/0.70                   & 0.98/0.90          & 0.40/0.27                   & 0.94/0.87                   & 0.44/0.36          & 0.85/0.81                   \\
            \chitransformer & Straight Flow & \textbf{1.00}/\textbf{1.00} & \textbf{1.00}/\textbf{1.00} & \textbf{1.00}/0.95          & \textbf{1.00}/0.98            & 0.90/0.78                   & 0.98/\textbf{0.94} & 0.32/0.25                   & 0.86/0.70                   & 0.36/0.28          & 0.86/0.80                   \\
            \chitransformer & MIP           & \textbf{1.00}/0.98          & \textbf{1.00}/1.00          & 0.96/0.91                   & \textbf{1.00}/0.98            & 0.72/0.21                   & 0.90/0.04          & 0.18/0.06                   & 0.86/0.69                   & 0.60/0.48          & 0.87/0.83                   \\
            \chiunet        & Flow          & \textbf{1.00}/\textbf{1.00} & \textbf{1.00}/\textbf{1.00} & \textbf{1.00}/\textbf{0.97} & \textbf{1.00}/0.98            & 0.90/0.79                   & 0.96/0.90          & 0.24/0.16                   & 0.78/0.61                   & 0.48/0.37          & \textbf{0.92}/0.87          \\
            \chiunet        & Regression    & \textbf{1.00}/0.96          & \textbf{1.00}/0.99          & 0.84/0.70                   & 0.98/0.87                     & 0.74/0.66                   & 0.94/0.86          & 0.18/0.10                   & 0.66/0.64                   & 0.30/0.23          & 0.90/0.85                   \\
            \chiunet        & Straight Flow & \textbf{1.00}/0.94          & \textbf{1.00}/0.99          & 0.98/0.93                   & \textbf{1.00}/0.96            & 0.72/0.68                   & 0.92/0.62          & 0.00/0.00                   & 0.50/0.22                   & 0.06/0.02          & 0.87/0.82                   \\
            \chiunet        & MIP           & \textbf{1.00}/\textbf{1.00} & \textbf{1.00}/\textbf{1.00} & \textbf{1.00}/0.95          & \textbf{1.00}/0.98            & \textbf{0.92}/\textbf{0.84} & 0.96/0.91          & \textbf{0.52}/\textbf{0.37} & \textbf{0.96}/\textbf{0.91} & 0.56/0.50          & 0.83/0.78                   \\
            \bottomrule
        \end{tabular}
    \end{adjustbox}
    \caption{Performance comparison of Flow and Regression methods across different \textbf{image-based} robotic manipulation tasks. For each task, we report the best checkpoint performance / averaged performance over last 5 checkpoints. Each experiment is run with 3 seeds and we report the average performance across all seeds.}
    \label{tab:results_image}
\end{table}

\begin{table}[!ht]
    \centering
    \tiny
    \setlength\tabcolsep{3pt}
    \begin{tabular}{l|r|ccc|ccc}
        \toprule
        Architecture & Method     & \multicolumn{3}{c|}{\taskname{Adroit}} & \multicolumn{3}{c}{\metaworld}                                                                                                             \\
        \cmidrule(lr){3-5} \cmidrule(lr){6-8}
                     &            & Hammer                                 & Door                           & Pen                      & Stick-Push               & Assembly                 & Disassemble              \\
        \midrule
        DP3          & Flow       & $0.96 \pm 0.02$                        & $\mathbf{0.60 \pm 0.06}$       & $\mathbf{0.54 \pm 0.11}$ & $0.92 \pm 0.04$          & $\mathbf{0.98 \pm 0.03}$ & $0.72 \pm 0.14$          \\
                     & Regression & $\mathbf{0.97 \pm 0.04}$               & $0.52 \pm 0.16$                & $0.47 \pm 0.08$          & $\mathbf{0.95 \pm 0.06}$ & $\mathbf{0.98 \pm 0.03}$ & $\mathbf{0.78 \pm 0.08}$ \\
        \bottomrule
    \end{tabular}
    \caption{Performance comparison of Flow and Regression methods using DP3 architecture across different \textbf{point-cloud-based} robotic manipulation tasks. For each task, we report the best checkpoint performance / averaged performance over last 5 checkpoints. Each experiment is run with 3 seeds and we report the average performance across all seeds.}
    \label{tab:dp3_flow_vs_regression}
\end{table}

For \kitchen, the task has multiple stages. In the main results, we only report the performance of the last stage since it is the most challenging one. \Cref{tab:kitchen_performance_comparison} shows the performance comparison across different design choices on \kitchen task.

\begin{table}[!ht]
    \centering
    \footnotesize
    \begin{tabular}{llcccc}
        \toprule
        \textbf{Architecture}            & \textbf{Method}         & \textbf{P1} & \textbf{P2} & \textbf{P3} & \textbf{P4} \\
        \midrule
        \multirow{4}{*}{\chiunet}        & \Flow                   & 1.0         & 1.0         & 1.0         & 0.98        \\
                                         & \Minimaliterativepolicy & 1.0         & 1.0         & 1.0         & 0.94        \\
                                         & \Regression             & 0.98        & 0.94        & 0.94        & 0.86        \\
        \midrule
        \multirow{4}{*}{\chitransformer} & \Flow                   & 1.0         & 1.0         & 1.0         & 1.0         \\
                                         & \Minimaliterativepolicy & 1.00        & 0.98        & 0.98        & 0.96        \\
                                         & \Regression             & 1.0         & 1.0         & 0.98        & 0.94        \\
        \midrule
        \multirow{4}{*}{\sudeepdit}      & Flow                    & 1.0         & 1.0         & 1.0         & 0.98        \\
                                         & \Minimaliterativepolicy & 1.00        & 1.00        & 1.00        & 0.98        \\
                                         & \Regression             & 1.0         & 0.98        & 0.96        & 0.88        \\
        \bottomrule
    \end{tabular}
    \caption{\textbf{Performance comparison across different design choices on \textbf{kitchen} task.} Kitchen task has multiple stages and we report the success rate of finishing $n$ tasks in the table. For the performance reported in the main paper and previous tables, we report the success rate of finishing 4 tasks. }
    \label{tab:kitchen_performance_comparison}
\end{table}

\subsection{Different Method's Performance with Different Number of Function Evaluations}
\label{sec:method_nfe_comparison}

We also provide detailed evaluation on different method's scaling behavior given different amount of online computation budgets.
\Cref{tab:method_nfe_comparison} highlights that only \Minimaliterativepolicy and \Flow benefit from iterative computate.

\begin{table}[h]
    \centering
    \begin{tabular}{@{}lcccccccccccc@{}}
        \toprule
        \textbf{Method} & \multicolumn{1}{c}{\algname{Reg.}} & \multicolumn{3}{c}{\Straightflow} & \multicolumn{2}{c}{\Residualregression} & \multicolumn{2}{c}{\Minimaliterativepolicy} & \multicolumn{3}{c}{\Flow}                                           \\
        \cmidrule(lr){2-2} \cmidrule(lr){3-5} \cmidrule(lr){6-7} \cmidrule(lr){8-9} \cmidrule(lr){10-12}
        \textbf{NFEs}   & 1                                  & 1                                 & 3                                       & 9                                           & 1                         & 2    & 1    & 2    & 1    & 3    & 9    \\
        \midrule
        \textbf{S.R.}   & 0.46                               & 0.54                              & 0.55                                    & 0.52                                        & 0.31                      & 0.33 & 0.50 & 0.74 & 0.32 & 0.55 & 0.66 \\
        \bottomrule
    \end{tabular}
    \caption{\textbf{Comparison of methods and their corresponding number of function evaluations (NFEs).} Evaluated on state-based \toolhang task over \chiunet. Average success rate is reported across 3 seeds and 5 checkpoints.}
    \label{tab:method_nfe_comparison}
\end{table}

\subsection{Comparing \Minimaliterativepolicy with Consistency Models}
\label{sec:comp_consistency_models}

Given \Minimaliterativepolicy takes less integration steps compared to flow model, we compare it with consistency models which accelerate the sampling process of flow by distilling the learned flow into a shortcut model.
The major difference between \Minimaliterativepolicy and consistency models is that the latter do satisfy \componentref{comp:distr}, and require training over a continuum of noise levels.
On the other hand, \Minimaliterativepolicy is trained to predict the conditional mean of the interpolant, and thus, doesn't need extra distillation stage.
As shown in \Cref{tab:comp_consistency_models}, We benchmarks \Minimaliterativepolicy to common consistency model training methods including consistency trajectory model (CTM)~\citep{kim2023consistency} and Lagrangian map distillation (LMD)~\citep{boffiFlowMapMatching2025}, where LMD only works for \chiunet due to its dependency on jacobian matrix computation.
The benchmarking results indicates that, given best architecture, \Minimaliterativepolicy outperforms consistency models.
In terms of training time, \Minimaliterativepolicy only takes half of the time compared to CTM, where LMD training takes even longer due to jacobian matrix computation.

\begin{table}[h]
    \centering
    \scriptsize
    \begin{tabular}{ll|ccc}
        \toprule
        Architecture    & Method & \multicolumn{2}{c}{Transport} & Tool Hang                                        \\
        \midrule
                        &        & mh                            & ph                 &                             \\
        \midrule
        \sudeepdit      & Flow   & 0.40/0.27                     & 0.80/0.70          & 0.86/0.75                   \\
        \sudeepdit      & MIP    & 0.44/0.38                     & 0.76/0.68          & \textbf{0.92}/\textbf{0.88} \\
        \chitransformer & CTM    & 0.57/0.32                     & \textbf{0.90}/0.58 & 0.56/0.26                   \\
        \chitransformer & Flow   & 0.44/0.34                     & 0.88/0.64          & 0.68/0.54                   \\
        \chitransformer & MIP    & 0.42/0.37                     & 0.80/0.68          & 0.76/0.69                   \\
        \chiunet        & CTM    & 0.40/0.32                     & 0.72/0.63          & 0.46/0.37                   \\
        \chiunet        & Flow   & 0.52/0.40                     & 0.80/\textbf{0.73} & 0.84/0.70                   \\
        \chiunet        & LMD    & 0.44/0.32                     & 0.76/0.68          & 0.74/0.52                   \\
        \chiunet        & MIP    & \textbf{0.62}/\textbf{0.46}   & 0.80/0.69          & 0.80/0.64                   \\
        \bottomrule
    \end{tabular}
    \caption{Benchmark results across different architectures and methods on state-based tasks on consistency models and \Minimaliterativepolicy. Report average/best performance across 5 checkpoints with 3 random seeds. Both LMD and CTM integrate 2 steps, which is the same as \Minimaliterativepolicy.}
    \label{tab:comp_consistency_models}
\end{table}

\section{Lipschitz Constant Study Details}
\label{sec:lipchitz_study_details}

\subsection{Lipschitz Evluation Method}
\label{sec:lipchitz_evaluation_method}

We note that not all inputs $o$ are dynamically feasible, and our dataset lies only on a narrow manifold of the observation space.
Therefore, we must carefully evaluate the Lipschitz constant on the feasible observation space to avoid conflating model expressivity with errors arising from infeasible states.
To ensure feasibility, instead of directly perturbing the state, we perturb the action and then roll it out in the environment.
This guarantees that both the perturbed state and the resulting observation remain feasible.

In practice, we identify states that exhibit the highest ambiguity of actions in the dataset, referred to as \emph{critical states}.
For each critical state, we inject Gaussian noise $\eta \sim \mathcal{N}(0, \epsilon^2 I)$ into the normalized action, unnormalize it, and then roll it out.
We select $100$ critical states from the dataset. For each state, we perturb the corresponding expert action $a$ with $64$ independent Gaussian samples.

Let $o$ denote the next nominal observation after applying the nominal action $a$.
After rolling out the perturbed actions, we obtain perturbed observations $o^{(1)}, \dots, o^{(N_\text{perturb})}$.
The policy then predicts the perturbed actions $a^{(i)} = \pi(o^{(i)})$.
To ensure comparability across different states and tasks, we evaluate the Lipschitz constant with respect to normalized observations
$\bar{o} = \frac{o - \mu_o}{\sigma_o}$ and normalized actions $\bar{a} = \frac{a - \mu_a}{\sigma_a}$.
Finally, the Lipschitz constant is estimated using a zeroth-order approximation:
\begin{align}
    L \approx \max_i \frac{\|\bar{a}^{(i)} - \bar{a}\|_2}{\|\eta\|_2}.
\end{align}
Full version of above process is stated in~\cref{alg:lipschitz_alg}.

\begin{algorithm}[H]
    \caption{Lipschitz Constant Estimation via Action Perturbation}
    \label{alg:lipschitz_alg}
    \begin{algorithmic}[1]
        \Require Dataset $\mathcal{D}$, policy $\pi$, noise scale $\epsilon$, number of critical states $N_s{=}100$, number of perturbations $N_p{=}64$
        \Ensure Estimated Lipschitz constant $L$
        \State $S \gets$ identify $N_s$ critical states from $\mathcal{D}$ \Comment{Select states with highest action ambiguity}
        \ForAll{critical state $s \in S$}
        \State $(a, o) \gets$ expert action and nominal next observation for $s$ \Comment{Get ground truth action-observation pair}
        \State $(\bar{a}, \bar{o}) \gets$ normalize $(a, o)$ using dataset statistics \Comment{Ensure comparability across states/tasks}
        \For{$i = 1$ to $N_p$}
        \State $\eta \sim \mathcal{N}(0, \epsilon^2 I)$ \Comment{Sample Gaussian perturbation}
        \State $a_{\text{pert}} \gets$ unnormalize$(\bar{a} + \eta)$ \Comment{Create perturbed action in original scale}
        \State $o^{(i)} \gets$ rollout$(a_{\text{pert}})$ in environment \Comment{Execute perturbed action to get feasible state}
        \State $\bar{o}^{(i)} \gets$ normalize$(o^{(i)})$ \Comment{Normalize perturbed observation}
        \State $a^{(i)} \gets \pi(o^{(i)})$ \Comment{Get policy prediction on perturbed state}
        \State $\bar{a}^{(i)} \gets$ normalize$(a^{(i)})$ \Comment{Normalize predicted action}
        \State $r_i \gets \frac{\|\bar{a}^{(i)} - \bar{a}\|_2}{\|\eta\|_2}$ \Comment{Compute finite difference approximation}
        \EndFor
        \State $L_s \gets \max_i r_i$ \Comment{Local Lipschitz constant for state $s$}
        \EndFor
        \State $L \gets \frac{1}{N_s} \sum_{s=1}^{N_s} L_s$ \Comment{Average across all critical states}
        \State \Return $L$
    \end{algorithmic}
\end{algorithm}

\subsection{Full Lipschitz Evaluation Results}
\label{sec:full_lipschitz_evaluation_results}
In the main text, we only report the average Lipschitz constant on critical states across 3 architectures.
Here, we report the full Lipschitz constant evaluation reuslt in \cref{tab:detailed_lipschitz} with different architectures and tasks.

\begin{table}[h]
    \centering
    \footnotesize
    \begin{tabular}{@{}l l l c@{}}
        \toprule
        \textbf{Task} & \textbf{Architecture}                     & \textbf{Method}  & \textbf{Lipschitz Constant (Policy)} \\
        \midrule
        \multirow{6}{*}{\pusht (State)}
                      & \multirow{2}{*}{\texttt{\chiunet}}
                      & \Regression                               & $0.85 \pm 0.58$                                         \\
                      &                                           & \Flow            & $0.31 \pm 0.01$                      \\
                      & \multirow{2}{*}{\texttt{\sudeepdit}}
                      & \Regression                               & $0.52 \pm 0.11$                                         \\
                      &                                           & \Flow            & $0.22 \pm 0.02$                      \\
                      & \multirow{2}{*}{\texttt{\chitransformer}}
                      & \Regression                               & $1.33 \pm 1.14$                                         \\
                      &                                           & \Flow            & $0.82 \pm 0.26$                      \\
        \midrule
        \multirow{6}{*}{\kitchen (State)}
                      & \multirow{2}{*}{\texttt{\chiunet}}
                      & \Regression                               & $13.47 \pm 2.80$                                        \\
                      &                                           & \Flow            & $13.31 \pm 4.13$                     \\
                      & \multirow{2}{*}{\texttt{\sudeepdit}}
                      & \Regression                               & $15.37 \pm 3.69$                                        \\
                      &                                           & \Flow            & $12.54 \pm 5.09$                     \\
                      & \multirow{2}{*}{\texttt{\chitransformer}}
                      & \Regression                               & $13.37 \pm 4.00$                                        \\
                      &                                           & \Flow            & $11.44 \pm 4.10$                     \\
        \midrule
        \multirow{2}{*}{\toolhang (PH, State)}
                      & \multirow{2}{*}{\texttt{\chiunet}}
                      & \Regression                               & $1.63 \pm 0.79$                                         \\
                      &                                           & \Flow            & $1.53 \pm 1.01$                      \\
                      & \multirow{2}{*}{\texttt{\sudeepdit}}
                      & \Regression                               & $1.86 \pm 0.81$                                         \\
                      &                                           & \Flow            & $1.34 \pm 0.97$                      \\
                      & \multirow{2}{*}{\texttt{\chitransformer}}
                      & \Regression                               & $1.76 \pm 1.02$                                         \\
                      &                                           & \Flow            & $1.40 \pm 0.99$                      \\
        \bottomrule
    \end{tabular}
    \caption{\textbf{Detailed: Per-architecture policy Lipschitz.}}
    \label{tab:detailed_lipschitz}
\end{table}

\section{Multi-Modality Study Details}
\label{sec:multi_modality_study}

\subsection{Q Function Estimation}
\label{sec:q_function_estimation}

To rule out the possibility of hidden multi-modality, we also plot Q functions for each action to see if there is any clear clustering pattern of $Q$ w.r.t. different actions in t-SNE visualization.
Since we only have access to expert actions rather than their policy, we estimate the Q function by Monte Carlo sampling with the learned flow policy.
The detailed procedure is as follows:

Starting from one ``critical state'', we first sample $N$ actions
\[
    a^{(i)} = \Phi(o, z^{(i)}, s=0, t=1), \quad i=1, \dots, N, \quad z^{(i)} \sim \Normal(0,\eye).
\]
For each sampled action $a^{(i)}$, we execute one environment step to obtain the next observation $o'^{(i)}$ and immediate reward $r(o, a^{(i)})$.
Then, starting from $o'^{(i)}$, we rollout the learned policy for $N_\text{MC}$ episodes until termination (horizon $H$), and average the cumulative returns to obtain an estimate of the continuation value.
Thus, the Q-value for action $a^{(i)}$ is approximated as:
\begin{align}
    Q_\Phi(a^{(i)}, o)
     & = r(o, a^{(i)}) + \frac{1}{N_\text{MC}} \sum_{j=1}^{N_\text{MC}} \sum_{t=1}^H r\big(o_t^{(j)}, a_t^{(j)}\big).
\end{align}
We set the discount factor $\gamma = 1.0$ since rewards are sparse and triggered only at task completion.
The reward for \toolhang and \kitchen is defined by the \emph{final} success signal (with \kitchen’s success requiring all 4 subtasks to be completed).
The reward for \pusht is defined by \emph{final} coverage.

\begin{algorithm}[H]
    \caption{Q Function Estimation via Monte Carlo Sampling}
    \label{alg:q_function_estimation}
    \begin{algorithmic}[1]
        \Require Dataset $\mathcal{D}$, flow policy $\Phi$, reward function $r$, number of critical states $N_s{=}100$, number of action samples $N$, Monte Carlo samples $N_\text{MC}$
        \Ensure For each state $o$, pairs $\{(a^{(i)}, Q_\Phi(a^{(i)}, o))\}_{i=1}^N$
        \State $S \gets$ identify $N_s$ critical states from $\mathcal{D}$ \Comment{Select states with highest action ambiguity}
        \ForAll{critical state $s \in S$}
        \State $o \gets$ observation for state $s$
        \For{$i = 1$ to $N$} \Comment{Sample actions and compute Q estimates}
        \State $z^{(i)} \sim \Normal(0, \eye)$
        \State $a^{(i)} \gets \Phi(o, z^{(i)}, s{=}0, t{=}1)$
        \State Execute $(o, a^{(i)})$ in env $\to$ obtain $o'^{(i)}$, $r^{(i)} = r(o, a^{(i)})$
        \For{$j = 1$ to $N_\text{MC}$} \Comment{Monte Carlo rollouts from $o'^{(i)}$}
        \State Rollout $\Phi$ from $o'^{(i)}$ until horizon $H$ to get cumulative return $R_j^{(i)}$
        \EndFor
        \State $Q_\Phi(a^{(i)}, o) \gets r^{(i)} + \frac{1}{N_\text{MC}} \sum_{j=1}^{N_\text{MC}} R_j^{(i)}$
        \EndFor
        \State Store $\{(a^{(i)}, Q_\Phi(a^{(i)}, o))\}_{i=1}^N$ for state $s$
        \EndFor
    \end{algorithmic}
\end{algorithm}

The procedure above explicitly computes Q-values by rolling out trajectories separately for each sampled action.

\subsection{Deterministic Dataset Generation}
\label{sec:deterministic_dataset_generation}

To generate a deterministic dataset that completely eliminates any potential multi-modality, we follow a systematic process:

First, we train a flow expert policy $\Phi$ on the original dataset. Then, we collect a new dataset by rolling out this expert policy from different initial states (using different random seeds than those used during testing). Crucially, during rollout, we always evaluate the flow policy deterministically by setting the initial noise to zero: $z=0$. This ensures that the policy produces deterministic actions given any observation, completely removing any stochasticity from the action generation process.

During data collection, we discard all failed trajectories to maintain the same success rate as the original dataset. We continue collecting until we reach the target number of trajectories $N_{\text{traj}}$.

\begin{algorithm}[H]
    \caption{Deterministic Dataset Generation}
    \label{alg:deterministic_dataset_generation}
    \begin{algorithmic}[1]
        \Require Trained flow policy $\Phi$, target number of trajectories $N_{\text{traj}}$, maximum episode steps $T_{\max}$
        \Ensure Deterministic dataset $\mathcal{D}_{\text{det}}$
        \State $\mathcal{D}_{\text{det}} \gets \emptyset$
        \State $n_{\text{collected}} \gets 0$
        \While{$n_{\text{collected}} < N_{\text{traj}}$}
        \State Reset environment with new random seed
        \State $o_0 \gets$ initial observation
        \State $\tau \gets [(o_0, \cdot)]$ \Comment{Initialize trajectory}
        \For{$t = 0$ to $T_{\max} - 1$}
        \State $a_t \gets \Phi(z=0, o_t, s=0, t=1)$ \Comment{Deterministic action}
        \State $o_{t+1}, r_t, \text{done} \gets \text{env.step}(a_t)$
        \State $\tau \gets \tau \cup [(o_t, a_t)]$
        \If{done}
        \State \textbf{break}
        \EndIf
        \EndFor
        \If{trajectory $\tau$ is successful}
        \State $\mathcal{D}_{\text{det}} \gets \mathcal{D}_{\text{det}} \cup \{\tau\}$
        \State $n_{\text{collected}} \gets n_{\text{collected}} + 1$
        \EndIf
        \EndWhile
        \State \Return $\mathcal{D}_{\text{det}}$
    \end{algorithmic}
\end{algorithm}

\section{Manifold Adherence Study Details}
\label{sec:manifold_adherence_study_details}

\subsection{Validation Loss Is Not a Good Proxy for Policy Performance}
\label{sec:validation_loss_is_not_a_good_proxy_for_policy_performance}

To investigate whether validation loss serves as a reliable proxy for policy performance, we examine its relationship with success rates on \toolhang across different architectures given different training methods.
Evidence that validation loss is poorly correlated with success rate can be seen by comparing flow policies with varying numbers of function evaluations (NFEs) and their corresponding validation losses.
\Cref{tab:validation_loss_vs_success_rate} demonstrates that increasing NFEs does not reduce validation loss, yet policy performance consistently improves.
We hypothesize that higher NFEs introduce stronger inductive bias and regularization, which projects actions back onto the data manifold, thereby enhancing generalization.

\begin{table}[h]
    \centering
    \begin{tabular}{lllcc}
        \toprule
        Architecture                     & Method      & NFEs & Average Success Rate & $L_2$ Validation Loss \\
        \midrule
        \multirow{4}{*}{\chiunet}        & \Regression & 1    & 0.54                 & 0.063                 \\
                                         & \Flow       & 1    & 0.36                 & 0.053                 \\
                                         & \Flow       & 3    & 0.44                 & 0.052                 \\
                                         & \Flow       & 9    & 0.58                 & 0.053                 \\
        \midrule
        \multirow{4}{*}{\chitransformer} & \Regression & 1    & 0.18                 & 0.084                 \\
                                         & \Flow       & 1    & 0.06                 & 0.093                 \\
                                         & \Flow       & 3    & 0.72                 & 0.092                 \\
                                         & \Flow       & 9    & 0.68                 & 0.089                 \\
        \midrule
        \multirow{4}{*}{\sudeepdit}      & \Regression & 1    & 0.20                 & 0.063                 \\
                                         & \Flow       & 1    & 0.62                 & 0.082                 \\
                                         & \Flow       & 3    & 0.76                 & 0.080                 \\
                                         & \Flow       & 9    & 0.76                 & 0.080                 \\
        \bottomrule
    \end{tabular}
    \caption{Comparison of validation loss and success rate across different architectures and methods on state-based \toolhang. The results show that validation loss is not a reliable proxy for policy performance.}
    \label{tab:validation_loss_vs_success_rate}
\end{table}

\subsection{Manifold Adherence Evaluation Method}
\label{sec:manifold_adherence_evaluation_details}

To evaluate the manifold adherence, we compute the projection error of a predicted action $a$ onto the space spanned by expert actions at neighboring states.
Concretely, given a state, we compute its $\ell_2$ distance to all states in the training set.
Then, we pick $k$ nearest neighbor states and gather their corresponding actions $A = [a^{(0)},a^{(1)}, \dots, a^{(k)}]$.
Lastly, we compute projection error by projecting $a$ to the column space of $A$: $\|a -P_{A}(a)\|_2 = \min_c \|a -Ac\|_2$.

\section{Nearest Neighbor Hypothesis Study}
\label{sec:nearest_neighbor_hypothesis_study}
Another popular hypothesis is that GCPs are learning a lookup table of observation-to-action mappings~\citep{pari2021surprising,he2025demystifyingdiffusionpoliciesaction}.
This might be true for relatively simple tasks that do not require high precision and complex generalization, such as \can.
However, for tasks that require higher precision and more contact, such as \toolhang, the nearest-neighbor/lookup-table assumption is insufficient to explain the success of GCPs.
We evaluate the performance of a nearest-neighbor policy (VINN~\citep{pari2021surprising}) on state-based \toolhang and find that it achieves a success rate of only $12\%$ as shown in \cref{tab:nearest_neighbor_results}.
This is significantly lower than both flow and regression methods, indicating that the action manifold is not linearly spanned by the expert actions.
Nevertheless, nearest-neighbor can still serve as a proxy for the expert action manifold, as it captures the general trend of actions---even though linear combinations of actions in the dataset cannot directly produce the correct action, the expert action manifold should not be too distant.
Therefore, in this paper, we use nearest-neighbor as a proxy for the linearized expert action manifold rather than directly computing the distance between expert actions in the validation set and predicted actions.

\begin{table}[h]
    \centering
    \begin{tabular}{cc}
        \toprule
        \textbf{Action Chunk Size} & \textbf{Success Rate (\%)} \\
        \midrule
        1                          & $0$                        \\
        8                          & $4$                        \\
        16                         & $12$                       \\
        32                         & $2$                        \\
        \bottomrule
    \end{tabular}
    \caption{Performance of k-nearest neighbor policy on state-based \toolhang task. Using the same method as VINN with softmax over k=5 nearest neighbors. }
    \label{tab:nearest_neighbor_results}
\end{table}

\subsection{Action Chunk Size Study}
\label{sec:action_chunk_size_study}

Another equivalent important factor is the action chunk size.
\cref{fig:action_chunk_ablation} highlights the importance of action chunk size, where regression with larger action chunk can outperform flow with smaller action chunk size.
Our ablation also indicates that \Minimaliterativepolicy outperforms flow with smaller action chunk size and matches the performance of flow with larger action chunk size.

\begin{figure}[!ht]
    \centering
    \includegraphics[width=1.0\textwidth]{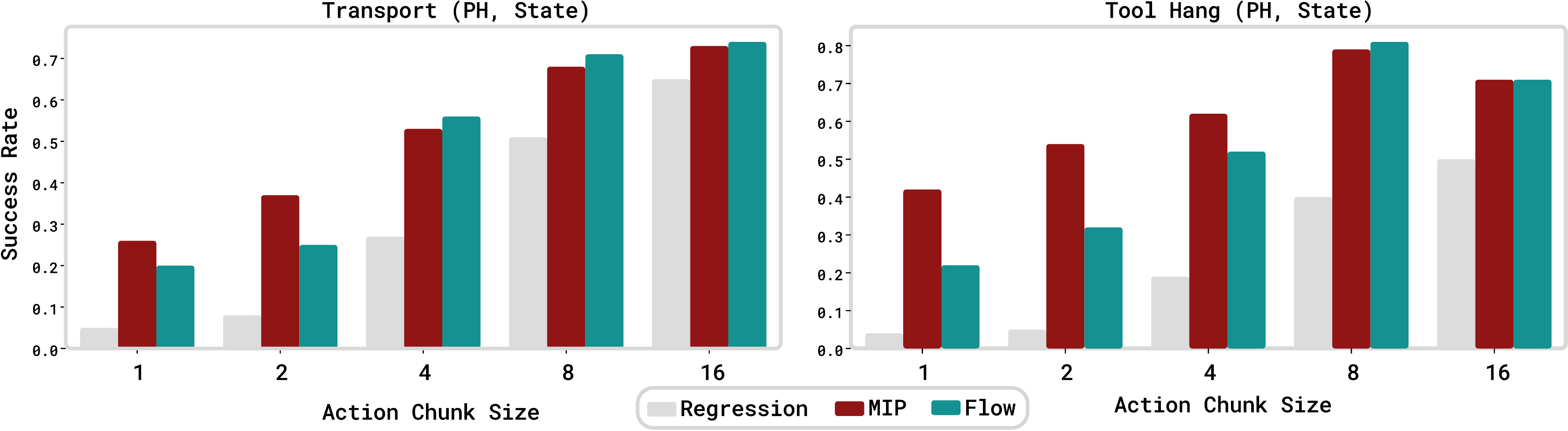}
    \caption{\footnotesize \textbf{Action chunk size ablation.} Success rate are averaged across 3 seeds, 3 architectures and 5 checkpoints on \toolhang and \transport tasks. Prediction horizon is set to powers of 2 to make sure it is compatible with \chiunet Architecture.}
    \label{fig:action_chunk_ablation}
\end{figure}

\subsection{Loss Norm Type Ablation Study}
\label{sec:loss_norm_type_ablation_study}

Previous work~\citep{kim2024openvla} shows that $\ell_1$ loss is superior to $\ell_2$ loss for regression policy.
To test this hypothesis, we ablate the loss norm type and compare the performance of different methods with $\ell_1$ and $\ell_2$ loss.
\Cref{tab:loss_norm_type_ablation} shows the performance comparison of different methods with $\ell_1$ and $\ell_2$ loss, where we find that $\ell_1$ loss generally outperforms $\ell_2$ loss, especially for regression policy.
However, even with $\ell_1$ loss, we still observe that Regression $<$ \Minimaliterativepolicy $\approx$ Flow, highlighting the importance of the stochasticity injection and iterative computation is independent of the loss function.

\begin{table}[h]
    \centering
    \small
    \begin{tabular}{ll|cc}
        \toprule
        Architecture & Method        & Transport (ph) & Tool Hang (ph) \\
        \midrule
        DiT          & Regression L1 & 0.72/0.64      & 0.76/0.65      \\
        DiT          & Regression L2 & 0.50/0.45      & 0.50/0.37      \\
        DiT          & MIP L1        & 0.81/0.73      & 0.91/0.84      \\
        DiT          & MIP L2        & 0.79/0.69      & 0.92/0.85      \\
        DiT          & Flow L1       & 0.83/0.76      & 0.93/0.84      \\
        DiT          & Flow L2       & 0.81/0.71      & 0.89/0.75      \\
        \midrule
        Transformer  & Regression L1 & 0.67/0.57      & 0.44/0.33      \\
        Transformer  & Regression L2 & 0.65/0.54      & 0.31/0.25      \\
        Transformer  & MIP L1        & 0.80/0.68      & 0.85/0.77      \\
        Transformer  & MIP L2        & 0.80/0.69      & 0.80/0.72      \\
        Transformer  & Flow L1       & 0.77/0.69      & 0.81/0.71      \\
        Transformer  & Flow L2       & 0.79/0.65      & 0.73/0.61      \\
        \midrule
        UNet         & Regression L1 & 0.84/0.71      & 0.71/0.55      \\
        UNet         & Regression L2 & 0.66/0.59      & 0.73/0.59      \\
        UNet         & MIP L1        & 0.85/0.68      & 0.76/0.67      \\
        UNet         & MIP L2        & 0.81/0.72      & 0.82/0.71      \\
        UNet         & Flow L1       & 0.85/0.69      & 0.87/0.71      \\
        UNet         & Flow L2       & 0.83/0.75      & 0.87/0.73      \\
        \bottomrule
    \end{tabular}
    \caption{Comparison of L1 vs L2 norm across different methods and architectures. Report average/best performance across 5 checkpoints with 3 random seeds.}
    \label{tab:loss_norm_type_ablation}
\end{table}

\section{Diversity of GCPs and RCPs}\label{sec:diversity}

A commonly believed hypothesis is that GCPs can express more diverse behaviors than RCPs by capturing the full distribution of expert actions~\citep{shafiullah2022behavior}.
We evalute different variants of GCPs and RCPs on \KitchenEnv, where the expert shows multiple task completion orders.
As demonstrated in \cref{fig:kitchen_methods_comparison}, GCPs with both stochastic and deterministic sampling show similar task completion order diversity.
Deterministic policies like regression and \Minimaliterativepolicy (to be introduced in \Cref{sec:parsing}) also demonstrate similar task completion order diversity.
This indicates that, given sparse expert demonstrations, both GCPs and RCP learns high-Lipschitz policies to switch between different modes given different observations (corresponding to (b.2) case in~\Cref{fig:multimodality_vs_lipschitz}).  RCPs and GCPs are equally good at learning such behaviors (\Cref{fig:diversity_of_gcp_and_rcp}) which explain why we see similar performance for both policy parametrizations, even on seemingly multi-modal tasks like \KitchenEnv.

\begin{figure}[ht]
    \centering
    \includegraphics[width=1.0\linewidth]{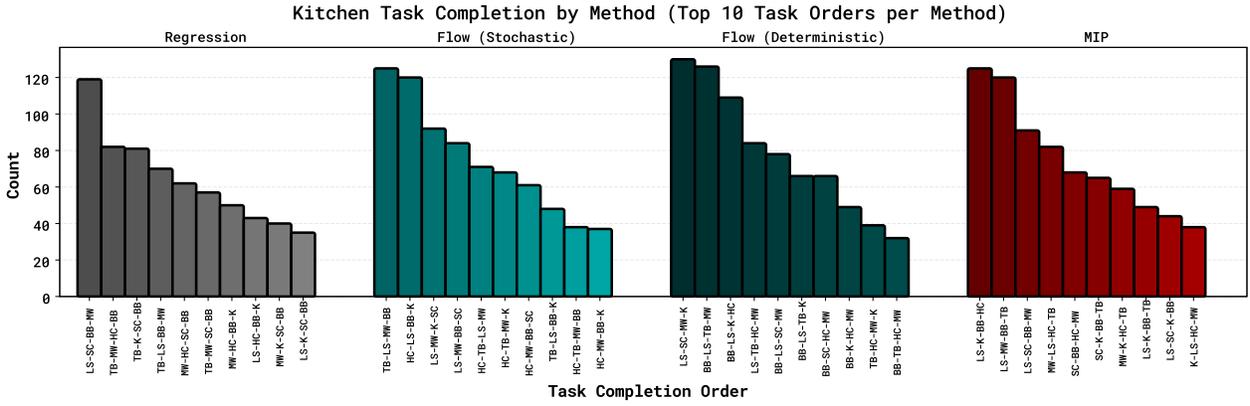}
    \captionof{figure}{\footnotesize \textbf{Task completion order in Kitchen environment with different methods.}
        We plot the count of different task completion orders for different methods to evaluate the diversity of the policies.
        The x-axis shows the task completion order, where each sub-task is represented by its initials.
        For each run, we collect 1000 trajectories with the same seed shared by all methods.
        For flow, we evaluate both stochastic and deterministic modes.
    }
    \label{fig:diversity_of_gcp_and_rcp}
\end{figure}

\section{Theoretical analysis of GCP's expressivity}
\label{app:theory}

\subsection{Formal statement of Theorem \ref{thm:informal}}

In this section, we introduce the notation and definition required for the subsequent proofs and provide the formal statement of \Cref{thm:informal} from the main text. Throughout, let $\|\cdot\|_{\circ}$ denote any matrix norm satisfying the property $\|X_1 X_2\|_{\circ}  \le \|X_1\|_{\op} \|X_2\|_{\circ}$. In contrast to the notation used in the main text, we define $\Phi_{s, t}(a, o)$ as the solution at time $t$ of the ODE:
\begin{align}
    \ddt a_t = b^{\star}_t(a_t \mid o), \quad \text{with initial condition } a_s = a.
\end{align}

Note that $\Phi_{0,1}(\gen_0 = z, \cond)$ coincides with the definition of $\pi^{\star}_{\theta}(z, \cond)$ in the main text. Next, we define the notion of \emph{$\kappa$-log-concavity}.

\begin{definition}[$\kappa$-log-concavity]
    A distribution with density $\rho = e^{-V(x)}$ is said to be \emph{$\kappa$-log-concave} if $V \in C^{2}(\R^{d})$ and its Hessian satisfies $\nabla^2 V(x) \succcurlyeq \kappa \eye$ for all $x\in\R^d$ and some $\kappa > 0$.
\end{definition}

With this notation in place, we now state the formal version of \Cref{thm:informal}.

\begin{theorem}
    \label{thm:lipschitz}
    Suppose that
    \begin{align}
        b^{\star}_t = \Exp[\dot I_t \mid I_t, \cond], \qquad \text{where } I_t = (1-t) a_0 + t a_1, \quad a_0 \sim \Normal(0,\eye), \quad a_1 \sim \rho_1,
    \end{align}
    where $\rho_1$ is $\kappa$-log-concave. Then, we have
    \begin{align}
        \|\nabla_\cond\Phi_{0,t}(\gen_0, \cond)\|_{\circ} \le \int_0^t \sqrt{\frac{\kappa (1-t)^2 + t^{2}}{\kappa (1-s)^2 + s^{2}}} \cdot \|\nabla_\cond b^{\star}_{s}(\gen_s \mid \cond)\|_{\circ} \rmd s.
    \end{align}
    In particular, for $t = 1$ we obtain
    \begin{align}
        \label{eq:holder}
        \|\nabla_\cond\Phi_{0,1}(\gen_0, \cond)\|_{\circ} \le \sqrt{1 + \kappa^{-1}} \int_0^1 \|\nabla_\cond b^{\star}_{s}(\gen_s \mid \cond)\|_{\circ} \rmd s.
    \end{align}
\end{theorem}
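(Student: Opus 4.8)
The plan is to control the Jacobian $J_t := \nabla_\cond \Phi_{0,t}(\gen_0, \cond)$ through the variational (sensitivity) equation, and then bound the resulting state-transition operator using the \emph{spatial} Jacobian of the velocity field, whose symmetric part is controlled by $\kappa$-log-concavity.

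First I would differentiate the defining ODE $\ddt \gen_t = b^{\star}_t(\gen_t \mid \cond)$ in $\cond$ to obtain the linear matrix ODE $\dot J_t = A_t J_t + B_t$ with $J_0 = 0$, where $A_t := \nabla_\gen b^{\star}_t(\gen_t \mid \cond)$ is the spatial Jacobian along the trajectory and $B_t := \nabla_\cond b^{\star}_t(\gen_t \mid \cond)$ is the explicit $\cond$-derivative. Variation of constants gives $J_t = \int_0^t \Psi_{s,t} B_s \, \rmd s$, where $\Psi_{s,t}$ is the state-transition operator of $\dot Y = A_t Y$ from time $s$ to $t$; submultiplicativity of $\|\cdot\|_{\circ}$ against the operator norm then yields $\|J_t\|_{\circ} \le \int_0^t \|\Psi_{s,t}\|_{\op}\,\|B_s\|_{\circ}\,\rmd s$. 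It remains to show $\|\Psi_{s,t}\|_{\op} \le \sqrt{g(t)/g(s)}$ with $g(r) := \kappa(1-r)^2 + r^2$.

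The key step is to bound $\|\Psi_{s,t}\|_{\op}$ via the logarithmic norm: since $\frac{\rmd}{\rmd t}\|\Psi_{s,t}\|_{\op} \le \mu(A_t)\,\|\Psi_{s,t}\|_{\op}$ with $\mu(A) := \lambda_{\max}\!\big(\tfrac12(A+A^\top)\big)$, Gr\"onwall gives $\|\Psi_{s,t}\|_{\op} \le \exp\!\big(\int_s^t \mu(A_\tau)\,\rmd\tau\big)$. To evaluate $\mu(A_t)$ I would first rewrite the velocity field in closed form: using $a_1 - a_0 = (1-t)^{-1}(a_1 - I_t)$ together with the Gaussian score identity $\Exp[t a_1 \mid I_t = x] = x + (1-t)^2 \nabla_x \log\rho_t(x\mid\cond)$, one finds $b^{\star}_t(x\mid\cond) = t^{-1}x + (1-t)\,t^{-1}\,\nabla_x \log\rho_t(x\mid\cond)$, where $\rho_t(\cdot\mid\cond)$ is the law of $I_t$. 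Hence $A_t = t^{-1}\eye + (1-t)\,t^{-1}\,\nabla^2_x \log\rho_t$ is \emph{symmetric}, so $\mu(A_t) = \lambda_{\max}(A_t)$.

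The heart of the argument is the second-order Stein/Tweedie identity $\nabla^2_x\log\rho_t = -(1-t)^{-2}\eye + (1-t)^{-4} C_t$, where $C_t := \mathrm{Cov}(t a_1 \mid I_t = x, \cond)$ is a conditional covariance. Since $a_1\sim\rho_1$ is $\kappa$-log-concave, the rescaled variable $t a_1$ is $\kappa/t^2$-log-concave, so after Gaussian conditioning at scale $(1-t)$ the posterior is $\big(\kappa/t^2 + (1-t)^{-2}\big)$-strongly log-concave; Brascamp--Lieb then gives $C_t \preccurlyeq \frac{t^2(1-t)^2}{\kappa(1-t)^2+t^2}\,\eye$. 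Substituting yields $\lambda_{\max}(A_t) \le \frac{t-\kappa(1-t)}{\kappa(1-t)^2+t^2}$, which I would recognize as exactly $\frac{\rmd}{\rmd t}\big[\tfrac12\log g(t)\big]$. Integrating over $[s,t]$ telescopes to $\int_s^t \mu(A_\tau)\,\rmd\tau \le \tfrac12\log\frac{g(t)}{g(s)}$, hence $\|\Psi_{s,t}\|_{\op} \le \sqrt{g(t)/g(s)}$, proving the first claim. For $t=1$ note $g(1)=1$ and $\min_{s\in[0,1]} g(s) = g\!\big(\tfrac{\kappa}{1+\kappa}\big) = \tfrac{\kappa}{1+\kappa}$, so the integrand $\sqrt{g(1)/g(s)} = g(s)^{-1/2} \le \sqrt{1+\kappa^{-1}}$ uniformly; pulling this constant out of the integral gives \eqref{eq:holder}. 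I expect the main obstacle to be the covariance step: deriving the Stein/Tweedie identity cleanly and keeping the log-concavity bookkeeping (the rescaling of $a_1$ by $t$ against the $(1-t)$-scale Gaussian) exact enough that the logarithmic-norm integrand is precisely $\tfrac12\frac{\rmd}{\rmd t}\log g$; the remaining variation-of-constants and Gr\"onwall estimates are routine.
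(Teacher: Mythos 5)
Your proposal is correct and follows essentially the same route as the paper: the variational equation $\dot J_t = A_tJ_t + B_t$ with $J_0=0$, variation of constants, a Gr\"onwall bound on the state-transition operator, and the one-sided estimate $\lambda_{\max}(A_t)\le \tfrac{t-\kappa(1-t)}{\kappa(1-t)^2+t^2}=\tfrac{\rmd}{\rmd t}\tfrac12\log(\kappa(1-t)^2+t^2)$, followed by minimizing $\kappa(1-s)^2+s^2$ at $s=\kappa/(1+\kappa)$ for the $t=1$ case. The only difference is that you derive the key contractivity bound on $A_t$ from scratch via Tweedie's formula and Brascamp--Lieb, where the paper imports it as Theorem 6 of \citet{daniels2025contractivity}; your explicit use of the logarithmic norm $\mu(A_t)=\lambda_{\max}(\tfrac12(A_t+A_t^\top))$ in the Gr\"onwall step is in fact slightly more careful than the paper's statement in terms of $\|A_t\|_{\op}$, since the one-sided PSD bound can be negative.
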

\begin{remark}
    \Cref{thm:informal} follows immediately from the fact that both the operator and the Frobenius norms satisfy $\|X_1 X_2\|_{\circ} \le \|X_1\|_{\op} \|X_2\|_{\circ}$ together with the inequality \Cref{eq:holder}.
\end{remark}

\subsection{Supporting lemmas}

We state the supporting lemmas for proving \Cref{thm:lipschitz} below and provide their proofs immediately for completeness. As a first step, we analyze the dynamical system satisfied by $\nabla_\cond \Phi_{s,t}(\gen, \cond)$.
\begin{lemma} Define $\gen_t := \Phi_{0,t}(\gen_0, \cond)$ where $\gen_0$ is the initial condition, and define the matrices
    \begin{align}
        M_t := \nabla_\cond\Phi_{0,t}(\gen_0, \cond), \quad A_t := (\nabla_\gen b^{\star}_t)(\gen_t \mid \cond), \quad E_t := (\nabla_\cond b^{\star}_t)(\gen_t \mid \cond)
    \end{align}
    Then,
    \begin{align}
        \ddt M_t = A_t M_t + E_t, \quad M_0 = 0
    \end{align}
\end{lemma}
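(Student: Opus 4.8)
The plan is to recognize the claimed identity as the \emph{variational} (or sensitivity) equation associated with the flow ODE, and to derive it by differentiating the defining integral equation for $\gen_t$ with respect to the conditioning parameter $\cond$. Since $\gen_t = \Phi_{0,t}(\gen_0,\cond)$ solves $\ddt \gen_t = b^{\star}_t(\gen_t \mid \cond)$ with a $\cond$-independent initial condition $\gen_0$, the only real subtlety is justifying that the solution depends smoothly on $\cond$; once that is in hand, the identity is a direct application of the chain rule.

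First I would rewrite the flow ODE in integral form,
\begin{align}
\gen_t = \gen_0 + \int_0^t b^{\star}_s(\gen_s \mid \cond)\,\rmd s .
\end{align}
Granting that $b^{\star}$ is jointly $C^1$ in $(\gen,\cond)$ with derivatives bounded uniformly on compact $t$-intervals, standard results on smooth dependence of ODE solutions on parameters guarantee that $\cond \mapsto \gen_t$ is continuously differentiable and license differentiation under the integral sign. Applying $\nabla_\cond$ to both sides and using $\nabla_\cond \gen_0 = 0$ (the initial noise does not depend on $\cond$) gives
\begin{align}
M_t = \nabla_\cond \gen_t = \int_0^t \nabla_\cond\!\big[b^{\star}_s(\gen_s \mid \cond)\big]\,\rmd s .
\end{align}
Because $\gen_s$ itself depends on $\cond$, the chain rule splits the integrand into an implicit and an explicit contribution,
\begin{align}
\nabla_\cond\!\big[b^{\star}_s(\gen_s \mid \cond)\big] = (\nabla_\gen b^{\star}_s)(\gen_s \mid \cond)\,\nabla_\cond \gen_s + (\nabla_\cond b^{\star}_s)(\gen_s \mid \cond) = A_s M_s + E_s .
\end{align}
Substituting back yields $M_t = \int_0^t (A_s M_s + E_s)\,\rmd s$, and differentiating in $t$ recovers $\ddt M_t = A_t M_t + E_t$, while $M_0 = 0$ is immediate since the integral from $0$ to $0$ vanishes.

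The single nontrivial ingredient is the regularity that licenses differentiation under the integral, i.e.\ the smooth dependence of the flow on $\cond$. I would obtain the joint $C^1$ regularity of $b^{\star}_t(\gen \mid \cond) = \Exp[\dot I_t \mid I_t = \gen, \cond]$ from the stochastic-interpolant structure: for $t \in (0,1)$ the conditional law of $(\gen_0,\gen_1)$ given $I_t = \gen$ is a smooth, log-concave-tilted Gaussian family, so the conditional expectation is smooth in $(\gen,\cond)$ with controllable derivatives. Concretely, I would run the argument on a compact subinterval $[\epsilon, t] \subset (0,1)$, where $A_s$ and $E_s$ are continuous and bounded, invoke the standard existence/uniqueness and $C^1$-dependence theory to obtain $M_s$, and then pass to the limit $\epsilon \to 0$. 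The $C^2$ regularity built into the log-concavity hypothesis (together with the Gaussian smoothing coming from $\gen_0 \sim \Normal(0,\eye)$) supplies exactly this, and making it fully rigorous is the main obstacle; the remaining variational calculation above is routine and is what subsequently feeds the Gr\"onwall-type estimate used to prove the quantitative bound of \Cref{thm:lipschitz}.
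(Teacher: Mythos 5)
Your proof is correct and follows essentially the same route as the paper: the paper simply interchanges $\ddt$ and $\nabla_\cond$ and applies the chain rule to $b^{\star}_t(\Phi_{0,t}(\gen_0,\cond)\mid\cond)$, which is exactly your computation phrased via the integral form of the ODE. The additional discussion of regularity and smooth dependence on parameters is a welcome justification that the paper omits, but it does not change the substance of the argument.
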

\begin{proof} Since $\Phi_{0,0}(\gen_0, \cond) = \gen_0$, $M_0 = 0$. Moreover,
    \begin{align}
        \ddt \nabla_\cond\Phi_{0,t}(\gen_0, \cond) & = \nabla_\cond \ddt \Phi_{0,t}(\gen_0, \cond) = \nabla_\cond (b^{\star}_t(\Phi_{0,t}(\gen_0, \cond) \mid \cond))                            \\
                                                   & = (\nabla_\gen b^{\star}_t)(\gen_t \mid \cond) \cdot \nabla_\cond \Phi_{0,t}(\gen_0, \cond) + (\nabla_\cond b^{\star}_t)(\gen_t \mid \cond)
        \label{eq:matrices ode}
    \end{align}
\end{proof}

Note that, from the previous lemma, we may introduce $\Lambda_{s, t}$ as the solution to the matrix ODE
\begin{align}
    \ddt \Lambda_{s,t} = A_t \Lambda_{s,t}, \quad \Lambda_{s,s} = \eye.
    \label{eq:Lambda ode}
\end{align}
Moreover, it follows that
\begin{align}
    \Lambda_{s,t} = \nabla_{\gen}\Phi_{s,t}(\gen_s, \cond).
\end{align}

We are now ready to state the relation between $M_t$ and $\Lambda_{s, t}$.

\begin{lemma}
    \begin{align}
        M_t = \int_0^t \Lambda_{s,t} E_s \rmd s.
    \end{align}
\end{lemma}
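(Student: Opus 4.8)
The statement to prove is the variation-of-parameters (Duhamel) representation
\begin{align}
    M_t = \int_0^t \Lambda_{s,t} E_s \,\rmd s,
\end{align}
where $M_t = \nabla_\cond\Phi_{0,t}(\gen_0,\cond)$ solves the inhomogeneous linear matrix ODE $\ddt M_t = A_t M_t + E_t$ with $M_0 = 0$ (the previous lemma), and $\Lambda_{s,t}$ is the state-transition matrix solving $\ddt \Lambda_{s,t} = A_t \Lambda_{s,t}$ with $\Lambda_{s,s} = \eye$. The plan is to define the candidate $N_t := \int_0^t \Lambda_{s,t} E_s \,\rmd s$, show that it satisfies the \emph{same} initial-value problem as $M_t$, and then invoke uniqueness of solutions to linear ODEs with continuous (indeed smooth, since $b^\star_t$ is assumed $C^2$ along the trajectory) coefficients to conclude $N_t = M_t$.

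First I would check the initial condition: $N_0$ is an integral over the empty interval, so $N_0 = 0 = M_0$. Next I would differentiate $N_t$ using the Leibniz integral rule, which produces a boundary term plus an interior term:
\begin{align}
    \ddt N_t = \Lambda_{t,t} E_t + \int_0^t \frac{\partial}{\partial t}\Lambda_{s,t}\, E_s \,\rmd s.
\end{align}
Here the boundary contribution uses $\Lambda_{t,t} = \eye$, giving $E_t$. For the interior term I would substitute the defining ODE $\frac{\partial}{\partial t}\Lambda_{s,t} = A_t \Lambda_{s,t}$, and then factor $A_t$ out of the integral since it does not depend on the integration variable $s$:
\begin{align}
    \int_0^t A_t \Lambda_{s,t} E_s \,\rmd s = A_t \int_0^t \Lambda_{s,t} E_s \,\rmd s = A_t N_t.
\end{align}
Combining the two pieces yields $\ddt N_t = A_t N_t + E_t$, exactly the ODE satisfied by $M_t$. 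Since $N_t$ and $M_t$ solve the same linear initial-value problem, uniqueness gives $M_t = N_t$, which is the claim.

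I do not anticipate a serious obstacle here, as the result is the standard Duhamel formula; the only points requiring mild care are the correct application of the Leibniz rule (differentiating $\Lambda_{s,t}$ with respect to its \emph{second} argument, which is precisely the direction governed by \Cref{eq:Lambda ode}), the legitimacy of pulling the $s$-independent factor $A_t$ through the integral, and the continuity of $s \mapsto A_s, E_s$ along the ODE trajectory needed to justify both differentiation under the integral and the uniqueness theorem. All of these follow from the smoothness assumptions on $\flowideal_t$ already in force.
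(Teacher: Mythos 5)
Your proof is correct, but it takes a different (equally standard) route from the paper's. You verify the candidate $N_t = \int_0^t \Lambda_{s,t}E_s\,\rmd s$ directly: check $N_0=0$, differentiate under the integral via the Leibniz rule using $\Lambda_{t,t}=\eye$ and $\tfrac{\partial}{\partial t}\Lambda_{s,t}=A_t\Lambda_{s,t}$, and conclude by uniqueness of solutions to the linear initial-value problem. The paper instead \emph{derives} the formula with an integrating factor: it computes $\ddt(\Lambda_{0,t}^{-1}M_t)=\Lambda_{0,t}^{-1}E_t$, integrates, and then rewrites $\Lambda_{0,t}\Lambda_{0,s}^{-1}=\Lambda_{s,t}$ using the inverse and composition identities of the state-transition matrix. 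The paper's route needs invertibility of $\Lambda_{0,t}$ and the semigroup property but produces the formula without having to guess it; your route avoids both of those facts at the cost of invoking a uniqueness theorem and justifying differentiation under the integral sign (which you correctly flag as following from continuity of $s\mapsto A_s, E_s$ along the trajectory). Both arguments are complete and the discrepancy is purely one of presentation.
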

\begin{proof}
    Using $\ddt \Lambda_{0, t}^{-1} = -\Lambda_{0, t}^{-1} A_t$ and we consider the time derivative of $\Lambda_{0, t}^{-1} M_t$:
    \begin{align}
        \ddt (\Lambda_{0, t}^{-1} M_t) & = (\ddt \Lambda_{0, t}^{-1}) M_t + \Lambda_{0, t}^{-1}(\ddt M_t)                       \\
                                       & = -\Lambda_{0, t}^{-1} A_t M_t + \Lambda_{0, t}^{-1} A_t M_t + \Lambda_{0, t}^{-1} E_t \\
                                       & = \Lambda_{0, t}^{-1} E_t.
    \end{align}
    Note that $\Lambda_{0, t}$ is invertible by uniqueness of the ODE solution in \Cref{eq:Lambda ode}. Integrating both sides with respect to $t$ gives
    \begin{align}
        \Lambda_{0, t}^{-1} M_t = \int_{0}^{t} \Lambda_{0, s}^{-1} E_s \rmd s.
    \end{align}
    Hence, we have
    \begin{align}
        M_t = \Lambda_{0, t} \int_{0}^{t} \Lambda_{0, s}^{-1} E_s \rmd s.
    \end{align}
    Note that $\Lambda_{0, s}^{-1} = \Lambda_{s, 0}$ and $\Lambda_{0, t} \cdot \Lambda_{s, 0} = \Lambda_{s, t}$, we obtain
    \begin{align}
        M_t = \int_0^t \Lambda_{s,t} E_s \rmd s.
    \end{align}
\end{proof}

An immediate application of the triangle inequality and the property of $\|\cdot \|_{\circ}$ yields
\begin{align}
    \label{eq:M_t bound}
    \|M_t\|_\circ \le \int_0^t\|\Lambda_{s,t}\|_\op \|E_s\|_{\circ}\rmd s.
\end{align}

Moreover, $\|\Lambda_{s,t}\|_\op$ admits the bound:
\begin{lemma}
    \label{lemma: Lambda bound}
    \begin{align}
        \|\Lambda_{s,t}\|_\op \le \exp\left(\int_{s}^t \|A_{s'}\|_\op \rmd s'\right).
    \end{align}
\end{lemma}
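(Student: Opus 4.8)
The plan is to reduce this matrix Grönwall bound to the scalar Grönwall inequality applied to the real-valued function $\phi(t) := \|\Lambda_{s,t}\|_\op$. The starting point is to rewrite the defining ODE $\ddt \Lambda_{s,t} = A_t \Lambda_{s,t}$, together with the initial condition $\Lambda_{s,s} = \eye$, in its equivalent integral form
\begin{align}
    \Lambda_{s,t} = \eye + \int_s^t A_\tau \Lambda_{s,\tau}\, \rmd \tau.
\end{align}
This step is justified by the existence and uniqueness of solutions to the linear matrix ODE in \Cref{eq:Lambda ode}, which follows from the regularity of the flow field $b^{\star}$.

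Next, I would take the operator norm of both sides, apply the triangle inequality for the matrix-valued integral (the norm of an integral is at most the integral of the norms), and invoke submultiplicativity $\|A_\tau \Lambda_{s,\tau}\|_\op \le \|A_\tau\|_\op \|\Lambda_{s,\tau}\|_\op$ together with $\|\eye\|_\op = 1$. This yields the scalar integral inequality
\begin{align}
    \phi(t) \le 1 + \int_s^t \|A_\tau\|_\op\, \phi(\tau)\, \rmd \tau.
\end{align}
Applying the integral form of the Grönwall inequality with constant $C = 1$ and multiplier $g(\tau) = \|A_\tau\|_\op$ immediately gives $\phi(t) \le \exp\!\big(\int_s^t \|A_\tau\|_\op\, \rmd \tau\big)$, which is exactly the claimed bound.

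I expect no serious obstacle here: the result is the standard operator-norm Grönwall estimate, and the only technical point is ensuring that $\tau \mapsto \|A_\tau\|_\op$ is integrable on $[s,t]$, which follows from continuity of $A_t = (\nabla_\gen b^{\star}_t)(\gen_t \mid \cond)$ along the flow. An alternative route would differentiate $\phi$ directly, using $\tfrac{\rmd^+}{\rmd t}\|\Lambda_{s,t}\|_\op \le \|A_t \Lambda_{s,t}\|_\op \le \|A_t\|_\op \|\Lambda_{s,t}\|_\op$ and integrating; however, this requires handling the upper Dini derivative at points where the operator norm fails to be differentiable, so the integral-form argument above is cleaner and preferable.
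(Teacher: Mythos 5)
Your proof is correct. It is the same underlying idea as the paper's---a Gr\"onwall estimate for the linear matrix ODE $\ddt \Lambda_{s,t} = A_t \Lambda_{s,t}$---but the implementation differs. You convert the ODE to its integral form, bound $\phi(t) = \|\Lambda_{s,t}\|_\op$ by $1 + \int_s^t \|A_\tau\|_\op\,\phi(\tau)\,\rmd\tau$ via the triangle inequality and submultiplicativity, and invoke the integral-form Gr\"onwall inequality; this cleanly sidesteps the non-differentiability of $t\mapsto\|\Lambda_{s,t}\|_\op$, which you correctly flag as the obstacle to the naive differential route. The paper instead takes the differential route but avoids the Dini-derivative issue by a different device: it fixes an arbitrary vector $\omega$, sets $f_\omega(s,t) = \Lambda_{s,t}\,\omega$, and differentiates the Euclidean norm $\|f_\omega(s,t)\|_2$ directly (which is smooth wherever $f_\omega \neq 0$, guaranteed by invertibility of $\Lambda_{s,t}$), obtaining $\tfrac{\rmd}{\rmd t}\|f_\omega\|_2 \le \|A_t\|_\op\|f_\omega\|_2$ and then $\|f_\omega(s,t)\|_2 \le \|\omega\|_2\exp\bigl(\int_s^t\|A_{s'}\|_\op\,\rmd s'\bigr)$, from which the operator-norm bound follows by taking the supremum over unit $\omega$. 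Both arguments are standard and equally rigorous; yours is marginally more self-contained in that it needs only the integral Gr\"onwall lemma, while the paper's vectorized differential argument avoids any discussion of integrability or measurability of $\tau\mapsto\|A_\tau\|_\op$ beyond what is needed to state the final bound.
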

\begin{proof}
    Define $f_{\omega}(s, t) = \Lambda_{s, t} \omega$. We have
    \begin{align}
        \frac{\rmd}{\rmd t} \|f_{\omega}(s, t)\|_2 & = \frac{1}{\|f_{\omega}(s, t)\|_2} f_{\omega}(s, t)^{\top} \frac{\rmd}{\rmd t} f_{\omega}(s, t)  \\
                                                   & = \frac{1}{\|f_{\omega}(s, t)\|_2} \omega^{\top} \Lambda_{s, t}^{\top} A_t \Lambda_{s, t} \omega \\
                                                   & \le \|A_t\|_{\op} \|f_{\omega}(s, t)\|_2.
    \end{align}
    By Gronwall's theorem and $\|f_{\omega}(s, s)\|_2 = \|\omega\|_2$, we obtain
    \begin{align}
        \|f_{\omega}(s, t)\|_2 \le \|\omega \|_2 \exp(\int_{s}^t \|A_{s'}\|_\op \rmd s').
    \end{align}
\end{proof}

To bound $\exp\left(\int_{s}^t \|A_{s'}\|_\op \rmd s'\right)$, we use the following result from \citep{daniels2025contractivity}, included here for completeness.
\begin{theorem}[Restated; Theorem 6 in \citep{daniels2025contractivity}]
    \label{thm:Max Daniel}
    Suppose $\mu_0 \sim \Normal(0, \eye)$ and $\mu_1$ is a $\kappa$-log-concave distribution with $\kappa > 0$. Define
    \begin{align}
        I_t = \alpha_t X_0 + \beta_t X_1, \qquad X_0\sim\mu_0, \quad X_1\sim\mu_1,
    \end{align}
    and let $v_t(x)$ denote the corresponding flow field.
    Then,
    \begin{align}
        \nabla_x v_t(x) \preccurlyeq \frac{\kappa \alpha_{t} \dot{\alpha_{t}} + \beta_t \dot{\beta_t}}{\kappa \alpha_{t}^2 + \beta_{t}^{2}} \eye.
    \end{align}
\end{theorem}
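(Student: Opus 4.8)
The plan is to route everything through the posterior mean of the data variable $X_1$, identify the velocity field's Jacobian with a (scaled) posterior covariance, and then control that covariance using strong log-concavity of the posterior law.

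First, I would rewrite the velocity field. Since $v_t(x) = \Exp[\dot I_t \mid I_t = x] = \dot\alpha_t\,\Exp[X_0 \mid I_t=x] + \dot\beta_t\,\Exp[X_1 \mid I_t=x]$ and $\alpha_t X_0 = I_t - \beta_t X_1$, writing $m_t(x) := \Exp[X_1 \mid I_t = x]$ gives
\[
v_t(x) = \frac{\dot\alpha_t}{\alpha_t}\,x + \Big(\dot\beta_t - \frac{\dot\alpha_t\beta_t}{\alpha_t}\Big) m_t(x),
\qquad
\nabla_x v_t(x) = \frac{\dot\alpha_t}{\alpha_t}\eye + \Big(\dot\beta_t - \frac{\dot\alpha_t\beta_t}{\alpha_t}\Big)\nabla_x m_t(x).
\]

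Second, I would prove the covariance identity $\nabla_x m_t(x) = \tfrac{\beta_t}{\alpha_t^2}\,\mathrm{Cov}[X_1 \mid I_t=x]$. Because $\mu_0 = \Normal(0,\eye)$, the law of $I_t$ given $X_1 = x_1$ is $\Normal(\beta_t x_1, \alpha_t^2\eye)$, so the joint density is $g(x,x_1) = \mu_1(x_1)\,\Normal(x;\beta_t x_1,\alpha_t^2\eye)$ with $\nabla_x g = \alpha_t^{-2}(\beta_t x_1 - x)\,g$. Differentiating $m_t(x) = \int x_1 g\,\rmd x_1 / \int g\,\rmd x_1$ by the quotient rule, the $x$-dependent cross terms ($-\alpha_t^{-2} m_t(x)\,x^\top$) cancel between the two contributions, leaving exactly $\tfrac{\beta_t}{\alpha_t^2}\big(\Exp[X_1 X_1^\top\mid I_t=x] - m_t(x)m_t(x)^\top\big)$.

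Third, I would bound the posterior covariance by Brascamp--Lieb. With $\mu_1 = e^{-V}$, the conditional density of $X_1$ given $I_t=x$ is proportional to $\exp\!\big(-V(x_1) - \tfrac{1}{2\alpha_t^2}\|x-\beta_t x_1\|^2\big)$, whose Hessian in $x_1$ is $\nabla^2 V(x_1) + \tfrac{\beta_t^2}{\alpha_t^2}\eye \succcurlyeq \big(\kappa + \tfrac{\beta_t^2}{\alpha_t^2}\big)\eye$ since $\mu_1$ is $\kappa$-log-concave. Hence the conditional law is $\big(\kappa + \beta_t^2/\alpha_t^2\big)$-strongly log-concave, and Brascamp--Lieb gives $\mathrm{Cov}[X_1\mid I_t=x] \preccurlyeq \tfrac{\alpha_t^2}{\kappa\alpha_t^2+\beta_t^2}\eye$. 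Finally I would combine: setting $c_t := \big(\dot\beta_t - \dot\alpha_t\beta_t/\alpha_t\big)\beta_t/\alpha_t^2$ for the scalar multiplying the covariance, and using $c_t \ge 0$ (for the interpolant at hand, $\alpha_t=1-t$, $\beta_t=t$, so $c_t = t/(1-t)^3 \ge 0$), the covariance upper bound propagates in the PSD order to give $\nabla_x v_t(x) \preccurlyeq \tfrac{\dot\alpha_t}{\alpha_t}\eye + c_t\,\tfrac{\alpha_t^2}{\kappa\alpha_t^2+\beta_t^2}\eye$; clearing the denominator $\alpha_t(\kappa\alpha_t^2+\beta_t^2)$ and cancelling the $\pm\dot\alpha_t\beta_t^2$ terms collapses the scalar to $\tfrac{\kappa\alpha_t\dot\alpha_t + \beta_t\dot\beta_t}{\kappa\alpha_t^2+\beta_t^2}$, the claimed bound.

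The main obstacle is the covariance identity in Step 2: it is the one place where Gaussianity of $\mu_0$ is indispensable (it makes the likelihood score linear in $x$, i.e. $\nabla_x\log\Normal(x;\beta_t x_1,\alpha_t^2\eye) = \alpha_t^{-2}(\beta_t x_1 - x)$), and one must track the product-rule terms carefully to see the $x$-linear pieces cancel and leave a clean covariance. The secondary subtlety is the sign check $c_t\ge 0$, which is exactly what lets the upper bound on $\mathrm{Cov}$ translate into an upper bound on $\nabla_x v_t$ in the semidefinite order; were $c_t<0$ the inequality would point the wrong way and the stated coefficient would fail.
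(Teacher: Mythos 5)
The paper itself contains no proof of this statement: \Cref{thm:Max Daniel} is imported verbatim from Theorem~6 of \citet{daniels2025contractivity} (``included here for completeness'') and is consumed as a black box in \Cref{lemma:avg velocity bound}, so there is no in-paper argument to compare yours against. Your proof is correct, and it is the canonical derivation of such velocity-Jacobian bounds: the decomposition $v_t(x) = \frac{\dot\alpha_t}{\alpha_t}x + \bigl(\dot\beta_t - \frac{\dot\alpha_t\beta_t}{\alpha_t}\bigr)m_t(x)$ is right; the covariance identity $\nabla_x m_t(x) = \frac{\beta_t}{\alpha_t^2}\mathrm{Cov}[X_1 \mid I_t = x]$ checks out (the Gaussianity of $\mu_0$ makes the likelihood score linear in $x$, and your cancellation of the $\alpha_t^{-2}m_t(x)x^\top$ cross terms in the quotient rule is exactly what happens); the posterior potential $V(x_1) + \frac{1}{2\alpha_t^2}\|x - \beta_t x_1\|^2$ is $(\kappa + \beta_t^2/\alpha_t^2)$-strongly convex, so Brascamp--Lieb gives $\mathrm{Cov}[X_1\mid I_t=x] \preccurlyeq \frac{\alpha_t^2}{\kappa\alpha_t^2+\beta_t^2}\eye$; and the final scalar algebra does collapse to $\frac{\kappa\alpha_t\dot\alpha_t+\beta_t\dot\beta_t}{\kappa\alpha_t^2+\beta_t^2}$. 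Note also that $\nabla_x v_t$ is symmetric under your identity (it is a multiple of a covariance plus a multiple of $\eye$), so the semidefinite ordering in the conclusion is well posed.

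Two minor caveats, neither a substantive gap. First, the restated theorem places no hypotheses on the schedule $(\alpha_t,\beta_t)$, and your argument genuinely needs $c_t = \beta_t(\alpha_t\dot\beta_t - \dot\alpha_t\beta_t)/\alpha_t^3 \ge 0$ to propagate the covariance upper bound through the PSD order; you correctly flag this and verify it for the schedule actually used downstream ($\alpha_t = 1-t$, $\beta_t = t$, where $\alpha_t\dot\beta_t - \dot\alpha_t\beta_t \equiv 1$), and it holds more generally whenever $\alpha_t,\beta_t \ge 0$ with $\dot\alpha_t \le 0 \le \dot\beta_t$, which is implicitly assumed in the source. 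Second, your decomposition divides by $\alpha_t$ and so is valid only on $t \in [0,1)$; at $t=1$, where $\alpha_1 = 0$, the stated bound (which is finite there, equal to $\dot\beta_1/\beta_1$ scaled appropriately) should be recovered by continuity. This is harmless for the paper's use, since \Cref{lemma:avg velocity bound} only integrates the resulting operator-norm bound over $s$, and the integrand is integrable on $[0,1]$.
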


With the result, we can bound $\exp\left(\int_{s}^t \|A_{s'}\|_\op \rmd s'\right)$ as follows.
\begin{lemma}
    \label{lemma:avg velocity bound}
    \begin{align}
        b^{\star}_t = \Exp[\dot I_t \mid I_t, \cond], \qquad \text{where } I_t = (1-t) a_0 + t a_1, \quad a_0 \sim \Normal(0,\eye), \quad a_1 \sim \rho_1,
    \end{align}
    where $\rho_1$ is $\kappa$-log-concave. Then, we have
    \begin{align}
        \int_{s}^t\|\nabla_x b^{\star}_{s'}(\gen_{s'} \mid \cond)\|_\op\rmd s' \le \log \sqrt{\frac{\kappa (1-t)^2 + t^{2}}{\kappa (1-s)^2 + s^{2}}}
    \end{align}
\end{lemma}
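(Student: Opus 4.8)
The plan is to reduce the integrated bound to a pointwise spectral estimate on the Jacobian $\nabla_x b^{\star}_t$ and then integrate an exact logarithmic derivative. First I would write the velocity field in closed form. Because $I_t=(1-t)a_0+ta_1$ with $a_0\sim\Normal(0,\eye)$, conditioning on $a_1$ exhibits $I_t$ as a Gaussian smoothing of $ta_1$ at noise level $(1-t)^2$, so Tweedie's identity gives $\Exp[a_1\mid I_t=x]=\tfrac{1}{t}\big(x+(1-t)^2\nabla\log p_t(x)\big)$, where $p_t$ is the law of $I_t$. Substituting into the elementary identity $b^{\star}_t(x)=\tfrac{1}{1-t}\big(\Exp[a_1\mid I_t=x]-x\big)$ (obtained from $\dot I_t=\tfrac{a_1-x}{1-t}$ on the event $I_t=x$) yields $b^{\star}_t(x)=\tfrac{1}{t}x+\tfrac{1-t}{t}\nabla\log p_t(x)$, hence the Jacobian
\begin{align}
    \nabla_x b^{\star}_t(x)=\tfrac{1}{t}\eye+\tfrac{1-t}{t}\nabla^2\log p_t(x),
\end{align}
which is symmetric since it is assembled from a Hessian of a log-density. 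Symmetry is essential: it is what makes the operator norm equal to the largest eigenvalue in magnitude and lets the subsequent spectral bookkeeping control $\|\cdot\|_{\op}$.

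Next I would pin down the spectrum of $\nabla_x b^{\star}_t$ from both sides. For the upper edge I would invoke \Cref{thm:Max Daniel} with $\alpha_t=1-t$ and $\beta_t=t$ (so $\dot\alpha_t=-1$, $\dot\beta_t=1$), which gives $\nabla_x b^{\star}_t\preccurlyeq\lambda_t\eye$ with
\begin{align}
    \lambda_t=\frac{\kappa\alpha_t\dot\alpha_t+\beta_t\dot\beta_t}{\kappa\alpha_t^2+\beta_t^2}=\frac{t-\kappa(1-t)}{\kappa(1-t)^2+t^2}.
\end{align}
Writing $g(u):=\kappa(1-u)^2+u^2$, a one-line computation shows $g'(u)=2\big(u-\kappa(1-u)\big)$, so that $\lambda_u=\tfrac{g'(u)}{2g(u)}=\tfrac{\rmd}{\rmd u}\log\sqrt{g(u)}$; this is exactly the logarithmic derivative that reproduces the target right-hand side upon integration. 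For the lower edge I would use the second-order Tweedie identity $\nabla^2\log p_t=-\tfrac{1}{(1-t)^2}\eye+\tfrac{1}{(1-t)^4}\mathrm{Cov}(ta_1\mid I_t)$ together with positive semidefiniteness of the posterior covariance, giving $\nabla_x b^{\star}_t\succcurlyeq-\tfrac{1}{1-t}\eye$; a Brascamp--Lieb bound on the $\big(\tfrac{\kappa}{t^2}+\tfrac{1}{(1-t)^2}\big)$-strongly-log-concave posterior sharpens the covariance from below and yields a lower eigenvalue $\mu_t$ that, when $\rho_1$ additionally admits an upper Hessian bound, mirrors $\lambda_t$.

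With the symmetric Jacobian sandwiched as $\mu_t\eye\preccurlyeq\nabla_x b^{\star}_t\preccurlyeq\lambda_t\eye$, I would express $\|\nabla_x b^{\star}_t\|_{\op}=\max(|\lambda_t|,|\mu_t|)$ and integrate. On the portion of $[s,t]$ where $\lambda_u\ge 0$ and is the dominant eigenvalue, the integrand is exactly $\lambda_u=\tfrac{\rmd}{\rmd u}\log\sqrt{g(u)}$, so that
\begin{align}
    \int_{s}^{t}\|\nabla_x b^{\star}_u(\gen_u\mid\cond)\|_{\op}\,\rmd u=\log\sqrt{\frac{\kappa(1-t)^2+t^2}{\kappa(1-s)^2+s^2}},
\end{align}
which is the claimed bound.

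The main obstacle is the sign of $\lambda_t$. Since $g$ attains its minimum at $u^{\star}=\kappa/(1+\kappa)$, one has $\lambda_t<0$ for $t<u^{\star}$ and $\lambda_t>0$ for $t>u^{\star}$; consequently, on any interval straddling $u^{\star}$ the nonnegative operator-norm integrand $\|\nabla_x b^{\star}_u\|_{\op}\ge|\lambda_u|$ integrates to strictly more than the signed quantity $\int_s^t\lambda_u\,\rmd u=\log\sqrt{g(t)/g(s)}$, and near $t=1$ the lower eigenvalue $\mu_t$ can be as negative as $-\tfrac{1}{1-t}$ (attained in the near-Dirac limit), making the operator-norm integral diverge while the right-hand side stays finite. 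The way I would close this gap is to observe that every downstream use of this estimate---specifically the Gr\"onwall step of \Cref{lemma: Lambda bound}, where one only ever bounds $f^{\top}(\nabla_x b^{\star}_t)f\le\lambda_{\max}(\nabla_x b^{\star}_t)\|f\|^2=\lambda_t\|f\|^2$---requires only the leading eigenvalue, not the full operator norm. Thus the inequality is to be carried with $\|\nabla_x b^{\star}_u\|_{\op}$ understood as the one-sided spectral bound $\lambda_{\max}(\nabla_x b^{\star}_u)=\lambda_u$; under this reading the displayed logarithmic identity holds verbatim and feeds \Cref{thm:lipschitz} exactly as needed, whereas the literal two-sided operator norm holds only on the branch $s\ge u^{\star}$ where $\lambda_t$ dominates.
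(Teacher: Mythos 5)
Your core route is the same as the paper's: invoke \Cref{thm:Max Daniel} with $\alpha_t = 1-t$, $\beta_t = t$, recognize the resulting bound $\lambda_u = \frac{u - \kappa(1-u)}{\kappa(1-u)^2 + u^2}$ as the logarithmic derivative $\frac{\rmd}{\rmd u}\log\sqrt{\kappa(1-u)^2+u^2}$, and integrate. The Tweedie closed form and the symmetry of $\nabla_x b^{\star}_t$ are extra scaffolding the paper does not spell out, but they are correct and harmless.

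Where you genuinely go beyond the paper is in flagging the operator-norm issue, and you are right to do so: the paper's own proof passes directly from the one-sided semidefinite bound $\nabla_\gen b^{\star}_{s'} \preccurlyeq \lambda_{s'}\eye$ to $\|\nabla_\gen b^{\star}_{s'}\|_\op \le \lambda_{s'}$, which is not a valid implication --- a semidefinite upper bound says nothing about the most negative eigenvalue, and $\lambda_{s'}$ is itself negative for $s' < \kappa/(1+\kappa)$, so the asserted inequality would bound a nonnegative quantity by a negative one. Your Gaussian-type reasoning makes this concrete: for $\rho_1 = \Normal(\mu, \kappa^{-1}\eye)$ the bound of \Cref{thm:Max Daniel} is attained with equality, $\nabla_x b^{\star}_u = \lambda_u \eye$, so on any interval $[s,t] \subseteq [0, \kappa/(1+\kappa)]$ the left side of the lemma equals $\log\sqrt{g(s)/g(t)} > 0$ while the stated right side is negative. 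Your repair is also the correct one and matches how the estimate is actually consumed: the Gr\"onwall step in \Cref{lemma: Lambda bound} only uses the quadratic form $f^{\top} A_t f \le \lambda_{\max}(A_t)\|f\|_2^2$, so reading the lemma as a signed bound on $\int_s^t \lambda_{\max}(\nabla_x b^{\star}_{s'})\,\rmd s'$ (rather than on the two-sided operator norm) preserves $\|\Lambda_{s,t}\|_\op \le \sqrt{g(t)/g(s)}$ and hence \Cref{thm:lipschitz} verbatim, including the final $\sqrt{1+\kappa^{-1}}$ constant. In short: your proof is correct under the one-sided spectral reading, it follows the paper's approach, and it identifies and fixes a real gap in the paper's statement and proof of this lemma. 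The one caveat is presentational: rather than silently reinterpreting $\|\cdot\|_\op$, the clean fix is to restate the lemma (and the intermediate inequality in \Cref{lemma: Lambda bound}) in terms of $\lambda_{\max}$, since the literal operator-norm version is false.
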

\begin{proof}
    By leveraging \Cref{thm:Max Daniel} for each condition $\cond$, we have
    \begin{align}
        \nabla_\gen b^{\star}_{s'}(\gen_{s'} \mid \cond) \preccurlyeq \frac{\kappa \alpha_{s'} \dot{\alpha_{s'}} + \beta_{s'} \dot{\beta_{s'}}}{\kappa \alpha_{s'}^2 + \beta_{s'}^{2}} \eye,
    \end{align}
    then we have
    \begin{align}
        \|\nabla_\gen b^{\star}_{s'}(\gen_{s'} \mid \cond)\|_\op \le \frac{\kappa \alpha_{s'} \dot{\alpha_{s'}} + \beta_{s'} \dot{\beta_{s'}}}{\kappa \alpha_{s'}^2 + \beta_{s'}^{2}}.
    \end{align}
    Integrating both sides, we obtain
    \begin{align}
        \int_{s}^t\|\nabla_\gen b^{\star}_{s'}(\gen_{s'} \mid \cond)\|_\op\rmd s' & \le \int_{s}^t \frac{\kappa \alpha_{s'} \dot{\alpha_{s'}} + \beta_{s'} \dot{\beta_s'}}{\kappa \alpha_{s'}^2 + \beta_{s'}^{2}} \rmd s' \\
                                                                                  & = \frac{1}{2} \log (\kappa \alpha_{s'}^2 + \beta_{s'}^{2})\Big|_{s}^t                                                                 \\
                                                                                  & = \log \sqrt{\frac{\kappa \alpha_{t}^2 + \beta_{t}^{2}}{\kappa \alpha_{s}^2 + \beta_{s}^{2}}}.
    \end{align}
    By substitute $\alpha_t = 1-t$ and $\beta_t = t$, we have
    \begin{align}
        \int_{s}^t\|\nabla_\gen b^{\star}_{s'}(\gen_{s'} \mid \cond)\|_\op\rmd s' \le \log \sqrt{\frac{\kappa (1-t)^2 + t^{2}}{\kappa (1-s)^2 + s^{2}}}.
    \end{align}
\end{proof}

With the preceding components in place, we now establish \Cref{thm:lipschitz}.

\subsection{Proof of Theorem \ref{thm:lipschitz}}
By combining \Cref{eq:M_t bound}, \Cref{lemma: Lambda bound}, and \Cref{lemma:avg velocity bound}, we have
\begin{align}
    \|\nabla_\cond\Phi_{0,t}(\gen_0, \cond)\|_{\circ} \le \int_0^t \sqrt{\frac{\kappa (1-t)^2 + t^{2}}{\kappa (1-s)^2 + s^{2}}} \cdot \|\nabla_\cond b^{\star}_{s}(\gen_s \mid \cond)\|_{\circ} \rmd s.
\end{align}
For $t = 1$, the function $s \mapsto \kappa(1-s)^2 + s^2$ attains its minimum at $s = \tfrac{\kappa}{\kappa+1}$. Applying Holder’s inequality then yields
\begin{align}
    \|\nabla_\cond\Phi_{0,1}(\gen_0, \cond)\|_{\circ} & \le \int_0^1 \sqrt{\frac{1}{\kappa (1-s)^2 + s^{2}}} \cdot \|\nabla_\cond b^{\star}_{s}(\gen_s \mid \cond)\|_{\circ} \rmd s                                 \\
                                                      & \le \max_{s\in [0, 1]} \left(\sqrt{\frac{1}{\kappa (1-s)^2 + s^{2}}}\right) \cdot \int_0^1 \|\nabla_\cond b^{\star}_{s}(\gen_s \mid \cond)\|_{\circ} \rmd s \\
                                                      & = \sqrt{1 + \kappa^{-1}} \int_0^1 \|\nabla_\cond b^{\star}_{s}(\gen_s \mid \cond)\|_{\circ} \rmd s.
\end{align}

\section{Regularization does not account for manifold adherence}
\label{app:reg}

In this section, we analyze a linear, population-level surrogate for \Minimaliterativepolicy to test whether \emph{implicit regularization}, instantiated via ridge regression and a two-step \Minimaliterativepolicy-like iteration, can explain the observed manifold adherence.
We mimic the two passes of \Minimaliterativepolicy with two ridge-regularized linear regressions: (i) a regression where the ridge penalty is applied to the observation-to-action map, and (ii) a regression where the ridge penalty is applied to the action-to-action map.
We then compose the two fitted maps to obtain the two-stage inference used by \Minimaliterativepolicy.
As shown below, it instead yields smooth spectral shrinkage and does not make the manifold absorbing.

Throughout we work in expectation (population covariances), so that conclusions reflect model structure rather than finite-sample effects. We assume independence of $o$, $z$, and the additive noise $\boldsymbol{\eta}$, and that the inverses we write exist (otherwise interpret as pseudoinverses on the relevant supports).

\noindent \textbf{Setup.} Observations $o\in\R^d$ and actions $a\in\R^d$ follow the linear model
\begin{align*}
    a = \Theta^* o + \boldsymbol{\eta}, \qquad \boldsymbol{\eta} \sim \Normal(0,\Sigma_\eta = \eta^2 \eye),
\end{align*}
with $\boldsymbol{\eta} \perp o$.
Let $z\sim \Normal(0,\Sigma_z)$ be an auxiliary signal, independent of $(o,a,\eta)$, and define $w := c_1 a + c_2 z$.
We consider linear predictors $\hat a = Bo + Cw$ with ridge regularization applied either to $B$ (\Cref{sec:ridge o to a}) or to $C$ (\Cref{sec:ridge a to a}). This reflects the use of both the observation $o$, and the action $a$, in prediction. 

\subsection{Ridge regression for observation-to-action mapping (penalty on \texorpdfstring{$B$}{B})}
\label{sec:ridge o to a}
We solve
\begin{align*}
    \min_{B, C} \Exp \big\|Bo + Cw - a\big\|^2 + \lambda \|B\|_F^2.
\end{align*}
Define
\begin{align*}
    X := \begin{bmatrix}
        o \\
        w
    \end{bmatrix}, \qquad
    \Psi := \begin{bmatrix}
        B &  C
    \end{bmatrix},
\end{align*}
and, at the population level, the second-moment blocks
\begin{align*}
    \Sigma_{11} &:= \Exp[o^{\otimes 2}] = \Sigma_o, \\
    \Sigma_{12} &:= \Exp[ow^{\top}] = \Exp[o(c_1 a + c_2 z)^{\top}] = c_1 \Sigma_o \Theta^{*\top}, \\
    \Sigma_{21} &:= \Sigma_{12}^{\top}, \\
    \Sigma_{22} &:= \Exp[w^{\otimes 2}] = c_1^2(\Theta^* \Sigma_o \Theta^{*\top} + \Sigma_\eta) + c_2^2 \Sigma_z. \\ 
    \Sigma_{a1} &:= \Exp[ao^{\top}] = \Theta^* \Sigma_o, \\
    \Sigma_{a2} &:= \Exp[aw^{\top}] = c_1(\Theta^*\Sigma_o \Theta^{*\top} + \Sigma_{\eta}),
\end{align*}
where, for any vector $x$, we write $x^{\otimes 2} := x x^\top$.

The objective can be written in trace form as
\begin{align*}
    \mathcal{L} &= \Exp[(a - \Psi X)^{\top}(a - \Psi X)] + \lambda \|B\|_F^2 \\
    &= \text{tr}(\Exp[(a - \Psi X)(a - \Psi X)^{\top}]) + \lambda \text{tr}(BB^{\top})
\end{align*}
Dropping terms that are constant in $(B, C)$, let
\begin{align*}
    \Sigma_{X} := \Exp[XX^{\top}] = \begin{bmatrix}
        \Sigma_{11} & \Sigma_{12} \\
        \Sigma_{21} & \Sigma_{22}
    \end{bmatrix},
    \qquad \Sigma_{aX} := \Exp[aX^{\top}] = \begin{bmatrix}
        \Sigma_{a1} & \Sigma_{a2}
    \end{bmatrix}.
\end{align*}
Then
\begin{align*}
    \mathcal{L} = -2 \text{ tr}(\Psi\Sigma_{aX}^{\top}) + \text{tr}(\Psi \Sigma_{X} \Psi^{\top}) + \lambda \text{ tr}(BB^{\top}).
\end{align*}
Differentiating gives
\begin{align*}
    \nabla_B \mathcal{L} &= -2 \Sigma_{a1} + 2B \Sigma_{11} + 2C \Sigma_{21} + \lambda 2B, \\
    \nabla_C \mathcal{L} &= -2 \Sigma_{a2} + 2B \Sigma_{12} + 2C \Sigma_{22}.
\end{align*}
Setting the gradients to zero yields the normal equations:
\begin{align*}
    B(\Sigma_{11} + \lambda \eye) &+ C \Sigma_{21} = \Sigma_{a1} \\
    B\Sigma_{12}  &+ C \Sigma_{22} = \Sigma_{a2}.
\end{align*}

Solving the linear system (e.g., by block elimination) yields
\begin{align*}
    B &= \underbrace{(\Sigma_{a1} \Sigma_{21}^{-1}\Sigma_{22} - \Sigma_{a2})}_{\text{(i)}} \underbrace{[(\Sigma_{11} + \lambda \eye)\Sigma_{21}^{-1}\Sigma_{22} - \Sigma_{12}]^{-1}}_{\text{(ii)}}, \\
    C &= [\Sigma_{a1} - B(\Sigma_{11} + \lambda \eye)] \Sigma_{21}^{-1}.
\end{align*}
Using $\Sigma_{21} = c_1 \Theta^* \Sigma_o$, we have $\Sigma_{21}^{-1} = \frac{1}{c_1}(\Theta^{*}\Sigma_o)^{-1}$. For (i),
\begin{align*}
    \text{(i)} &= \Sigma_{a1} \Sigma_{21}^{-1}\Sigma_{22} - \Sigma_{a2} \\
    &= \Theta^* \Sigma_o \frac{1}{c_1}(\Theta^{*}\Sigma_o)^{-1} [c_1^2(\Theta^* \Sigma_o \Theta^{*\top} + \Sigma_\eta) + c_2^2 \Sigma_z] - c_1(\Theta^*\Sigma_o \Theta^{*\top} + \Sigma_{\eta}) \\
    &= c_1 (\Theta^* \Sigma_o \Theta^{*\top} + \Sigma_\eta) + \frac{c_2^2}{c_1} \Sigma_z - c_1(\Theta^*\Sigma_o \Theta^{*\top} + \Sigma_{\eta}) \\
    &=  \frac{c_2^2}{c_1} \Sigma_z.
\end{align*}
For (ii),
\begin{align*}
    \text{(ii)} &= [(\Sigma_{11} + \lambda \eye)\Sigma_{21}^{-1}\Sigma_{22} - \Sigma_{12}]^{-1} \\
    &= [(\Sigma_{o} + \lambda \eye)\frac{1}{c_1}(\Theta^{*}\Sigma_o)^{-1}[c_1^2(\Theta^* \Sigma_o \Theta^{*\top} + \Sigma_\eta) + c_2^2 \Sigma_z] - c_1 \Sigma_o (\Theta^*)^{\top}]^{-1} \\
    &= [(\Sigma_{o} + \lambda \eye)[c_1 \Theta^{*\top} + c_1(\Theta^{*}\Sigma_o)^{-1}\Sigma_\eta + \frac{c_2^2}{c_1} (\Theta^{*}\Sigma_o)^{-1}\Sigma_z] - c_1 \Sigma_o (\Theta^*)^{\top}]^{-1} \\
    &= [\lambda c_1 (\Theta^*)^{\top} + (\Sigma_{o} + \lambda \eye)(c_1(\Theta^{*}\Sigma_o)^{-1}\Sigma_\eta + \frac{c_2^2}{c_1} (\Theta^{*}\Sigma_o)^{-1}\Sigma_z)]^{-1}
\end{align*}

\noindent \iclrpar{Isotropic specialization.} Take $\Sigma_o=\eye$, $\Sigma_\eta=\eta^2\eye$, $\Sigma_z=\eye$, and $\Theta^*=\text{diag}(s_i)$. Then
\begin{align*}
    \text{(i)} &= \frac{c_2^2}{c_1} \eye, \\
    \text{(ii)} &= [\lambda c_1 (\Theta^*)^{\top} + (1 + \lambda) c_1 \eta^2(\Theta^{*})^{-1} + (1 + \lambda) \frac{c_2^2}{c_1} (\Theta^{*})^{-1}]^{-1} \\
    &= \text{diag}\left([\lambda c_1 s_i + \frac{(1+\lambda)(c_1\eta^2 + \frac{c_2^2}{c_1})}{s_i}]^{-1}\right) \\
    &= \text{diag}\left(\frac{c_1 s_i}{\lambda c_1^2 s_i^2 + (1 + \lambda)(c_1^2 \eta^2 + c_2^2)}\right)
\end{align*}
Hence, we obtain
\begin{align*}
    B = \text{diag}\left(\frac{c_2^2 s_i}{\lambda c_1^2 s_i^2 + (1 + \lambda)(c_1^2 \eta^2 + c_2^2)}\right), \quad C = \text{diag}\left(\frac{1}{c_1}\left(1 - \frac{(1+\lambda)c_2^2}{\lambda c_1^2 s_i^2 + (1 + \lambda)(c_1^2 \eta^2 + c_2^2)}\right)\right).
\end{align*}
Define the shrinkage factor for the $i$-th singular direction by $\rho_i := B_{ii}/s_i$. Then,
\begin{align}
\label{eq:shrink factor}
    \rho_i = \frac{c_2^2}{\lambda c_1^2 s_i^2 + (1 + \lambda)(c_1^2 \eta^2 + c_2^2)}.
\end{align}

\noindent\iclrpar{Key implications:}
\begin{itemize}[leftmargin=*]
    \item \textbf{Ridge on $B$ ($\lambda > 0$) makes shrinkage $s_i$-dependent.} From \eqref{eq:shrink factor}, the factor decreases with $s_i$ (because the denominator has $\lambda c_1^2 s_i^2$). So \textbf{larger} singular directions are shrunk \textbf{more} when $\lambda > 0$.
    \item \textbf{If no ridge ($\lambda = 0$).} $B = \text{diag}(\frac{c_2^2 s_i}{c_1^2 \eta^2 + c_2^2})$: shrinkage is constant across $i$.
    \item \textbf{If no ridge and no noise ($\lambda = 0, \eta = 0$).} $B = \Theta^*$---no shrinkage (recovers the standard solution).
    \item \textbf{If no auxiliary $z$-signal ($c_2 = 0$).} $B = 0$.
\end{itemize}

\subsection{Ridge regression for action-to-action mapping (penalty on \texorpdfstring{$C$}{C})}
\label{sec:ridge a to a}

We now solve
\begin{align*}
\min_{B,C} ; \Exp\big\| Bo + Cw - a \big\|^2 + \lambda\|C\|_F^2.
\end{align*}
The normal equations become
\begin{align*}
    B\Sigma_{11} + C \Sigma_{21} &= \Sigma_{a1} \\
    B\Sigma_{12}  + C (\Sigma_{22} + \lambda \eye) &= \Sigma_{a2}.
\end{align*}

Solving gives the closed-form estimators:
\begin{align*}
    B &= (\Sigma_{a1} - C\Sigma_{21}) \Sigma_{11}^{-1}, \\
    C &= \underbrace{(\Sigma_{a2} - \Sigma_{a1}\Sigma_{11}^{-1}\Sigma_{12})}_{\text{(i)}}\underbrace{(\Sigma_{22} + \lambda \eye - \Sigma_{21}\Sigma_{11}^{-1}\Sigma_{12})^{-1}}_{\text{(ii)}}.
\end{align*}

In the isotropic specialization $\Sigma_o=\eye$, $\Sigma\eta=\eta^2\eye$, $\Sigma_z=\eye$, $\Theta^*=\text{diag}(s_i)$, one obtains
\begin{align*}
    B = \text{diag}\left(\frac{(c_2^2 + \lambda)s_i}{c_1^2 \eta^2 + c_2^2 + \lambda}\right), \qquad C = \frac{c_1\eta^2}{c_1^2 \eta^2 + c_2^2 + \lambda} \eye.
\end{align*}

\subsection{Two-pass linear surrogate of \Minimaliterativepolicy}
\label{sec:mip-composition}

Let $(B_1,C_1)$ denote the solution of \Cref{sec:ridge o to a} with ridge $\lambda_1$ on $B$, and $(B_2,C_2)$ denote the solution of \S\ref{sec:ridge a to a} with ridge $\lambda_2$ on $C$.
To mimic the \Minimaliterativepolicy two-pass inference rule in \eqref{eq:pimip_inference}, we consider
\begin{align*}
\hat a_0 \gets B_1 o,\qquad \hat a \gets B_2 o + c_1 C_2\hat a_0.
\end{align*}
We obtain
\begin{align*}
    \hat a \gets (\underbrace{B_2 + c_1 C_2B_1}_{=:\hat{\Phi}})o.
\end{align*}
Note that $c_1$ serves as the analogue of $t_{\star}$ from the main text. From \Cref{sec:ridge o to a,sec:ridge a to a} we then obtain
\begin{align}
    \hat{\Phi} &= (B_2 + c_1 C_2B_1) \nonumber \\
    &= \text{diag}\left(\frac{(c_2^2 + \lambda_2)s_i}{c_1^2 \eta^2 + c_2^2 + \lambda_2} + c_1 \frac{c_1\eta^2}{c_1^2 \eta^2 + c_2^2 + \lambda_2} \frac{c_2^2 s_i}{\lambda_1 c_1^2 s_i^2 + (1 + \lambda_1)(c_1^2 \eta^2 + c_2^2)}\right) \nonumber \\
    &= \text{diag}\left(\frac{s_i}{c_1^2 \eta^2 + c_2^2 + \lambda_2} \left[c_2^2 + \lambda_2 + \frac{c_1^2\eta^2c_2^2}{\lambda_1 c_1^2 s_i^2 + (1 + \lambda_1)(c_1^2 \eta^2 + c_2^2)}\right]\right). \label{eq:singular value of Phi}
\end{align}
Moreover, the shrink factor will be 
\begin{align}
\label{eq:phi shrink}
    \frac{\hat\Phi_{ii}}{s_i} = \frac{1}{c_1^2 \eta^2 + c_2^2 + \lambda_2} \left[c_2^2 + \lambda_2 + \frac{c_1^2\eta^2c_2^2}{\lambda_1 c_1^2 s_i^2 + (1 + \lambda_1)(c_1^2 \eta^2 + c_2^2)}\right].
\end{align}

\noindent \iclrpar{Key implications:}
\begin{itemize}[leftmargin=*]
    \item \textbf{If no ridge on B ($\lambda_1 = 0$).} No $s_i$-dependent shrinkage.
    \item \textbf{If no noise ($\eta = 0$).} Just same as the regular case: $\hat{\Phi} = \Theta^*$.
    \item \textbf{Signal-to-noise effect.} The quantity in \eqref{eq:singular value of Phi} rises with $s_i$ and falls with $\eta$, mildly favoring signal over noise by damping noisy directions.
\end{itemize}

\noindent \iclrpar{Why this composition is a plausible proxy.} The first stage applies a ridge penalty to the observation-to-action parameters $B$ and predicts an interpolant action from observations alone, as in the $t = 0, z = 0$ pass of \Minimaliterativepolicy.
We use ridge here as a canonical proxy for implicit regularization in the linear setting.
The first stage applies a ridge penalty to the action-to-action parameters $C$. It takes the interpolant action input (near $t = 1$) together with $o$ and produces the final output. Composing the two yields the operator $\hat\Phi$ in \eqref{eq:singular value of Phi}, which is the linear analogue of the two-pass prediction of \eqref{eq:pimip_inference}.

\noindent\iclrpar{Why shrinkage does not yield manifold adherence.}
The operator in \eqref{eq:singular value of Phi} acts as a spectral shrinker: because the factors in \eqref{eq:phi shrink} decrease with $s_i$ (for $\lambda_1 > 0$), it attenuates the dominant directions more than the weak ones---contrary to a projection onto a manifold, which would preserve principal directions and damp small and noisy modes.
Since these factors lie in $(0, 1]$ and vary smoothly with $s_i$, $\eta$, and $\lambda_1,\lambda_2$, the map lacks any projection-like behavior: once a point is off-manifold, it is neither returned to nor retained on any low-dimensional subspace.
Thus, implicit regularization alone, even with the two-pass composition of \Minimaliterativepolicy, cannot account for the observed manifold adherence.

\section{Toy experiments: Testing the function approximation capabilities of regression and flow models}
\label{app:toy}

\subsection{Overview}

This appendix summarizes an empirical comparison of training paradigms (regression, flow matching, straight flow, MIP) for function approximation with geometric constraints across three tasks: scalar reconstruction, high-dimensional projection with subspace constraints, and Lie algebra rotations. Experiments operate in low-data regimes (50 training samples) using concatenation and FiLM architectures, with results averaged across multiple random seeds.

\subsection{Evaluation Metrics}

\textbf{Reconstruction:} L1 and L2 errors measure point-wise approximation quality between predictions $\hat{f}(c)$ and targets $f(c)$.

\textbf{Projection:} Three metrics assess geometric constraints in piecewise-constant projection structure: \textit{subspace diagonal} quantifies predictions outside correct subspace $P_i$ for interval $i$, \textit{off-diagonal} tests cross-interval generalization with mismatched projections, and \textit{boundary} measures smoothness at interval transitions using combined adjacent subspace projections. All metrics use normalized form $\|(I - P)\hat{f}\| / \|\hat{f}\|$.

\textbf{Lie Algebra:} \textit{Cosine similarity} measures angular alignment between predicted and true rotation directions. \textit{Projection metric} quantifies normalized perpendicular error relative to the rotation axis span.

\subsection{Key Findings}

\textbf{Task-Dependent Performance:} Regression-based approaches achieve lowest L2 reconstruction error (0.003197 ± 0.000525 with L2 loss and FiLM), consistently outperforming flow-based methods on point-wise approximation tasks.

\textbf{Flow Methods Excel at Projections:} Flow-based training demonstrates superior geometric constraint satisfaction. Straight flow (flow matching without time conditioning) achieves best boundary projection (0.009769 ± 0.001630) and Lie algebra projection metrics (0.063612 ± 0.000952), indicating beneficial geometric biases from learning probability transport.

\textbf{MIP Competitive Performance:} MIP combines direct regression with denoising regularization, achieving near-optimal reconstruction while maintaining reasonable geometric constraint satisfaction across tasks.

\subsection{Training Loss Considerations}

Results focus on L2-trained models, providing mathematically grounded objectives for both regression and flow paradigms. While alternative loss functions were evaluated empirically, flow-based L1 training lacks principled derivation as conditional flow matching is naturally defined for squared error.

\subsection{Architectural Observations}

Both concatenation and FiLM architectures demonstrated competitive performance with no consistent dominance. FiLM showed marginal advantages on certain geometric metrics for flow-based methods, suggesting affine feature modulation may better capture conditional dependencies in probability transport.

\subsection{Implications for Method Selection}

\begin{itemize}[itemsep=0pt,parsep=2pt,topsep=2pt]
    \item Tasks prioritizing point-wise reconstruction: regression-based training with L2 loss offers superior accuracy and computational efficiency.
    \item Tasks requiring geometric constraint satisfaction: flow-based training provides significant advantages despite increased evaluation cost.
    \item Straight flow's success suggests time conditioning may be unnecessary, enabling simpler models with competitive performance.
\end{itemize}

\subsection{Experimental Details}

Study encompasses 540 runs: 5 modes (regression, flow, straight flow, MIP, MIP one-step) × 2 losses × 2 architectures × 3 tasks × 3 seeds. Configuration: 256 hidden dimensions, 3 layers, ReLU, batch size 32, 50k epochs, Adam with lr=0.001. Evaluation: 100k test samples; flow methods use Euler integration with 9 ODE steps.

\vspace{0.3em}
\noindent\textbf{Full Report:} \url{https://github.com/simchowitzlabpublic/much-ado-about-noising/blob/main/toyexp/toyexp/report/report.pdf} \quad \textbf{Code:} \url{https://github.com/simchowitzlabpublic/much-ado-about-noising/tree/main/toyexp/toyexp}

\section{Appendix for Section \ref{sec:prelim}}
\label{app:prelim}

\subsection{Markov Decision Processes Configuration}
\label{app:MDP}
We consider a Markov Decision Process $\mathcal{M} = (\mathcal{S}, \mathcal{A}, R, P, P_0)$\footnote{For simplicity, we consider the MDP case in this context by identifying the state with the observation defined in \ref{sec:prelim}. More generally, one may consider a Partially Observable Markov Decision Process (POMDP), where the agent receives observation $o$ emitted by an underlying latent state $s$.} with the state space $\mathcal{S}$, the action space $\mathcal{A}$, the reward $R(s, a)$\footnote{For ease of exposition, we use the same notation for rewards defined on random variables and their distributions.} obtained by taking action $a$ in state $s$, the transition dynamics $P:\mathcal{S} \times \mathcal{A} \to \Delta(\mathcal{S})$, and the initial-state distribution $P_0 \in \Delta(\mathcal{S})$ . To formulate the success rate (i.e., performance) in this setting, we define the reward function as:
\begin{align}
    R(s, a) = \begin{cases}
                  1, & \text{if the task is successful under $(s,a)$}, \\
                  0, & \text{otherwise}.
              \end{cases}
\end{align}
Under this definition of rewards, the expected return of a policy $\pi$ is $J(\pi) = \mathbb{E}[\sum_t R(s_t, a_t)]$, which reduces to $\Pr[\text{success under }\pi]$. Hence, $J(\pi)$ exactly equals the success rate of policy $\pi$.

\subsection{Integrated Flow Prediction}
\label{app:euler_integration}

For completeness, we provide the flow ODE as
\begin{align}
    \iftoggle{iclr}{\textstyle}{}   \ddt a_t = b_t(a_t \mid o) \qquad \text{starting from} \qquad a_0=z.
\end{align}
The associated integrated flow prediction is given by
\begin{align}
    \Phi_{\theta}(z \mid \cond) = z + \int_{0}^{1} b_t(a_t \mid o) \rmd t.
\end{align}
In practice, to approximate the ODE solution for sampling, we employ the following discretized Euler integration.
\begin{definition}[Discretized Euler Integration]
    We discretize the time interval $[0, 1]$ to $N$ steps with step size $h = 1 / N$. The iterates are then updated according to
    \begin{align}
        a_{k+1} = a_{k} + h\, b_{hk}(a_{k} \mid \cond), \quad k = 0, 1, \dots, N-1.
    \end{align}
    The final iterate $a_N$ serves as the Euler approximation $\Phieul(z \mid o)$. We also refer to $N$ as the \emph{Number of Function Evaluations (NFEs)}.
\end{definition}

\end{document}